\documentclass[oneside,11pt]{article}
\usepackage{caption}
\usepackage{blindtext}

\usepackage[abbrvbib, preprint]{jcustom}

\usepackage{lastpage}

\usepackage[utf8]{inputenc} 
\usepackage[T1]{fontenc}           
\usepackage{booktabs} 
\usepackage{hyperref}

\hypersetup{
    colorlinks=true,
    allcolors=blue
}

\usepackage{amsfonts,amsmath,amsthm,amssymb}       
\usepackage{nicefrac}       
\usepackage{microtype}      
\usepackage{xcolor}         
\usepackage{pifont}
\usepackage{tikz}
\usepackage{subcaption}

\usetikzlibrary{arrows.meta}
\usepackage{wrapfig}
\usepackage[presets={vec-cev}]{letterswitharrows}
\usepackage{cleveref}
\AddToHook{cmd/appendix/before}{
    \crefalias{section}{appendix}%
    \crefalias{subsection}{appendix}
}
\usepackage{graphicx}

\newtheorem{proposition}{Proposition}
\newtheorem{lemma}{Lemma}
\theoremstyle{remark}
\newtheorem{remark}{Remark}
\newtheorem{example}{Example}

\DeclareMathOperator{\Tr}{Tr}

\definecolor{src}{RGB}{76,114,176}
\definecolor{tgt}{RGB}{221,132,82}

\tikzset{
  traj/.style={black!75, line width=0.9pt, -{Latex[length=1.8mm]}},
  noisytraj/.style={black!70, line width=0.8pt, -{Latex[length=1.6mm]}, opacity=0.75},
  margsrc/.style={fill=src!12, draw=src!55, line width=0.5pt},
  margtgt/.style={fill=tgt!12, draw=tgt!55, line width=0.5pt},
  sp/.style={fill=src, draw=src},
  tp/.style={fill=tgt, draw=tgt},
}

\title{First-Order Trajectory Matching: Fast Ensemble Predictions of Chaotic, Turbulent, Stochastic Systems}

\author{%
  \name Shreya Jha \\
  \addr Courant Institute of Mathematical Sciences, New York University\\
  \AND
  Timo Schorlepp\\
   \addr Courant Institute of Mathematical Sciences, New York University\\
   \AND
  Nicholas Geissler\\
   \addr Courant Institute of Mathematical Sciences, New York University\\
   \AND
  Jules Berman\\
   \addr Courant Institute of Mathematical Sciences, New York University\\
   \AND
  Benjamin Peherstorfer\\
   \addr Courant Institute of Mathematical Sciences, New York University
}

\begin{document}

\maketitle
\begin{abstract}
We introduce First-Order Trajectory Matching (FTM), a surrogate-modeling method that learns the first-order local transport of probability mass from trajectories of stochastic systems. By matching the symmetric first-order motion of trajectories, FTM learns the probability current velocity, whose flow preserves time marginals to match ensemble averages, while also capturing current-like trajectory quantities such as fluxes, circulations, and barrier-crossing currents.  FTM learns the current velocity directly from trajectories, avoiding drift, diffusion, and score estimation. Our stability analysis separates discretization error from sampling variance and shows that the one-step simulation-free FTM loss is stable when temporal resolution and sample size are properly balanced.  Across stochastic dynamical systems and PDE examples, we empirically demonstrate that FTM provides trajectory-aware ensemble predictions at low, deterministic-rollout cost. 
\end{abstract}

\section{Introduction}\label{sec:Intro}
\paragraph{Overview: Fast ensemble predictions of physical systems}
Many dynamical systems in scientific applications are stochastic, or effectively so, because of chaos, coarse graining, partial observation, or unresolved degrees of freedom. Numerical high-fidelity simulators for such systems are often available, but repeated traditional numerical simulation is too costly for outer-loop tasks such as design, inverse problems, control, and uncertainty quantification. This motivates fast surrogate models trained from trajectory data. In the setting of (effectively) stochastic systems, however, the surrogate should not merely predict a single trajectory; it should rapidly propagate ensembles of trajectories and estimate their statistics, which describe the range of possible outcomes \cite{annurev:/content/journals/10.1146/annurev-fluid-010719-060214,Karniadakis2021,sanderse2025closureSciML}.

\paragraph{Operator learning and best forecasting predictions that collapse to the conditional mean}
Neural operators and related surrogate models \cite{Lu2021,li2021fourier} 
are typically trained on state-transition data with a mean-squared error loss, which means that the optimal operator-learning map is the conditional expectation of the next state given the current one. This gives a fast model, but the resulting map is a point predictor. As a consequence, intrinsic variability is averaged out so that multiple possible future states from the same observed current state are mapped to a single mean outcome; see \Cref{fig:overview}. Thus, such models fail to represent the transport of probability mass that determines ensemble statistics of stochastic systems 
\cite{jiang2023training,pmlr-v235-chen24n,molinaro2025generativeaifastaccurate}.

\paragraph{Population dynamics match the evolution of time marginals only} Population-dynamics methods infer a flow that matches the evolution of time marginals, which ensures that the right amount of probability mass is present in the right regions of the state space at each time. Examples include action matching 
\cite{neklyudov2023action,berman2024parametric},
the discrete inverse continuity equation method 
\cite{blickhan-berman-stuart-etal:2025}, and 
JKO/Wasserstein-gradient-flow and optimal transport-based methods
\cite{tong2020trajectorynet,bunne2022proximal,terpin2024learning}. Matching time marginals is sufficient to approximate ensemble averages but it is insufficient to determine ensemble dynamics, because it does not specify how mass flows over time. For example, a statistically stationary nonequilibrium system may have fixed time marginals while probability mass circulates persistently, which is not captured by matching time marginals alone; see \Cref{fig:overview}.

\paragraph{Conditional diffusion- and flow-based models and their inference costs}
An alternative is to model the full conditional transition law using conditional diffusion- and flow-based generative models \cite{song-sohl-dickstein-kingma-etal:2021,lipman2023flow,albergo-vanden-eijnden:2022,pmlr-v235-chen24n,KOHL2026108641}. Such models represent a distribution over future states rather than a single deterministic prediction, and therefore can capture  stochastic variability and martingale-dominated quantities. This expressivity comes at a computational cost, typically requiring multiple neural-network evaluations or numerical solver steps to generate each next state. In autoregressive rollouts, these costs are incurred at every physical time step. Recent approaches such as consistency,  mean-flow, and other distillation models aim to reduce costs; however, these methods have been developed primarily for static generative modeling and in practice often still trade off sample quality against the number of inference steps \cite{song2023consistency,zhou2025inductivemomentmatching,geng2025mean,boffi2025how,deng2026generativemodelingdrifting}. Furthermore, representing the underlying dynamic measure transport with a robust one-step map is intrinsically limited 
\cite{tsimpos2026oneshotgenerativeflowsexistence}.

\begin{figure}
\centering
\includegraphics[width=1.0\columnwidth]{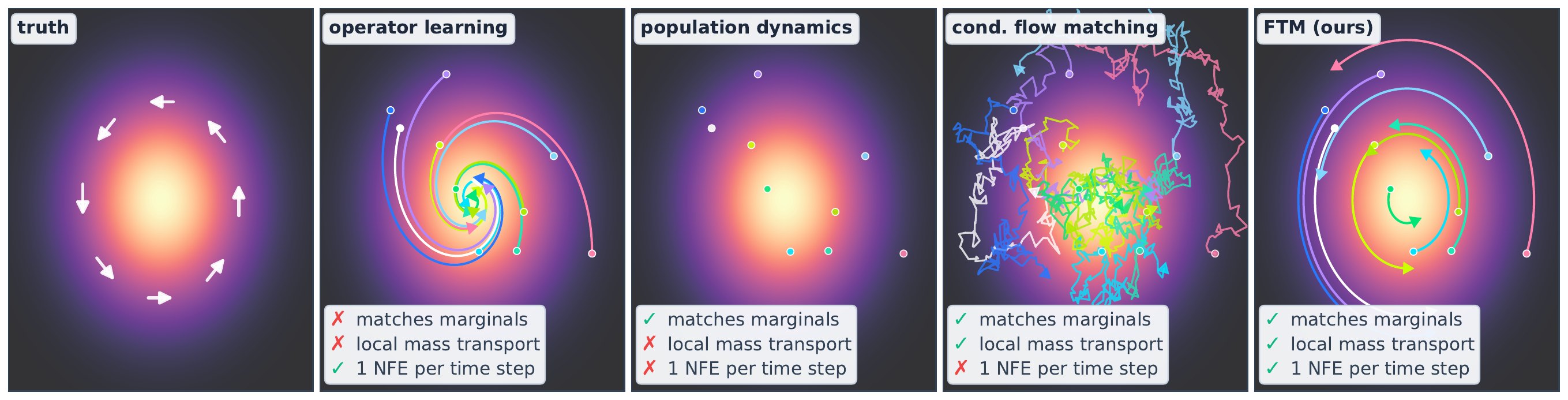}
\caption{Statistically stationary process where probability mass circulates. A learned operator model leads to trajectories that spiral inward, collapsing to the mean. Population dynamics (marginal matching) match time marginals but with zero velocity (for minimum kinetic energy) and so miss the rotation. Conditional generative modeling such as conditional flow matching reproduces the mass transport but typically requires multiple solver steps per physical time step. FTM preserves time marginals while capturing the first-order rotational current with one neural network evaluation (NFE\,$=1$) per time step. Details in  \Cref{appendix:IntroExample}.}
\label{fig:overview}
\end{figure}

\paragraph{Our approach: First-Order Trajectory Matching (FTM)}
We propose First-Order Trajectory Matching (FTM) which learns the local probability mass transport describing how probability mass moves through the state space. As illustrated in \Cref{fig:overview}, FTM learns a deterministic velocity field that transports ensembles in the right way both globally and locally. Globally, its flow matches the evolution of the time marginals, so probability mass is placed in the correct regions of state space. Locally, it matches the trajectory-induced transport of mass, preserving currents such as circulations or barrier-crossing fluxes. Computationally, FTM leads to a surrogate model based on an ordinary differential equation (ODE), so ensembles can be propagated by fast deterministic rollouts.
At the same time, this focus on a deterministic surrogate means that FTM is not intended to reproduce martingale-dominated path statistics such as hitting-time distributions or temporal correlations.

FTM matches local mass transport by using both directions of the observed trajectories:  Looking forward asks where trajectories go next and gives a one-sided forecasting velocity; looking backward asks where they came from. Combining these incoming and outgoing views gives the symmetric first-order motion of probability mass through a space-time point. In the infinitesimal limit, this is the probability current velocity, which is the learning target of FTM. 

The probability current velocity can be learned directly from stochastic trajectories, making FTM a fully trajectory-based surrogate-modeling approach. 
FTM uses a Stratonovich identity to turn current-velocity regression into a pathwise objective whose empirical form depends only on observed trajectory increments and evaluations of the candidate velocity field. We make this objective scalable by avoiding full-trajectory averaging, which is  too expensive for long rollouts and minibatch training. We therefore introduce a chunked loss that can balance between full-trajectory averaging and one-step local training, and prove a bound that separates the effects of time discretization, chunk length, and sample size. This analysis justifies the simple one-step FTM loss used in our experiments when time resolution and sample sizes are balanced. The one-step loss is local in time, cheap to evaluate, avoids separate drift/diffusion/score estimation, and remains stable when sufficiently many trajectories are available with respect to the time resolution.

\paragraph{Literature review}

\textbf{\emph{Probability current velocity in generative modeling}}
Probability-flow ODEs and probability current velocities also appear in diffusion- and flow-based generative modeling
\cite{song-sohl-dickstein-kingma-etal:2021,lipman2023flow,albergo-vanden-eijnden:2022,albergo-boffi-vanden-eijnden:2025,na-lee-yun-etal:2025},
where they describe the auxiliary sampling-time transport from a reference law to a target law and often provide faster alternatives to SDE samplers
\cite{chen-chewi-lee-etal:2023}.
In contrast, we are given physical-time trajectories of a stochastic system and seek the probability current velocity induced by those trajectories, not the velocity of an artificial path prescribed by a noising process or a stochastic interpolant.
Thus, unlike standard score-matching or conditional flow-matching objectives, our objective learns the physical-time probability current directly from observed paths.

\textbf{\emph{Probability current velocity in statistical physics and stochastic thermodynamics}}
Probability currents are classical objects in stochastic mechanics and nonequilibrium diffusion theory
\cite{nelson:1967,chetrite-gawedzki:2009,peliti-muratore-ginanneschi:2023}. 
Recent works in machine learning have used them in two main ways. First, in high-dimensional Fokker--Planck solvers, the probability current velocity is constructed from known drift-diffusion information together with an estimated score
and then used to evolve particles along the probability flow
\cite{maoutsa-reich-opper:2020,boffi-vanden-eijnden:2023,boffi-vanden-eijnden:2024}.
Second, in stochastic thermodynamics, there are methods for learning probability current velocities or related thermodynamic force fields, primarily for entropy-production diagnostics~\cite{seifert:2012,li2019quantifying,frishman-ronceray:2020,boffi-vanden-eijnden:2025,lyu-ray-crutchfield:2025,das2025localizing}. Most of this literature, which is also based on the Stratonovich identity~\eqref{eq:StratExp}, operates in the stationary regime, which excludes our setting of evolving $\rho(t)$. The work~\cite{frishman-ronceray:2020} gives a detailed analysis of trajectory length in this stationary regime, showing how longer averages reduce statistical error relative to the number of basis functions in linear approximations.  Recent work relaxes stationarity~\cite{lyu-ray-crutchfield:2025,das2025localizing}, but remains diagnostic, which means that the learned field is used to quantify or localize irreversibility and not integrated in time for predicting ensemble evolution. FTM instead develops a loss for learning a time-dependent velocity for predicting the evolution of a law $\rho(t)$. In particular, our analysis shows that, in this transient regime, finite-sample and discrete-time effects become central. We therefore introduce a chunk length that controls a statistical averaging scale, which in turn determines whether the empirical FTM loss provides a stable training signal for rollout.

\textbf{\emph{Learning SDEs}}
Classical SDE learning aims to estimate drift and diffusion coefficients, a long-standing problem that is difficult from finite, discretely observed trajectories and in high dimensions
\cite{nielsen-madsen-young:2000,comte-genon-catalot:2020,boninsegna-nueske-clementi:2018,ryder2018blackbox}.
Recent neural and latent SDEs parameterize these coefficients with neural networks 
\cite{tzen2019neural,li2020scalable,kidger2021efficient,course2023amortized,zhang2024trajectory,bartosh2025sdematching}. We
bypass the challenge of SDE identification by learning the probability current velocity directly from trajectories, without separately estimating drift, diffusion, or score.

\paragraph{Contributions}
(a) Formulate ensemble prediction as learning local probability mass transport rather than conditional means or marginal-matching flows, identifying the probability current velocity as the  deterministic surrogate for preserving  marginals and first-order trajectory transport.

(b) Derive and analyze a scalable, one-step, and simulation-free loss function to learn the current velocity directly from trajectories, bypassing drift, diffusion, and score estimation. 

(c) Show across stochastic and turbulent systems that FTM matches ensemble statistics and trajectory-based transport QoIs, while requiring only one neural-network evaluation per physical time step.

\section{Setup, preliminaries, and problem formulation}
\paragraph{Setup}
Let $(X(t))_{t \in [0,T]}$ be a stochastic process with state space $\mathbb{R}^d$, described by an SDE
\begin{equation}\label{eq:SDE}
\mathrm dX(t) = b(t, X(t))\mathrm dt + A(t)\mathrm dW(t)\,,\qquad X(0) \sim \rho(0)\,,
\end{equation}
with an initial probability density $\rho(0) \colon \mathbb{R}^d \to [0,\infty)$, a sufficiently smooth drift vector field $b \colon [0, T] \times \mathbb{R}^d \to \mathbb{R}^d$, a sufficiently smooth diffusion matrix $A(t) \in \mathbb{R}^{d \times d}$ that does not depend on $X(t)$ (``additive noise''), and a standard Wiener process $(W(t))_{t \in [0,T]}$ on $\mathbb{R}^d$. For all theoretical statements, we will further assume uniform boundedness of the drift and diffusion matrix: $\lVert b(t,x) \rVert_2 \leq b_{\max}$ for all $t \in [0,T],x \in \mathbb{R}^d$, and $\Sigma(t) := A(t) A(t)^\top \preceq \sigma_{\max} I_d$ for all $t \in [0,T]$. We denote the density of $X(t)$ as $\rho(t)\colon \mathbb{R}^d \to [0,\infty)$.
In the following, we are given a data set
\begin{equation}\label{eq:TrainingData}
\mathcal{X} = \{X^{(i)}(t_k) \,|\, i = 1, \dots, N, k = 0, \dots, K\} \subset \mathbb{R}^d\,,
\end{equation}
of $i = 1, \dots, N$ independent realizations of trajectories of~\eqref{eq:SDE} with $K$ time steps $0 = t_0 < t_1 < \dots, < t_K = T$ over the time interval $[0, T]$ and time-step size $h > 0$, assumed equidistant for simplicity (all of the following extends to data that are irregularly sampled in time).

\paragraph{Problem formulation}
Given the trajectory data~\eqref{eq:TrainingData}, we seek a deterministic velocity field $u: [0, T] \times \mathbb{R}^d \to \mathbb{R}^d$ whose induced ODE 
\begin{equation}\label{eq:ODEFlow}
\tfrac{\mathrm d}{\mathrm dt}\hat{x}(t) = u(t, \hat{x}(t))\,,\qquad \hat{x}(0) \sim \rho(0)\,,
\end{equation}
can be used as a fast surrogate for ensemble prediction under the stochastic dynamics \eqref{eq:SDE}. 
The basic requirement for such a surrogate is marginal consistency with the stochastic process \eqref{eq:SDE}, i.e., if $\hat{x}(0) \sim \rho(0)$, then the ODE should satisfy $\hat{x}(t) \sim \rho(t)$ for $t \in [0, T]$. 
Marginal consistency is necessary for ensemble prediction because it ensures that the learned flow places probability mass in the correct regions of state space at each time. 
We seek a deterministic velocity field whose  flow matches the time marginals of the stochastic process and additionally generates trajectories that are first-order consistent with sample paths of the SDE \eqref{eq:SDE}.

\section{First-order trajectory matching (FTM)}

With FTM, we do not only match where probability mass is globally but also how probability mass moves locally. Matching time marginals only ensures that the right amount of mass is globally present in the right regions of state space at each time. FTM additionally asks that, locally in space and time, the instantaneous transport of this mass agrees with that induced by  the stochastic trajectories.

\subsection{From global to local mass transport} 
\label{sec:motivate-current}

FTM seeks a velocity that describes how probability mass,  up to first order, passes through each space-time point $(t,x)$, rather than only where the mass is located at time $t$.

\paragraph{Describing local mass transport with forward, backward, and  symmetric increments}
A forward increment describes the first-order approximation of where trajectories currently at $x$ move mass next, 
\begin{equation}\label{eq:MeanFwdDrift}
\vec{b}_h(t, x) = \mathbb{E}_{\omega \sim \mathbb{P}}\left[h^{-1}\left({X_{\omega}(t + h) - X_{\omega}(t)}\right) \big| X_{\omega}(t) = x\right]\,,
\end{equation}
where $h > 0$ is a time-step size and $(X_{\omega}(t))_{t \in [0, T]}$ denotes a sample path of the SDE \eqref{eq:SDE}. In the limit, under suitable regularity, the expected forward increment $\vec{b}_h(t, x)$ converges point-wise to the drift of \eqref{eq:SDE}, $\vec{b}(t,x):=
\lim_{h \downarrow 0} \vec{b}_h(t, x) = b(t, x)$ \cite{kloeden-platen:1992}. However, local mass transport through $(t,x)$ is not only an outgoing quantity, it also depends on where the mass arriving at $(t, x)$ came from. The backward increment captures this incoming mass transport, \begin{equation}\label{eq:MeanBwdDrift}
\cev{b}_h(t, x) = \mathbb{E}_{\omega \sim \mathbb{P}}\left[h^{-1}\left({X_{\omega}(t) - X_{\omega}(t - h)}\right) \big| X_{\omega}(t) = x\right]\,,
\end{equation}
which converges to $\cev{b}(t,x):=\lim_{h \downarrow 0} \cev{b}_h(t, x) = b(t, x) - \Sigma(t)\nabla \log \rho(t, x)$ 
under suitable regularity assumptions \cite{nelson:1967,anderson:1982,haussmann-pardoux:1986}. With FTM, we aim to learn the velocity that describes the net flux of mass, which is obtained by combining the incoming and outgoing mass through the symmetric increment,
\begin{equation}\label{eq:SymIncrement}
v_h(t, x) = \mathbb{E}_{\omega \sim \mathbb{P}}\left[(2h)^{-1}\left(X_{\omega}(t + h) - X_{\omega}(t - h)\right) \big| X_{\omega}(t) = x\right].
\end{equation}
Combining the limits of $\vec{b}_h$ and $\cev{b}_h$ for $h \downarrow 0$ shows the limit of the expected symmetric increment $\lim_{h \downarrow 0} v_h(t, x) = \frac{1}{2}(\vec b(t,x)+\cev b(t,x))$~\cite{nelson:1967}.

\paragraph{Probability current velocity}
Using the forward and backward limits above, the limit velocity $\tfrac{1}{2}(\vec b(t,x)+\cev b(t,x))$ is
\begin{equation}\label{eq:CurrentVelocity}
    v(t,x)
    =
    b(t,x)
    -
    \tfrac{1}{2}\Sigma(t)\nabla \log \rho(t,x),
\end{equation}
which is the probability current velocity of the SDE~\eqref{eq:SDE}; see, e.g.,  \cite[Chapter~4]{pavliotis:2014}.

The importance of $v$ is that the same velocity selected locally by the trajectories also transports the marginals globally. Indeed, using~\eqref{eq:CurrentVelocity}, the Fokker--Planck
equation for~\eqref{eq:SDE} can be written as
\begin{equation}\label{eq:fokker-planck-current}
    \partial_t \rho(t,x)
    =
    - \nabla \cdot(b(t,x) \rho(t,x)) + \tfrac{1}{2} \Sigma(t) : \nabla^2 \rho(t,x)
    =
    -\nabla\cdot\left(\rho(t,x)v(t,x)\right).
\end{equation}
Equivalently, $\rho v$ is the probability current of the stochastic process~\eqref{eq:SDE}.
Thus, if $\hat{x}(0)\sim \rho(0)$ and $\hat{x}(t)$ evolves according to \eqref{eq:ODEFlow} with $u = v$, then the law of $\hat{x}(t)$ follows the same marginals as the SDE.

Therefore the probability current velocity satisfies exactly the two requirements we imposed. It matches the time marginals, so probability mass is globally in the correct places. It also arises as the symmetric first-order motion of the stochastic trajectories, so probability mass moves locally as it
does under the SDE. We discuss further properties of and intuition for $v$ in \Cref{appendix:IntroExample,appendix:prob-current}, and also refer to the large body of literature that uses the probability current velocity \cite{frishman-ronceray:2020,song-sohl-dickstein-kingma-etal:2021,boffi-vanden-eijnden:2023,lyu-ray-crutchfield:2025,boffi-vanden-eijnden:2025}. FTM learns the probability current velocity rather than an arbitrary marginal-matching velocity.

\subsection{A scalable loss function for learning the current velocity from trajectory data} 
At the core of FTM is the task of learning the probability current velocity $v$ directly from trajectory data. This is not a standard regression problem, because values of $v(t,x)$ are unavailable in the data set \eqref{eq:TrainingData}. Although $v$ can be represented as~\eqref{eq:CurrentVelocity}, estimating $v$ through separately learning drift, diffusion, and score is typically challenging and unreliable; see \Cref{sec:NumExp}. Instead, we are interested in a loss function whose minimizer is $v$, but whose empirical estimator uses only sample trajectories as given in the training data \eqref{eq:TrainingData}.

\paragraph{An inaccessible regression objective}
Let us  consider the parametrized velocity field $v_{\theta}: [0, T] \times \mathbb{R}^d \to \mathbb{R}^d$.
If the current velocity $v$ were known, we would have a standard regression objective
\begin{equation}\label{eq:CurrentVeloStart}
\min_{\theta \in \mathbb{R}^p} \mathbb{E}_{t \sim \mathcal{U}([0, T]), X(t) \sim \rho(t)}\left[\|v(t, X(t)) - v_{\theta}(t, X(t))\|^2_2\right]
\end{equation}
over the parameters $\theta \in \mathbb{R}^p$, where times $t$ are uniformly sampled  $\mathcal{U}([0, T])$ and $X(t)$ is a sample of the time marginal $\rho(t)$. 
We expand the norm in \eqref{eq:CurrentVeloStart} and drop the constant term to see that it is sufficient to minimize

\begin{equation}\label{eq:Loss:Intermediate01}
J(\theta) = \mathbb{E}_{t \sim \mathcal{U}([0, T]), X(t) \sim \rho(t)} \left[\|v_{\theta}(t, X(t))\|^2_2 - 2 \langle v(t, X(t)) , v_{\theta}(t, X(t))\rangle\right]\,.
\end{equation}
The quadratic term $\|v_{\theta}(t, X(t))\|^2_2$ can be directly estimated from data. The difficulty is the linear term $\langle v(t, X(t)) , v_{\theta}(t, X(t))\rangle$  because it contains the current velocity $v$, which is unknown. 

\paragraph{A pathwise loss via the Stratonovich identity}
The key observation is that we do not need to evaluate $v$ pointwise to estimate the linear term in the objective \eqref{eq:Loss:Intermediate01}. Instead, it is sufficient to be able to estimate its action on the parametrized velocity $v_{\theta}$. 
This action can be estimated directly from SDE trajectories through the Stratonovich integral identity
\begin{equation}\label{eq:StratExp}
\mathbb{E}_{\omega \sim \mathbb{P}} \left[\int_0^T \langle v_\theta(t, X_{\omega}(t)), v(t, X_{\omega}(t)) \rangle \mathrm{d} t\right] = \mathbb{E}_{\omega \sim \mathbb{P}} \left[ \int_0^T v_\theta(t, X_{\omega}(t)) \circ \mathrm{d} X_{\omega}(t) \right] 
\end{equation}
where $\circ$ denotes the Stratonovich dot product (see \Cref{appendix:StratDotProduct} for a proof of this standard result for completeness, and, e.g.,  \cite{frishman-ronceray:2020,boffi-vanden-eijnden:2025,lyu-ray-crutchfield:2025} where it is used for loss functions). 
The resulting objective is
\begin{equation}\label{eq:stratloss}
J(\theta) = J_T(\theta) = \mathbb{E}_{\omega \sim \mathbb{P}}\left[\frac{1}{T}\hspace*{-0.10cm}\int_0^T\|v_{\theta}(t, X_{\omega}(t))\|_2^2\mathrm dt - \frac{2}{T}\hspace*{-0.10cm}\int_0^T\ v_{\theta}(t, X_{\omega}(t)) \circ \mathrm dX_{\omega}(t)\right]\,,
\end{equation}
which does not depend on the unknown probability current velocity $v$ anymore. 
The objective has the same minimizer as the inaccessible regression loss \eqref{eq:CurrentVeloStart} but now depends only on trajectories.

\paragraph{A pathwise loss with chunks}
The loss \eqref{eq:stratloss} depends on the entire trajectory over the time interval $[0, T]$, which can become intractable for long trajectories. We therefore introduce a chunk length $\tau$ with $h \leq \tau \leq T$, and train on trajectory segments or chunks of length $\tau$. The chunked objective is 
\begin{align}\label{eq:strato-chunk-obj}
J_{\tau}(\theta) = \mathbb{E}_{t, \omega}\bigg[ \frac{1}{\tau}\hspace*{-0.10cm}\int_{t}^{t + \tau}\|v_{\theta}(s, X_{\omega}(s))\|_2^2\mathrm ds - \frac{2}{\tau}\hspace*{-0.10cm}\int_{t}^{t + \tau}v_{\theta}(s, X_{\omega}(s)) \circ \mathrm dX_{\omega}(s) \bigg]\,,
\end{align}
with $t \sim {\cal U}([0, T - \tau])$, which still has the same minimizers as $J(\theta)$. 
Given trajectories~\eqref{eq:TrainingData} sampled at discrete time points of time-step size $h > 0$, we set $\tau = K_{\tau}h$ for an integer $K_{\tau}$ and approximate the Stratonovich integral, e.g., by the composite trapezoidal rule to obtain the empirical loss 
\begin{multline}\label{eq:FTMLoss}
\widehat{J}_{h,\tau}(\theta) = \frac{1}{N\tau}\sum\nolimits_{i = 1}^N\sum\nolimits_{j=0}^{K_{\tau}-1} \left[\|v_{\theta}(s_{i,j}, X^{(i)}(s_{i,j}))\|_2^2h - \right.\\\left.\left\langle v_{\theta}\left(s_{i,j+1}, X^{(i)}(s_{i,j+1})\right) + v_{\theta}\left(s_{i,j}, X^{(i)}(s_{i,j}) \right), X^{(i)}(s_{i,j+1}) - X^{(i)}(s_{i,j})\right\rangle\right]\,,
\end{multline}
where $s_{i, j} = t_{m_i + j}$ for uniformly sampled chunk start indices $m_i \in \{0, \dots, K - K_{\tau}\}$.

\paragraph{Scalable one-step FTM loss when time discretization and sample size are balanced}
The objective $\widehat{J}_{h,\tau}$ depends on two time scales: the data time step $h$ and the chunk length $\tau$. The step size $h$ controls the bias from approximating the time integrals, while the chunk length $\tau$ controls how much stochastic path noise is averaged for one loss sample.  
The next bound makes this precise.
\begin{proposition} \label{prop:msebound}
Suppose $v_\theta \colon [0,T] \times \mathbb{R}^d \to \mathbb{R}^d$ is uniformly bounded, such that there exists $V > 0$ with $\lVert v_\theta(t,x) \rVert_2 \leq V$ for all $t \in [0,T]$ and $x \in \mathbb{R}^d$, and uniformly Lipschitz continuous, such that $C_{\text{div}}:= \sup_{t,x} \tfrac12 \lvert \Tr \left(\Sigma(t) \nabla v_\theta(t,x) \right) \rvert < \infty$. Then there exist constants $C, \tilde{C} > 0$ (related to the discretization error for the integrals) such that we have
\begin{equation}\label{eq:MSEBound}
\mathbb{E}\left[(\widehat{J}_{h,\tau}(\theta) - J_\tau(\theta))^2\right] \leq 2 \left(V^4 + 12 \left( V^2 b_{\max}^2 + \tfrac{\sigma_{\max} V^2}{\tau} + C_{\text{div}}^2 \right) + \tilde{C} h \right) \tfrac{1}{N}  + 4 C^2 V^2 h^2\,.
\end{equation}
\end{proposition}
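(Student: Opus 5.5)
The plan is a bias--variance decomposition. Introduce the per-trajectory continuous-time chunk functional
\[
Z_i = \frac{1}{\tau}\int_{t_i}^{t_i+\tau}\|v_\theta(s,X^{(i)}(s))\|_2^2\,\mathrm ds - \frac{2}{\tau}\int_{t_i}^{t_i+\tau} v_\theta(s,X^{(i)}(s))\circ \mathrm dX^{(i)}(s),
\]
with $t_i = t_{m_i}$ the random chunk start. Let $\widehat Z_i$ be its trapezoidal/Riemann discretization, so that $\widehat{J}_{h,\tau}(\theta) = \frac1N\sum_i \widehat Z_i$. We write
\[
\widehat{J}_{h,\tau}-J_\tau = \Bigl(\tfrac1N\textstyle\sum_i (\widehat Z_i - \mathbb{E}\widehat Z_i)\Bigr) + \bigl(\mathbb{E}\widehat Z_1 - J_\tau\bigr)
\]
and use $(a+b)^2\le 2a^2+2b^2$. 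One point needs checking: $J_\tau$ uses a continuous uniform start in $[0,T-\tau]$ while the empirical loss uses grid starts. I would either absorb this into the $O(h)$ bias constant $C$ or note that the definition is meant with grid-aligned starts. By independence across trajectories, the first term has second moment $\frac{1}{N}\mathrm{Var}(\widehat Z_1)\le \frac1N\mathbb{E}[\widehat Z_1^2]$.

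For the variance, first bound $\mathbb{E}[Z_1^2]$ and then write $\mathbb{E}\widehat Z_1^2 \le \mathbb{E} Z_1^2 + \tilde C h$. This comparison uses $\widehat Z_1 - Z_1 = O(h^{1/2})$ in $L^2$, together with Cauchy--Schwarz and the bounded second moment of $Z_1$. For $Z_1$, convert Stratonovich to Itô using additive noise:
\[
\int v_\theta\circ\mathrm dX = \int \langle v_\theta, b\rangle\,\mathrm ds + \int v_\theta^\top A\,\mathrm dW + \tfrac12\int \Tr(\Sigma\nabla v_\theta)\,\mathrm ds.
\]
Then $Z_1$ is a sum of four pieces: the kinetic term (bounded by $V^2$), the drift term (bounded by $2Vb_{\max}$), the martingale term $-\frac{2}{\tau}\int v_\theta^\top A\,\mathrm dW$, and the divergence term (bounded by $2C_{\text{div}}$). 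Apply $(a_1+\dots+a_4)^2\le 4\sum a_j^2$, or weighted versions of it, to match the factor $12$. For the martingale term, the Itô isometry gives
\[
\frac{4}{\tau^2}\,\mathbb{E}\int_{t}^{t+\tau} v_\theta^\top\Sigma v_\theta\,\mathrm ds \le \frac{4\sigma_{\max}V^2}{\tau}.
\]
This yields $\mathbb{E}Z_1^2 \le V^4 + 12(V^2b_{\max}^2+\sigma_{\max}V^2/\tau+C_{\text{div}}^2)$ up to arranging constants. One grouping that works is $(a+b)^2 \le 2a^2+2b^2$ with $a$ the kinetic term, but the exact constants are bookkeeping, and I would tune the splitting to land on the stated form.

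For the bias $\mathbb{E}\widehat Z_1 - J_\tau$, compare the expected trapezoidal sums with the exact integrals on each subinterval. The Riemann sum for the kinetic term has $O(h)$ error in expectation by Itô's formula applied to $\|v_\theta\|^2$, under smoothness of $v_\theta$. The trapezoidal sum converges to the Stratonovich integral, and its expected local error over a step of length $h$ is $O(h^2)$: expand $v_\theta(s_{j+1},X_{j+1})$ around $s_j$ and note that the leading stochastic cross term produces exactly the Stratonovich correction. Summing $K_\tau$ steps and dividing by $\tau$ gives $|\mathbb{E}\widehat Z_1-J_\tau|\le 2CVh$. Squaring and multiplying by $2$ gives the $4C^2V^2h^2$ term.

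The main obstacle is the bias step, namely showing that the expected trapezoidal error is $O(h)$ after normalization by $\tau$, uniformly in $\tau$, with a constant proportional to $V$. This requires a careful second-order Itô–Taylor expansion in which the $O(h)$ diffusive contributions cancel against the Stratonovich correction. I would do this by writing $v_\theta(s_{j+1},X_{j+1})-v_\theta(s_j,X_j)$ via Itô's formula, pairing it with $X_{j+1}-X_j$, and bounding the remainders by moments of the Brownian increments. The regularity of $v_\theta$ beyond Lipschitz continuity that this needs would be absorbed into $C$ and $\tilde C$.
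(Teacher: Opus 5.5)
Your outline matches the paper's proof step for step: variance plus squared bias, comparison of the discrete chunk functional with its continuous version, Stratonovich-to-It\^o conversion, It\^o isometry for the $\sigma_{\max}V^2/\tau$ term, uniform bounds on the drift and divergence pieces, and weak order one for the bias. Your remark on grid-aligned versus continuous chunk starts is a legitimate extra $O(h)$ bias that the paper silently absorbs into $C$.

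The proposal breaks in the constant bookkeeping you defer. Splitting $\widehat J_{h,\tau}-J_\tau$ with $(a+b)^2\le 2a^2+2b^2$ spends a factor $2$ you cannot afford. It forces the intermediate target $\mathbb{E}[Z_1^2]\le V^4+12(V^2b_{\max}^2+\sigma_{\max}V^2/\tau+C_{\text{div}}^2)$, which is false. Take $d=1$, $b\equiv b_{\max}$, $v_\theta\equiv -V$ (so $C_{\text{div}}=0$) and zero (or vanishingly small) noise. Then $Z_1=V^2+2Vb_{\max}$, and $(V^2+2Vb_{\max})^2>V^4+12V^2b_{\max}^2$ as soon as $V>2b_{\max}$. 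No choice of splitting can repair this. Consistently, the grouping you suggest actually yields $2V^4+24(\cdots)$, and $2(2CVh)^2=8C^2V^2h^2$, not $4C^2V^2h^2$.

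The fix is what the paper does. The fluctuation $\frac1N\sum_i(\widehat Z_i-\mathbb{E}\widehat Z_1)$ has mean zero and the bias is deterministic, so the cross term vanishes. The mean squared error therefore equals $\frac1N\operatorname{Var}(\widehat Z_1)$ plus the squared bias exactly. Then $(a-2S)^2\le 2a^2+8S^2$ with $\mathbb{E}[S^2]\le 3(\cdots)$ gives $\mathbb{E}[Z_1^2]\le 2(V^4+12(\cdots))$, which is precisely the displayed form. A bias of $2CVh$ then gives $4C^2V^2h^2$ with no extra factor.

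Separately, $\mathbb{E}[\widehat Z_1^2]\le\mathbb{E}[Z_1^2]+\tilde Ch$ does not follow from $\|\widehat Z_1-Z_1\|_{L^2}=O(h^{1/2})$ plus Cauchy--Schwarz. The cross term $2\mathbb{E}[Z_1(\widehat Z_1-Z_1)]$ is then only bounded by $O(h^{1/2})$. To get $O(h)$ with leading coefficient one, you must use that the $O(h^{1/2})$ part of the trapezoidal error consists of conditionally centred L\'evy-area terms. Their covariance with $Z_1$ is $O(h^2)$ per step, hence $O(h)$ after summing $K_\tau$ steps and dividing by $\tau$. Alternatively, split with $(a+b)^2\le 2a^2+2b^2$ as the paper does, at the cost of doubling the $V^4$ and bracketed coefficients. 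Taken literally, the paper's chain at this step produces $4V^4+48(\cdots)+2\tilde Ch$ rather than the displayed $2V^4+24(\cdots)+2\tilde Ch$, so its stated constants are loose as well.

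None of this affects the qualitative content: a $1/N$ variance term that stays bounded as $h\downarrow0$ at fixed $\tau$, plus an $O(h^2)$ squared bias.
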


Proof in \Cref{appendix:ProofOfMSEBound}. The bound separates the effect of the discretization scale $h$ from the effect of the averaging scale~$\tau$. 
The last term in the bound is the squared time-discretization bias. It decreases as the data are sampled more finely in time. 
The remaining terms are variance terms and scale like $1/N$ with the number of data samples $N$. Within this variance term, the contribution $\sigma_{\text{max}}V^2/(N\tau)$ is the stochastic contribution from the diffusive part of the trajectories. Increasing $\tau$ averages the noise over a longer trajectory chunk; increasing $N$ averages over more independent samples. Thus, stability (which means keeping the variance under control) is governed by both $N$ and $\tau$.
In particular, for fixed $\tau$, the variance remains bounded as $h \downarrow 0$. Thus, for a fixed $\tau$, the discretization bias can be reduced by taking smaller time steps $h$ without the variance growing unbounded. 
The variance-minimizing choice is $\tau = T$, but is often impractical because every training step requires
processing full trajectories, which is expensive for long time horizons and
limits minibatching.

We are therefore interested in  $\tau \approx h$, which couples the averaging scale~$\tau$ to the discretization scale~$h$, and leads to the scalable, one-step FTM loss
\begin{equation}\label{eq:FTM:OneStepEstimator}
\widehat{J}_{\text{FTM}}(\theta)\hspace*{-0.10cm} = \hspace*{-0.10cm}\frac{1}{Nh}\sum_{i = 1}^N\left[\|v_{\theta}(s_i, X^{(i)}(s_i))\|_2^2h - \langle v_{\theta}(s_i,X^{(i)}(s_i)), X^{(i)}(s_{i}\hspace*{-0.10cm}+\hspace*{-0.10cm}h) - X^{(i)}(s_i\hspace*{-0.10cm}-\hspace*{-0.10cm}h)\rangle\right],
\end{equation}
for uniformly sampled times $s_1, \dots, s_N \in \{t_1, \dots, t_{K-1}\}$. Here, we used the midpoint rule on chunks of size $\tau = 2h$ in~\eqref{eq:strato-chunk-obj}, instead of the trapezoidal rule; see \Cref{appendix:SymEstimatorUnstable} for further discussion and the relation of this loss to~\eqref{eq:SymIncrement}. 
The variance of the loss now behaves as $\sigma_{\max}V^2/(Nh)$. Hence, if $N$ is fixed, the estimator becomes unstable as $h \downarrow 0$. However, the bound in~\eqref{eq:MSEBound} shows that this instability is more nuanced. The variance of the one-step FTM loss \eqref{eq:FTM:OneStepEstimator} remains controlled whenever $Nh$ is sufficiently large. 

The one-step FTM loss is the most scalable one and we use it in our experiments. It is local in time, simulation-free, requires only adjacent states and a single neural-network evaluation per minibatch data point, and avoids estimating drift, diffusion, or scores. Its variance scales like $1/(Nh)$, so it is appropriate when the time-step size $h$ and the sample size $N$ are balanced.

\subsection{Time-marginal versus path-dependent inference errors}
At inference time, we generate samples by solving the ODE \eqref{eq:ODEFlow} with the learned $u = v_{\theta}$. We denote by $(\hat{\rho}_\theta(t))_t$ the marginal laws of the $(\hat{x}_\theta(t))_t$ trajectories generated via $v_\theta$. There are two inference errors of interest. 
First, the marginal-matching error, which measures how close  $\hat{\rho}_\theta(t)$  is to the true~$\rho(t)$, and therefore whether ensemble averages of ``time-local'' quantities of interests (QoIs) are accurately estimated. 
Second, the error for \textit{path-dependent QoIs}, which instead allows us to measure how accurately the learned ODE captures the path behavior of the SDE~\eqref{eq:SDE}.

\paragraph{Marginal matching inference error}
The following proposition about the marginal-matching error is based on a standard Gr{\"o}nwall argument and shows that uniform approximation of the current velocity implies Wasserstein control of the generated ensemble. A similar proposition holds for learning \textit{any} ODE~\eqref{eq:ODEFlow} for some $u \neq v$ that produces the correct time marginals.

\begin{proposition}[cf.\ e.g.~\cite{albergo-vanden-eijnden:2022,benton-deligiannidis-doucet:2023,fukumizu-suzuki-isobe-etal:2024}] \label{prop:w2}
Suppose that the true probability current velocity $v$ is uniformly Lipschitz-continuous with Lipschitz constant $L_v > 0$, and $\|v_{\theta}(t,x) - v(t,x)\|_2 \leq \epsilon$ for all $t \in [0,T],x \in \mathbb{R}^d$ for the learned velocity $v_\theta$. Then $W_2^2(\hat{\rho}_\theta(t), \rho(t)) \leq \exp((2L_v+1)t)t\epsilon^2$.
\end{proposition}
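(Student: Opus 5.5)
We prove \Cref{prop:w2} with a synchronous coupling: both flows start from the same initial point, and a Gr\"onwall estimate controls the pathwise distance. Let $\hat{x}_0 \sim \rho(0)$. Let $x(t)$ solve $\dot x = v(t,x)$ and $\hat{x}_\theta(t)$ solve $\dot{\hat x}_\theta = v_\theta(t,\hat x_\theta)$, both starting from $x(0)=\hat x_\theta(0)=\hat x_0$. By \eqref{eq:fokker-planck-current}, the flow of $v$ transports $\rho(0)$ to $\rho(t)$, so $x(t)\sim\rho(t)$. By definition, $\hat x_\theta(t)\sim\hat\rho_\theta(t)$. The pair $(\hat x_\theta(t), x(t))$ is therefore a coupling of $\hat\rho_\theta(t)$ and $\rho(t)$, which gives
\[
W_2^2(\hat\rho_\theta(t),\rho(t)) \le \mathbb{E}\bigl[\|\hat x_\theta(t)-x(t)\|_2^2\bigr].
\]
Here we tacitly need well-posedness of both ODEs. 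For $v$ this follows from the Lipschitz assumption. For $v_\theta$ we assume it is regular enough for a flow to exist; otherwise one can work with an absolutely continuous solution.

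The main step is to control $e(t):=\|\hat x_\theta(t)-x(t)\|_2^2$ pathwise. Differentiating gives
\[
\dot e = 2\langle \hat x_\theta - x,\, v_\theta(t,\hat x_\theta) - v(t,\hat x_\theta)\rangle + 2\langle \hat x_\theta - x,\, v(t,\hat x_\theta)-v(t,x)\rangle .
\]
The key trick is to insert $v(t,\hat x_\theta)$ rather than $v_\theta(t,x)$. This way only the Lipschitz constant of the \emph{true} velocity $v$ enters, and no regularity of $v_\theta$ is needed. Young's inequality bounds the first term by $e+\epsilon^2$. The Lipschitz assumption bounds the second term by $2L_v e$. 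Hence
\[
\dot e \le (2L_v+1)e + \epsilon^2, \qquad e(0)=0.
\]

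Gr\"onwall's inequality in integral form then yields
\[
e(t)\le \int_0^t e^{(2L_v+1)(t-s)}\epsilon^2\,\mathrm ds \le e^{(2L_v+1)t}\,t\,\epsilon^2 .
\]
In the last step we bound the integrand by its value at $s=0$, which produces exactly the stated factor $t$ instead of $(e^{(2L_v+1)t}-1)/(2L_v+1)$. This pathwise bound is deterministic and uniform in $\hat x_0$. Taking the expectation and combining with the coupling bound therefore gives $W_2^2(\hat\rho_\theta(t),\rho(t))\le \exp((2L_v+1)t)\,t\,\epsilon^2$.

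No step is genuinely hard. The only point requiring care is the choice of splitting in the derivative of $e$, so that the constant $2L_v+1$ and the factor $t$ come out exactly as stated.
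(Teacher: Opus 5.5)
Your proposal is correct and follows the paper's proof: the paper also couples $\hat{x}_\theta$ with the exact flow of $v$ through a shared initial point, splits the derivative of the difference by inserting $v(t,\hat{x}_\theta(t))$, and closes with Cauchy--Schwarz, the Lipschitz bound on $v$, Young's inequality, and Gr\"onwall. The only cosmetic difference is that the paper runs Gr\"onwall on $\mathbb{E}[\|\hat{x}_\theta(t)-\hat{x}(t)\|_2^2]$, keeping the integrated error $\int_0^t \mathbb{E}[\|v_\theta - v\|_2^2]\,\mathrm{d}s$ along $\hat{x}_\theta$ before bounding it by $t\epsilon^2$, whereas you work pathwise with $\epsilon^2$ from the start.
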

The proof is included in \Cref{appendix:ProofOfW2} for completeness. 
\Cref{prop:w2} immediately gives bounds for time-local QoIs that depend only on the marginal distribution. If $\phi: \mathbb{R}^d \to \mathbb{R}$ is a scalar QoI, which is Lipschitz with constant~$L_{\phi}$, then $|\mathbb{E}_{X(t) \sim \rho(t)}[\phi(X(t))] - \mathbb{E}_{\hat{x}_\theta(t) \sim \hat{\rho}_\theta(t)}[\phi(\hat{x}_\theta(t))]| \leq L_{\phi}\exp((L_v + \tfrac12)t)\sqrt{t}\epsilon$ (cf.\ \Cref{appendix:ProofOfW2}). 
Thus, if~$v_{\theta}$ approximates the probability current velocity~$v$ well, then the learned ODE produces accurate ensemble averages of local QoIs.

\paragraph{Path-dependent QoI inference error}
Marginal errors only measure where probability mass is located. By learning the current velocity, FTM also aims to match how probability mass moves, which means that sample trajectories generated with the learned ODE also approximate those of the SDE to first-order in the induced rate of change of probability mass; see, e.g., \cite{boffi-vanden-eijnden:2024} where this is used for capturing entropy production rates. The following proposition shows an error bound for path-dependent QoIs $\Phi(X) = \Phi((X(t))_{t \in [0,T]}):= \int_0^T \phi(t,X(t)) \circ \mathrm{d}X(t)$ for $\phi \colon [0,T] \times \mathbb{R}^d \to \mathbb{R}^d$ again under uniform $v_{\theta}$ error control; see \Cref{appendix:InferenceErrorRollout} for a proof using standard arguments. \textit{This} proposition is specific to FTM and the probability current velocity. In other words, targeting any other velocity field $u$ for~\eqref{eq:ODEFlow} with the same time-marginals does not permit control over the error in~$\Phi$, as is evident from the population dynamics example in \Cref{fig:overview}.

\begin{proposition}
\label{prop:path-error}
Under the same assumptions as \Cref{prop:w2}, suppose further that $v_{\theta}$ and $\phi$ are bounded, $\|v_{\theta}(t, x)\|_2 \leq B_{v_\theta}$ and $\lVert \phi(t,x) \rVert_2 \leq B_{\phi}$ for all $t \in [0,T]$, $x \in \mathbb{R}^d$, and Lipschitz on $\mathbb{R}^d$ with Lipschitz constants $L_{v_{\theta}}$ and $L_{\phi}$, respectively. Then, with $C = (L_{\phi} B_{v_\theta} + L_{v_{\theta}}B_{\phi}) / (L_v+\tfrac12)$:
\begin{align}
\left|
    \mathbb E[\Phi(X)]
    -
    \mathbb E[\Phi(\hat{x}_\theta)]
\right| \leq  \left(
\int_0^T \|\phi(t,\cdot)\|_{L^2(\rho(t))} \mathrm{d}t
+
C
\exp\left((L_v+\tfrac12)T\right)
\sqrt{T}\right) \epsilon \,.
\label{eq:qoi-rate-bound-w2}
\end{align}
\end{proposition}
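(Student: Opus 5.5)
We split the error with an intermediate quantity. Rewrite both path QoIs as time integrals of a pointwise functional and compare them. For the SDE, the Stratonovich identity \eqref{eq:StratExp}, applied with $\phi$ in place of $v_\theta$, gives
\[
\mathbb E[\Phi(X)] = \int_0^T \mathbb E_{\rho(t)}\big[\langle \phi(t,X), v(t,X)\rangle\big]\,\mathrm dt .
\]
For the learned ODE, $\hat x_\theta$ is $C^1$, so the Stratonovich integral is an ordinary Riemann--Stieltjes integral. Hence
\[
\mathbb E[\Phi(\hat x_\theta)] = \int_0^T \mathbb E_{\hat\rho_\theta(t)}\big[\langle \phi(t,\hat x), v_\theta(t,\hat x)\rangle\big]\,\mathrm dt .
\]
We insert the intermediate term $\int_0^T \mathbb E_{\rho(t)}[\langle\phi,v_\theta\rangle]\,\mathrm dt$ and bound the two differences separately.

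\textbf{First difference.} It equals $\int_0^T \mathbb E_{\rho(t)}[\langle \phi, v-v_\theta\rangle]\,\mathrm dt$. By Cauchy--Schwarz in $L^2(\rho(t))$ and the uniform bound $\|v-v_\theta\|\le\epsilon$, it is at most $\epsilon\int_0^T\|\phi(t,\cdot)\|_{L^2(\rho(t))}\,\mathrm dt$. This is the first term in \eqref{eq:qoi-rate-bound-w2}.

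\textbf{Second difference.} It is $\int_0^T\big(\mathbb E_{\rho(t)}[g_t]-\mathbb E_{\hat\rho_\theta(t)}[g_t]\big)\,\mathrm dt$ with $g_t(x)=\langle\phi(t,x),v_\theta(t,x)\rangle$.
\begin{itemize}
\item $g_t$ is Lipschitz with constant $L_\phi B_{v_\theta}+L_{v_\theta}B_\phi$, by the product rule for bounded Lipschitz functions.
\item For each $t$, couple the two laws optimally. Kantorovich--Rubinstein (or $W_1\le W_2$) together with \Cref{prop:w2} bounds the integrand by $(L_\phi B_{v_\theta}+L_{v_\theta}B_\phi)\exp((L_v+\tfrac12)t)\sqrt t\,\epsilon$.
\item Integrate in $t$ using $\sqrt t\le\sqrt T$:
\[
\int_0^T e^{(L_v+\frac12)t}\,\mathrm dt \le \frac{e^{(L_v+\frac12)T}}{L_v+\tfrac12}.
\]
\item This yields exactly $C\exp((L_v+\tfrac12)T)\sqrt T\,\epsilon$ with the stated constant $C$.
\end{itemize}

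\textbf{Expected difficulty.} The main care point is justifying the Stratonovich step for the SDE with a general $\phi$ rather than $v_\theta$. The identity \eqref{eq:StratExp} proven in \Cref{appendix:StratDotProduct} only uses boundedness and Lipschitz regularity (so the Itô correction $\tfrac12\Tr(\Sigma\nabla\phi)$ is integrable, and the integration by parts against $\rho$ turns it into the score term). We would check that the assumptions on $\phi$ suffice there, possibly in the weak-derivative sense since $\phi$ is only Lipschitz.

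A secondary point is measurability of the time integral and the exchange of $\mathbb E$ and $\int\mathrm dt$. Both are routine by Fubini, since all integrands are bounded.
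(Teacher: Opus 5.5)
Your proposal is correct and follows essentially the same route as the paper. Both arguments use the Stratonovich identity for the SDE and the ordinary chain-rule form for the ODE, then add and subtract $\int_0^T \mathbb E_{\rho(t)}[\langle \phi, v_\theta\rangle]\,\mathrm dt$. The local term is handled by Cauchy--Schwarz, and the marginal term by the Lipschitz bound on $\langle\phi,v_\theta\rangle$ together with Kantorovich--Rubinstein, $W_1\le W_2$ and \Cref{prop:w2}. Integrating with $\sqrt t\le\sqrt T$ then yields the stated $C$.

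Your caveat about verifying \eqref{eq:StratExp} for a merely Lipschitz $\phi$ is legitimate; the paper glosses over it by simply invoking the identity.
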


The first term accounts for the local-in-time first-order range of change error. The second term is the marginal rollout error, which appears because the QoI is evaluated along samples from $\hat{\rho}(t)$ rather than $\rho(t)$, and is controlled by the Wasserstein bound of \Cref{prop:w2}.
The bound shows that models learned with FTM control both the quantities of interest under the learned ensemble and their
first-order range of change, capturing fluxes and circulations that are ignored by marginal matching.

\begin{figure}
    \centering
    \includegraphics[width=1.0\linewidth]{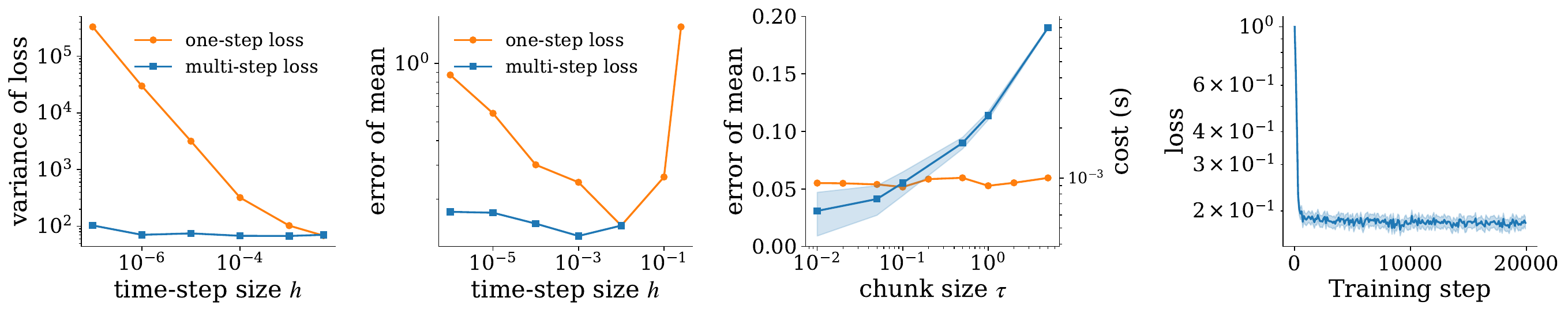}
    \caption{The one-step FTM loss is stable in the practically relevant time-resolution/sample-size balanced regime: longer chunks sizes $\tau$ are necessary only at very small time-step sizes $h$ to reduce variance (a)--(b), but otherwise add substantial cost per training step and provide little accuracy gain at realistic resolutions (here for Duffing oscillator example) (c). FTM trains robustly (d).} 
    \label{fig:duffing_combined}
\end{figure}

\section{Experiments}\label{sec:NumExp}

\paragraph{Examples} We consider the following examples; see \Cref{appendix:Experiments} for details. \textbf{1.\ Duffing oscillator.} Samples evolve in phase space based on a double-well potential. A Brownian forcing is  applied to the oscillator's acceleration equation. \textbf{2.\ Chaotic Rayleigh--Bénard convection.}  This is a  stochastically forced nine-mode model of Rayleigh--Bénard convection introduced in \cite{reiterer-lainscek-schuerrer-etal:1998}. The system is of dimension nine and chaotic due to the thermal convection;  all state variables are Brownian forced. \textbf{3.\ Stochastically forced Burgers.}  This system serves as a simplified model of turbulence, capturing nonlinear advective transport, viscous dissipation, and the formation of sharp gradients. The stochastic forcing is given by a 10-mode Fourier expansion driven by independent Brownian motions. \textbf{4.\ Stochastically forced turbulence.} This system is governed by the two-dim.\  incompressible Navier--Stokes equation;  Brownian forcing acts on the first ten Fourier modes.

\paragraph{The FTM loss is stable and robust to train}
\Cref{fig:duffing_combined} shows on the example of the Duffing oscillator that the one-step FTM loss is a principled scalable estimator, in agreement with \Cref{prop:msebound}. Panel~(a)--(b) show that for chunk size $\tau=h$, the variance/error grows only when $h$ becomes very small; fixing the chunk size $\tau=0.1 \gg h$ removes this small-$h$ blow-up.  
Crucially, Panel~(c) shows that in the time-resolution/sample-size balanced regime of our experiments (here for the Duffing oscillator), accuracy is nearly insensitive to $\tau$; as shown in  \Cref{prop:msebound}. 
Thus, longer chunks buy little accuracy in this balanced regime but incur substantially higher cost, whereas our  one-step FTM loss retains a stable training signal at minimal expense. Finally, Panel~(d) shows that this stability carries over to optimization. Across 100 random neural-network initializations, one-step FTM trains consistently and robustly. (The loss does not approach zero because  \eqref{eq:Loss:Intermediate01} drops the constant from  \eqref{eq:CurrentVeloStart}.)

\begin{figure}
    \begin{tabular}{cc}
\includegraphics[width=0.579\linewidth]{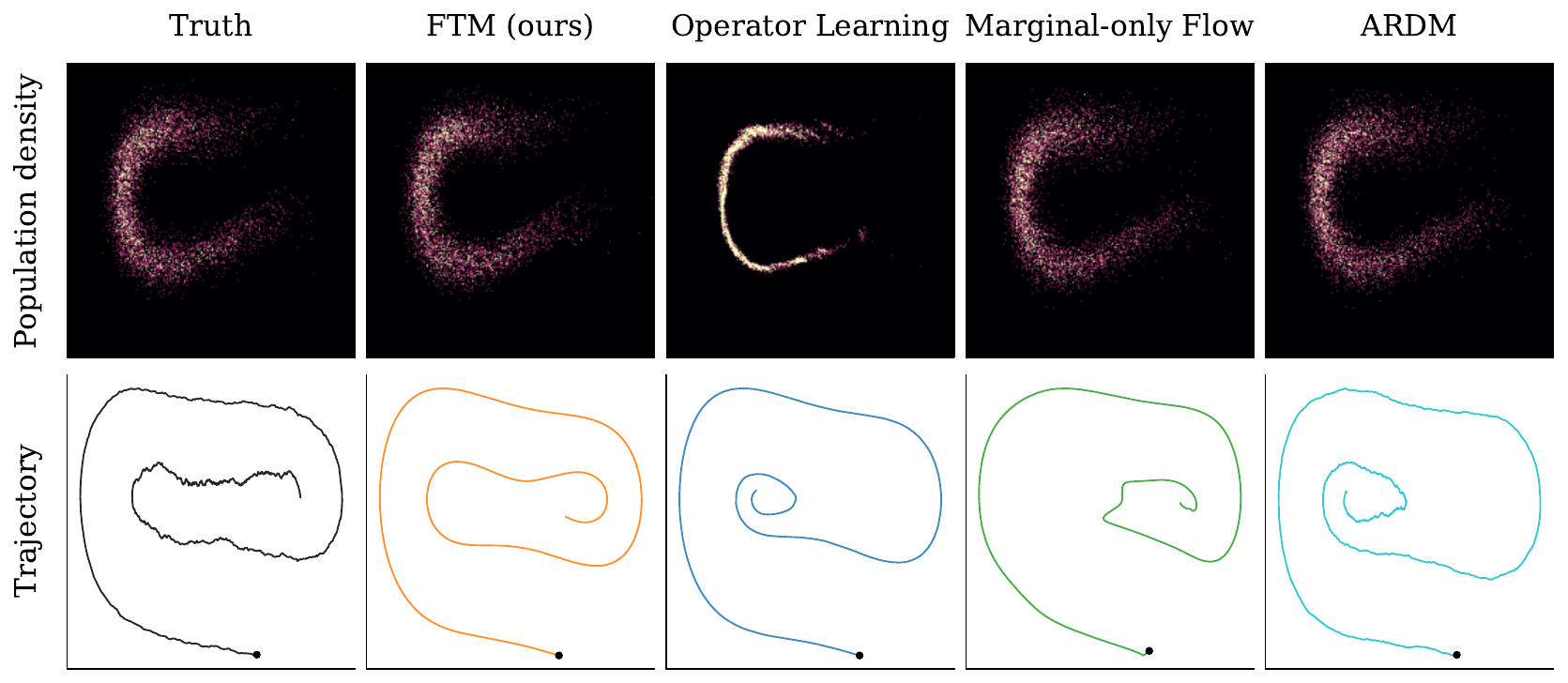}
& \includegraphics[width=0.37\linewidth]{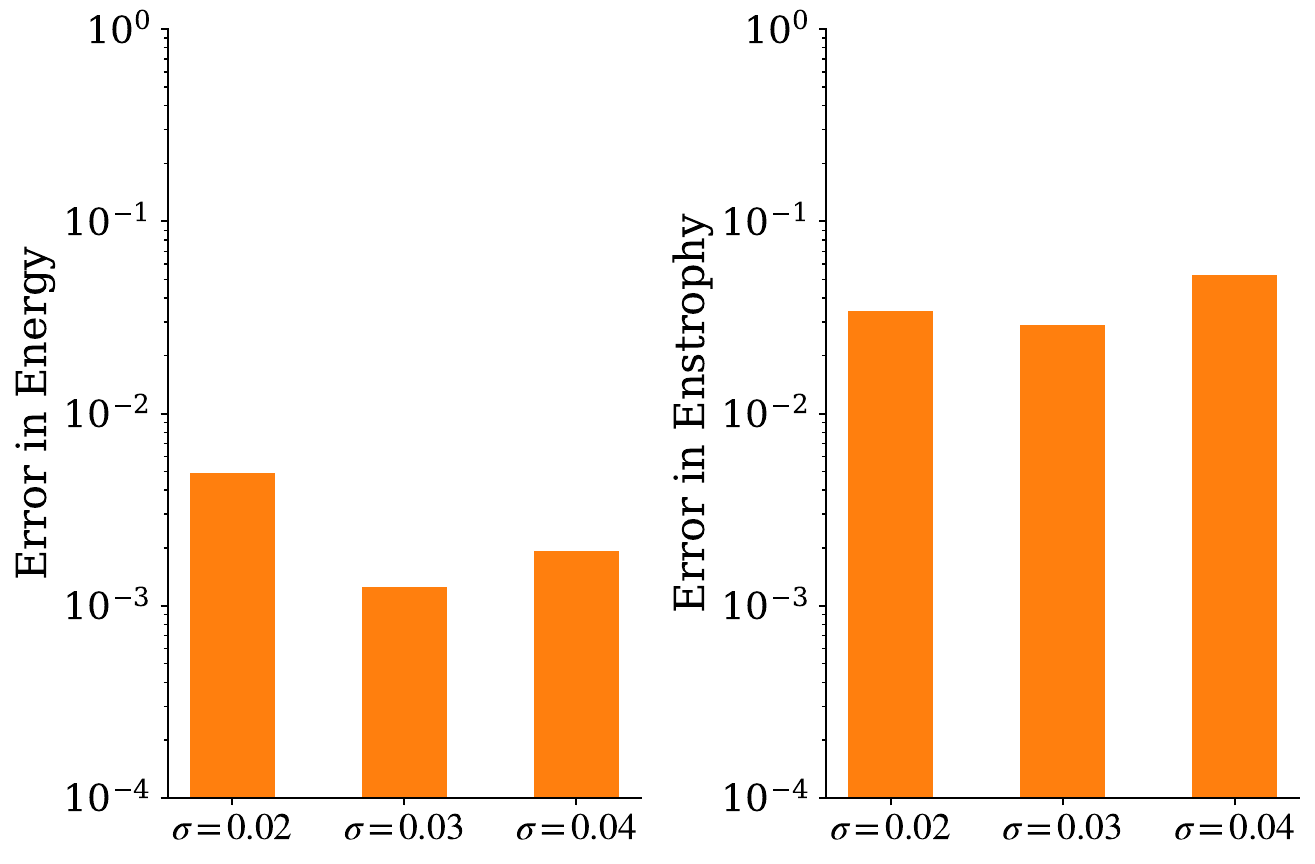}
\end{tabular}
    \caption{Duffing oscillator (left): FTM matches the evolving ensemble and generates physically meaningful trajectories. Stochastic Burgers (right): FTM is robust to increasing noise levels ($\sigma$).} 
    \label{fig:duffing_histogram}
\end{figure}

\begin{table}
\centering
\caption{FTM matches the ensemble distribution and additionally gives the smallest trajectory-QoI errors, showing that FTM captures transport information missed by marginal-only methods.} 
\label{tab:duffing_lorenz}
\setlength{\tabcolsep}{2.5pt}
\newcommand{\pmc}[2]{#1{(\pm#2)}}
\renewcommand{\arraystretch}{0.92}
\resizebox{0.9\columnwidth}{!}{%
\begin{tabular}{lllll}
\toprule
 & \multicolumn{2}{c}{Duffing oscillator} 
 & \multicolumn{2}{c}{Rayleigh--Bénard convection} \\
\cmidrule(lr){2-3} \cmidrule(lr){4-5}
 & dist.\ err.\ ($W_2$) & traj.\ QoI err.\ & dist.\ err.\ ($W_2$) & traj.\ QoI err.\ \\
\midrule
T.\ stepper \cite{otness2021an} 
& $\pmc{0.348}{2.0\mathrm{e}{-01}}$
& $\pmc{1.630}{8.4\mathrm{e}{-03}}$
& $\pmc{2.690}{1.6\mathrm{e}{+00}}$
& $\pmc{1.650}{3.8\mathrm{e}{-03}}$ \\

DICE \cite{blickhan-berman-stuart-etal:2025} 
& $\pmc{0.340}{5.0\mathrm{e}{-01}}$
& $\pmc{2.510}{7.6\mathrm{e}{-03}}$
& $\pmc{0.200}{1.2\mathrm{e}{-01}}$
& $\pmc{0.882}{3.8\mathrm{e}{-03}}$ \\

Marginal \cite{ho2020denoising} 
& $\pmc{0.081}{2.5\mathrm{e}{-02}}$
& $\pmc{0.564}{8.3\mathrm{e}{-03}}$
& $\pmc{0.111}{6.1\mathrm{e}{-02}}$
& $\pmc{1.270}{3.8\mathrm{e}{-02}}$ \\

SDE learning \cite{dridi2021learning} 
& $\pmc{0.083}{2.8\mathrm{e}{-02}}$
& $\pmc{0.263}{7.9\mathrm{e}{-03}}$
& $\pmc{\mathbf{0.054}}{2.3\mathrm{e}{-02}}$
& $\pmc{0.053}{4.8\mathrm{e}{-03}}$ \\

SDE matching \cite{bartosh2025sdematching} 
& $\pmc{0.549}{2.0\mathrm{e}{-01}}$
& $\pmc{0.178}{7.2\mathrm{e}{-03}}$
& $\pmc{0.227}{1.3\mathrm{e}{-01}}$
& $\pmc{0.612}{4.4\mathrm{e}{-03}}$ \\

Plug-in $v$\ (\ref{appendix:DirectEstimationOfV}) 
& $\pmc{0.507}{1.7\mathrm{e}{-01}}$
& $\pmc{1.150}{6.1\mathrm{e}{-03}}$
& $\pmc{0.374}{1.8\mathrm{e}{-01}}$
& $\pmc{0.566}{5.0\mathrm{e}{-03}}$ \\

FTM (ours) 
& $\pmc{\mathbf{0.075}}{4.9\mathrm{e}{-02}}$
& $\pmc{\mathbf{0.020}}{5.7\mathrm{e}{-03}}$
& $\pmc{0.060}{2.5\mathrm{e}{-02}}$
& $\pmc{\mathbf{0.027}}{4.6\mathrm{e}{-03}}$ \\
\bottomrule
\end{tabular}%
}
\end{table}

\paragraph{FTM captures trajectory-based transport beyond ensemble statistics}

\Cref{fig:duffing_histogram} (top row) and \Cref{tab:duffing_lorenz} (distribution error $W_2$) show that FTM accurately matches  time marginals and is competitive to marginal-only flows and explicit marginal-matching methods such as DICE \cite{blickhan-berman-stuart-etal:2025}.  
FTM additionally matches local mass transport: The bottom row of \Cref{fig:duffing_histogram} shows that FTM follows the local motion of the stochastic trajectories, whereas marginal-only transport produces paths that do not resemble the stochastic dynamics. To quantify this, we consider trajectory-based quantities of interest (QoI): for Duffing, we  measure the barrier-crossing current, and for Rayleigh--Bénard the rotational current (\Cref{appendix:Experiments}). FTM gives the smallest trajectory-based QoI error averaged over time in \Cref{tab:duffing_lorenz}.

\begin{figure}
\hspace*{-0.25cm}\begin{tabular}{cc}
\includegraphics[width=0.50\linewidth]{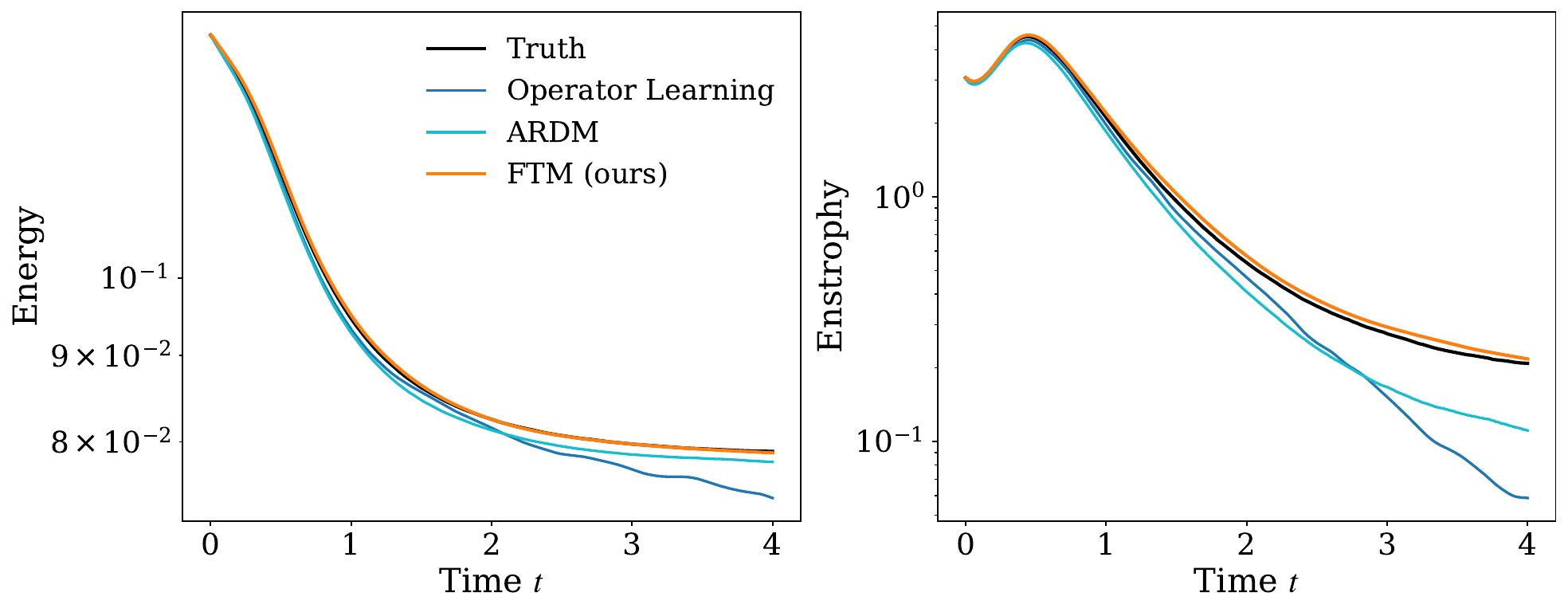} & \hspace*{-0.35cm}\includegraphics[width=0.50\linewidth]{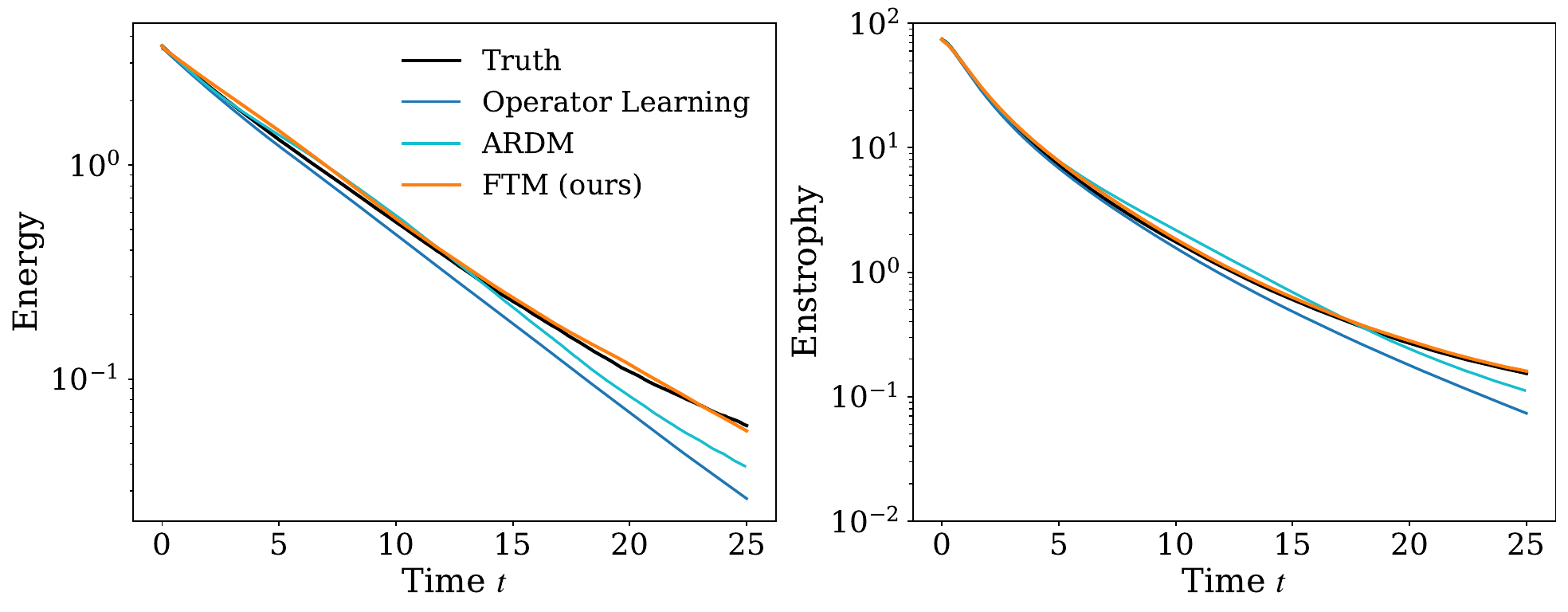}\\
\scriptsize (a) stochastic Burgers turbulence (Burgulence) & \scriptsize (b) stochastically forced turbulence (Navier-Stokes)
\end{tabular}
    \caption{FTM predicts ensembles with accurate energy and enstrophy statistics of solution trajectories of the stochastic Burgers (left panel) and stochastically forced turbulence (right panel) example. (See \Cref{fig:burgers_combined_appendix} in appendix for plots that include the standard deviation.)} 
    \label{fig:burgers_combined}
\end{figure}

\begin{table}
\centering
\caption{FTM gives the best reported errors while using only one neural-network function evaluation (NFE) per time step. More extensive results are provided in the appendix in \Cref{tab:burgers_jaxcfd}.}
\label{tab:maintextburgers_jaxcfd}
\setlength{\tabcolsep}{2.5pt}
\newcommand{\pmc}[2]{#1{(\pm#2)}}
\renewcommand{\arraystretch}{0.92}

\small\begin{tabular}{llrllr}
\toprule
 & \multicolumn{2}{c}{Stochastic Burgers}
 & \multicolumn{2}{c}{Forced turbulence (Navier--Stokes)} \\
\cmidrule(lr){2-3} \cmidrule(lr){4-5}
 & error in energy & NFE & error in energy & NFE \\
\midrule

Operator learning \cite{stachenfeld2021learned}
& $\pmc{2.21\mathrm{e}{-2}}{1.54\mathrm{e}{-2}}$
& 1
& $\pmc{2.13\mathrm{e}{-1}}{1.52\mathrm{e}{-1}}$
& 1 \\

ARDM 50 \cite{KOHL2026108641}
& $1.46\mathrm{e}{-2}(\pm\,3.39\mathrm{e}{-3})$
& 50
& $\pmc{3.14\mathrm{e}{-1}}{1.47\mathrm{e}{-1}}$ 
& 50 \\

ARDM 75 \cite{KOHL2026108641}
& $1.36\mathrm{e}{-2}(\pm\,3.49\mathrm{e}{-3})$
& 75
& $\pmc{2.75\mathrm{e}{-1}}{1.40\mathrm{e}{-1}}$  
& 75 \\

ARDM 100 \cite{KOHL2026108641}
& $\pmc{1.24\mathrm{e}{-2}}{3.12\mathrm{e}{-3}}$
& 100
& $\pmc{1.16\mathrm{e}{-1}}{1.09\mathrm{e}{-1}}$
& 100 \\

AR-CFM \cite{albergo-vanden-eijnden:2022,lipman2023flow}
& $\pmc{2.71\mathrm{e}{-3}}{1.96\mathrm{e}{-3}}$
& 20
& $\pmc{2.09\mathrm{e}{-1}}{1.81\mathrm{e}{-1}}$
& 100 \\

FTM (ours)
& $\pmc{\mathbf{1.95\mathrm{e}{-3}}}{1.93\mathrm{e}{-3}}$
& 1
& $\pmc{\mathbf{3.43\mathrm{e}{-2}}}{2.47\mathrm{e}{-2}}$
& 1 \\

\bottomrule
\end{tabular}%

\end{table}

\paragraph{FTM scales to stochastic PDEs with one NFE per physical time step}
We next demonstrate in \Cref{fig:burgers_combined,fig:jax_cfd_combined} and \Cref{tab:maintextburgers_jaxcfd} that FTM remains effective for the stochastic Burgers and stochastically forced turbulence (Navier-Stokes) example, where each state is a high-dimensional discretized field. Operator learning is included to verify that these benchmarks contain non-negligible stochastic variability. Indeed, operator learning damps the ensemble and underestimates both the energy and enstrophy. An auto-regressive diffusion model (ARDM) \cite{KOHL2026108641} avoids this collapse by learning the full transition law, but rollout then requires solving an inner generative sampling problem at every physical time step, with $100$ neural-network evaluations (NFEs) per time step in our experiments. FTM instead propagates ensembles with a single neural-network evaluation per physical time step while matching the mean and standard deviation of energy and enstrophy closely; see \Cref{fig:burgers_combined} (and \Cref{fig:burgers_combined_appendix} for standard deviation). 
We also report the energy spectrum decay in \Cref{fig:jax_cfd_combined}, which shows that FTM captures it accurately. We further compare to a\begin{table}
\small  \centering\caption{Stoch.\ Burgers: FTM vs. distilled CFM (MeanFlow \cite{geng2025mean})}\label{tab:MeanFlow}
  \hspace*{-0.25cm}\begin{tabular}{llr}
    & \hspace*{-0.3cm}err.\ energy & \hspace*{-0.2cm}NFE \\
    \hline
    AR-CFM \cite{albergo-vanden-eijnden:2022,lipman2023flow} & \hspace*{-0.3cm}2.71e-3& \hspace*{-0.2cm}20\\
    MeanFlow-10 \cite{geng2025mean} & \hspace*{-0.3cm}2.06e-2 & \hspace*{-0.2cm}10\\
    MeanFlow-5 \cite{geng2025mean}& \hspace*{-0.3cm}3.41e-2& \hspace*{-0.2cm}5\\
    MeanFlow-1\cite{geng2025mean} & \hspace*{-0.3cm}8.22e-1& 1\\
    FTM (ours) & \hspace*{-0.3cm}\textbf{1.95e-3}& \hspace*{-0.2cm}1
  \end{tabular}
\end{table} distilled auto-regressive conditional flow matching (ARCFM) model \cite{albergo-vanden-eijnden:2022,lipman2023flow} using mean-flow modeling \cite{geng2025mean}; see \Cref{tab:MeanFlow,tab:meanflowext}. While closer in terms of costs to FTM, this distillation leads to at least one order of magnitude higher error than FTM.

\begin{figure}
    \centering
    \includegraphics[width=0.95\linewidth]{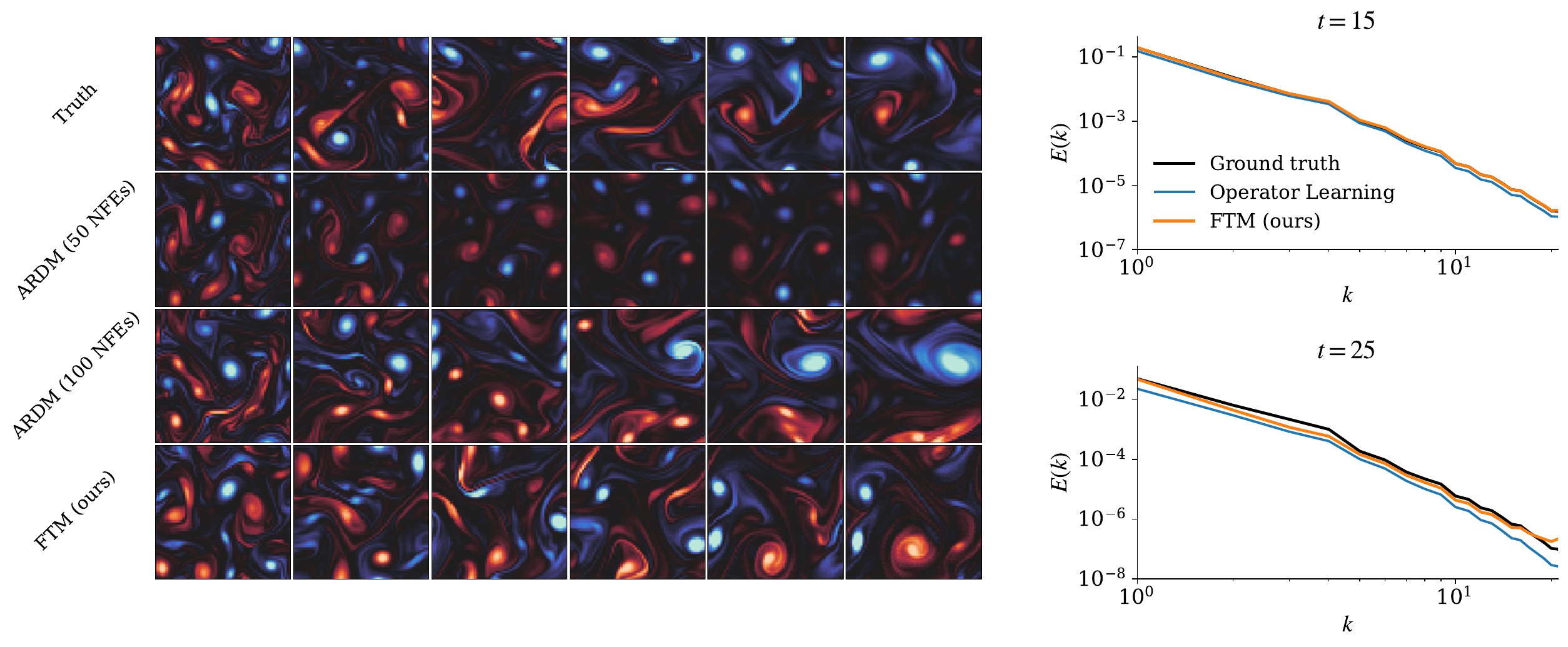}\vspace*{-0.3cm}
    \caption{On stochastically forced turbulence, FTM produces accurate rollouts with one NFE per physical time step, while ARDM uses up to $100$ inner sampling steps in our experiments (left). FTM also matches the energy spectrum decay well (right).}
    \label{fig:jax_cfd_combined}
\end{figure}

\section{Conclusions and limitations}\label{sec:Conc}
 
By learning the probability current velocity directly from trajectories, FTM captures how probability mass moves locally, not just where it is globally. This lets the FTM deterministic surrogate model predict  trajectory-based QoIs, including first-order transport structures such as fluxes and circulations, in addition to preserving ensemble statistics. As a result, FTM has the  rollout cost of deterministic operator learning while avoiding collapse to the mean, and is faster than typical autoregressive generative models that require inner sampling at every physical time step. 
The numerical experiments demonstrate that this makes FTM a practical route to fast  ensemble predictions.

\emph{Limitations} \textit{(i)} FTM is designed to preserve time marginals and first-order trajectory transport, not the full stochastic path law. Thus, FTM can accurately capture current-like path-dependent QoIs, as shown in the experiments, but it should not be expected to reproduce martingale-dominated statistics such as hitting-time distributions and temporal autocorrelations. This is the key tradeoff: when the quantities of interest are ensemble statistics or current-like transport observables, FTM can provide trajectory-aware predictions at deterministic-rollout costs, instead of paying the often much higher cost of autoregressive generative modeling. 
\textit{(ii)} The theory in this work establishes the estimator behavior for finite-dimensional additive-noise SDEs and clarifies the time-resolution/sample-size tradeoff of the one-step loss. These results, obtained using standard techniques, clarify the qualitative behavior of the estimator but they do not fully apply to the high-dimensional PDE/SPDE examples used in the empirical experiments. Extending the guarantees to broader the PDE/SPDE settings remains an important direction for future work.

\bibliography{main}

\appendix

\section{Illustrative examples, and gauge freedom for global mass transport}\label{appendix:IntroExample}

We first specify the SDE toy example shown in \Cref{fig:overview} in the main text via \Cref{example:ou-rotate}.

\begin{example} \label{example:ou-rotate}
    We consider a statistically stationary, rotating Ornstein--Uhlenbeck process in $d = 2$ dimensions. The SDE~\eqref{eq:SDE} satisfied by $(X(t))_{t \in [0,T]}$ is given by
    \begin{align}
    &\mathrm{d} X(t) =
    \mathrm{d} \begin{pmatrix}
        X_1(t)\\ X_2(t)
    \end{pmatrix} = b(t, X(t)) \mathrm{d}t + A(t) \mathrm{d}W(t)  \label{eq:ou-process-specs} \\
   &= - \left[\gamma X(t) + \Omega R X(t)\right] \mathrm{d}t + \sqrt{2D} \mathrm{d} W(t) = \begin{pmatrix}
       -\gamma & \Omega\\
       -\Omega & - \gamma
   \end{pmatrix} \begin{pmatrix}
        X_1(t)\\ X_2(t)
    \end{pmatrix} \mathrm{d} t + 
\sqrt{2D}   
   \begin{pmatrix}
        \mathrm{d}W_1(t) \\ \mathrm{d}W_2(t)
    \end{pmatrix}\,.
    \nonumber
    \end{align}
    Here, $\gamma > 0$ denotes the constant friction coefficient, $\Omega \in \mathbb{R}$ the constant angular velocity, and $D > 0$ the constant diffusion coefficient. We write $R = \begin{pmatrix}
        0 & -1\\
        1 & 0
    \end{pmatrix}$ in~\eqref{eq:ou-process-specs}. As is easy to verify from the Fokker--Planck equation~\eqref{eq:fokker-planck-current}, the stationary distribution of the process is $\rho_\infty = {\cal N}(0, \tfrac{D}{\gamma}I_2)$. We initialize $X(0) \sim \rho(0) = \rho_\infty$, which then implies $X(t) \sim \rho(t) \equiv \rho_\infty$ for all $t \in [0,T]$. Concretely, we take $\gamma = 0.35$, $\Omega = 1$, $D = 0.35$, $T = 1.5$ for \Cref{fig:overview}, though the numerical values do not matter for the qualitative behavior illustrated in \Cref{fig:overview}.
\end{example}

We now discuss what is shown in \Cref{fig:overview} in more detail, and separately comment on the five panels of \Cref{fig:overview} from left to right. Note that this illustrative figure, similar to \Cref{fig:two-lane} below, is generated based on analytically known results, without actually training on any data. All of the trajectories that are shown start from the \textit{same random initial conditions} $X(0) = \hat{x}(0) \sim \rho_\infty$ across panels, in order to compare the pathwise behavior of the different methods.
\begin{enumerate}
    \item \textbf{truth:} We show the constant marginal distribution $\rho(t) \equiv \rho_\infty$ of the process on $\mathbb{R}^2$ as a heatmap, with white arrows indicating the direction of the analytically known probability current velocity $v(x) = \Omega R x$ from~\eqref{eq:CurrentVelocity}, which circulates counterclockwise for $\Omega > 0$. 
This probability current corresponds to sample paths of~\eqref{eq:ou-process-specs} circulating in the same direction, while preserving their time marginal distribution. 
The same heatmap of $\rho_\infty$ is repeated in all other panels as a visual aid.
    \item \textbf{operator learning:} The target for operator learning is the drift vector field $b$ in~\eqref{eq:ou-process-specs} via~\eqref{eq:MeanFwdDrift}. 
Learning only the drift, and solving the ODE $\tfrac{\mathrm{d}}{\mathrm{d}t}\hat{x}(t) = b(\hat{x}(t))$ at inference, leads to trajectories that spiral inwards and collapse onto the mean $0$ of $\rho_\infty$ at large $t$. 
Hence, this does not preserve the correct time marginals.
    \item \textbf{population dynamics:}
    The main goal of population dynamics is to match the correct time marginals, which are stationary here. 
These methods typically assume that only \textit{unpaired} samples from $X(t)$ at different times $t$ are available, without trajectory information (which, as we argue in the present work, is actually available in many problems and useful to train with). Additional conditions then have to be imposed to single out a particular velocity field $u$ for the ODE~\eqref{eq:ODEFlow} among all those that yield the correct time marginals for $(\hat{x}(t))_t$. 
A natural condition is to demand minimum kinetic energy, resulting in $u(t,x) \equiv 0$ in the present example, which trivially satisfies $\hat{x}(t) = \hat{x}(0) \sim \rho_\infty$. However, this misses the circulation and local mass transport of the SDE~\eqref{eq:ou-process-specs} which is relevant for path-dependent quantities of interest.
The resulting ODE $\tfrac{\mathrm{d}}{\mathrm{d}t}\hat{x}(t) = 0$ is trivial to integrate in this particular example, but more generally the velocity is not zero and thus time stepping with population dynamics can require more than one neural-network evaluation per time step  as in our experiments.
    \item \textbf{conditional flow matching:} Conditional flow matching methods are capable of reproducing the correct transition law of $X(t+h) \mid X(t)$ of the SDE~\eqref{eq:ou-process-specs}. In the figure, we hence show sample paths of the SDE~\eqref{eq:ou-process-specs} generated using the Euler--Maruyama method. By construction, these will have the correct time marginals $\rho(t) \equiv \rho_\infty$, and circulate counterclockwise according to the probability current. However, sampling from (approximations of) $X(t+h) \mid X(t)$ with conditional flow matching in each time step increases the computational cost at inference.
    \item \textbf{FTM:} Here, we integrate the ODE $\tfrac{\mathrm{d}}{\mathrm{d}t}\hat{x}(t) = v(t,\hat{x}(t))$, which preserves the time marginals, can be done at low computational cost, and the resulting pure circulation at constant radius mimics the behavior of true sample paths of~\eqref{eq:ou-process-specs}.
\end{enumerate}

In order to illustrate and contextualize these points further, it is useful to provide a slightly more abstract view on \Cref{example:ou-rotate} using general irreversible perturbations of reversible processes that retain the Boltzmann equilibrium distribution:

\begin{example}[e.g.~\cite{rey-bellet-spiliopoulos:2015}]
    \label{example:ortho-comp}
    Consider an SDE~\eqref{eq:SDE} with a diffusion matrix $A(t) = \sqrt{2} I_d$ and drift $b(t,x) = - \nabla U(x) + \ell(x)$ for a (quasi-) potential $U \colon \mathbb{R}^d \to [0,\infty)$ such that $Z = \int_{\mathbb{R}^d} \exp(-U(x)) \mathrm{d}x < \infty$. Here, $\ell \colon \mathbb{R}^d \to \mathbb{R}^d$ denotes the  ``orthogonal component'' of the drift, and we suppose $\langle \nabla U(x), \ell(x) \rangle = 0$ and $\nabla \cdot \ell(x) = 0$ for all $x \in \mathbb{R}^d$. The vector field $\ell$ represents an irreversible perturbation of the SDE. Under the stated assumptions, the stationary distribution of the process $(X(t))$ is still the Boltzmann distribution $\rho_\infty(x) = Z^{-1} \exp (-U(x))$ as can be verified from the Fokker--Planck equation~\eqref{eq:fokker-planck-current}:
    \begin{align*}
        &\nabla \cdot \left(b(x) \rho_\infty(x) - \nabla \rho_\infty(x) \right) = Z^{-1} \nabla \cdot \left( \left[(-\nabla U(x) + \ell(x)) + \nabla U(x) \right] \exp (-U(x)) \right)\\
        &= Z^{-1} \nabla \cdot \left( \ell(x)  \exp (-U(x)) \right) = \rho_\infty(x) \left(\nabla \cdot \ell(x) - \langle \ell(x), \nabla U(x) \rangle \right) = 0\,.
    \end{align*}
    If we set $X(0) \sim \rho(0) = \rho_\infty$, then $\rho(t) = \rho_\infty$ for all $t > 0$. The probability current velocity~\eqref{eq:CurrentVelocity} in this case is
    \begin{align}
        v(t,x) = b(t,x) - \nabla \log \rho(t,x) = -\nabla U(x) + \ell(x) + \nabla U(x) = \ell(x)\,,
        \label{eq:prob-velo-orthogonal-comp}
    \end{align}
    i.e., it corresponds to only the rotational or ``swirling'' component of the drift. Despite the time marginals $\rho(t) \equiv \rho_\infty$ being stationary, typical sample paths of the SDE~\eqref{eq:SDE} swirl according to~$\ell$, and this is correctly captured by the probability current velocity~\eqref{eq:prob-velo-orthogonal-comp}. The two-dimensional rotating Ornstein--Uhlenbeck process~\eqref{eq:ou-process-specs} as specified in \Cref{example:ou-rotate} and shown in \Cref{fig:overview} obviously falls exactly under the general class of examples introduced here: in this special case, we have $A(t) = \sqrt{2 D}I_2$ and $b(t,x) = - \gamma x + \Omega R x$, such that $U(x) = \tfrac{\gamma}{2} \lVert x \rVert_2^2$ and $\ell(x) = v(x) = \Omega R x$. Evidently, $\langle \nabla U, \ell \rangle = 0$ and $\nabla \cdot \ell = 0$, such that $\rho_\infty \propto \exp(-U / D)$, and the probability current velocity corresponds to circulation with constant angular velocity $\Omega$.
\end{example}

 In view of the discussion above and in \Cref{sec:motivate-current} in the main text, we remark that in general there is a ``gauge freedom'' in choosing a velocity field~$u$ for~\eqref{eq:ODEFlow} with given time marginals. Indeed, as~\eqref{eq:fokker-planck-current} shows, for any $u$ that is a valid choice under the time marginal constraint, so is $u + w / \rho$ for $\nabla \cdot w = 0$. What we have argued in \Cref{sec:motivate-current} in the main text based on the local probability mass transport of the SDE is that picking $u = v$ as the probability current velocity~\eqref{eq:CurrentVelocity} is in some sense the canonical choice if we are given sample paths of the SDE~\eqref{eq:SDE}.
 
 Concretely, in the specific situation of \Cref{example:ou-rotate} and more generally \Cref{example:ortho-comp}, picking any other velocity field $u = v + w / \rho$ with $\nabla \cdot w = 0$ for the probability flow ODE~\eqref{eq:ODEFlow} that respects the time marginals would either introduce too much or too little swirl compared to the actual SDE~\eqref{eq:SDE}. In particular, since in these examples the system is statistically stationary, $u \equiv 0$ is a valid choice for~\eqref{eq:ODEFlow} with minimum kinetic energy and correct time marginals, such that $\hat{x}(t) \equiv \hat{x}(0) \sim \rho_\infty$ for all~$t$, but this precisely misses the probability current that the actual SDE displays in its stationary state.

\begin{figure}
\centering
\includegraphics[width=1.0\columnwidth]{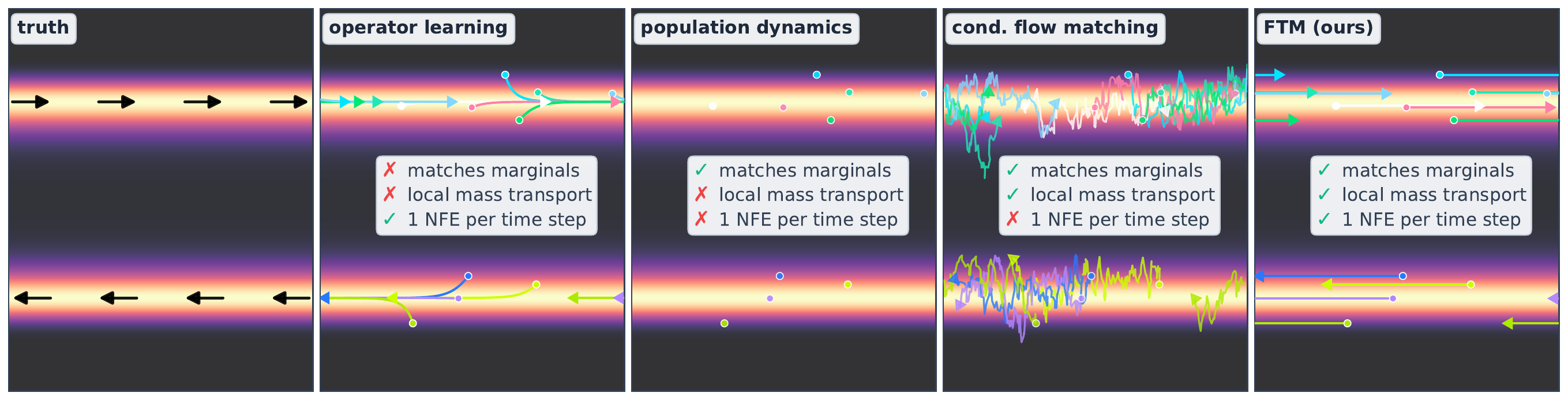}
\caption{Statistically stationary ``two-lane process'' defined in \Cref{example:two-lane}.}
\label{fig:two-lane}
\end{figure}

 We provide a second illustrative example, similar to the rotating Ornstein--Uhlenbeck process~\eqref{eq:ou-process-specs} but on the domain $S^1 \times \mathbb{R}$ (which is not simply connected) in \Cref{example:two-lane} and \Cref{fig:two-lane}.

\begin{example}
\label{example:two-lane}
Consider the following statistically stationary ``two-lane process'' $(X(t))_{t \in [0,T]}$ on the infinite cylinder $S^1 \times \mathbb{R}$:
\begin{align} \label{eq:two-lane}
\mathrm{d} X(t) =
    \begin{pmatrix}
        w(X_2(t)) \\ -U'(X_2(t))
    \end{pmatrix} \mathrm{d}t + \begin{pmatrix}
        \sqrt{2 D_1} \, \mathrm{d}W_1(t) \\ \sqrt{2 D_2} \, \mathrm{d}W_2(t)
    \end{pmatrix}\,.
\end{align}
Here, $U \colon \mathbb{R} \to [0,\infty)$ is a symmetric double well potential, concretely a quartic one $U(x_2) = \tfrac{\kappa}{4} (x_2^2 - a^2)^2$. The stationary distribution $\rho_\infty$ of~\eqref{eq:two-lane} is then uniform in $x_1$ and bimodal in $x_2$: $\rho_\infty(x_1,x_2) = Z \exp (-U(x_2)/D_2)$. We again initialize $X(0) \sim \rho(0) = \rho_\infty$. The function $w \colon \mathbb{R} \to \mathbb{R}$ in~\eqref{eq:two-lane} induces a horizontal probability current according to $v(x_1,x_2) = (w(x_2),0)^\top$. We choose~$w$ so that the current points to the right in the top well of~$U$ and to the left in the bottom well of~$U$, concretely $w(x_2) = w_0 \tanh(x_2 / l_0)$. Sample paths of the SDE~\eqref{eq:two-lane} in the statistically stationary state then behave as two ``lanes'' moving in opposite directions, with rare vertical transitions depending on the barrier height. The behavior of different methods in this example is illustrated in \Cref{fig:two-lane} and similar to~\Cref{fig:overview} and its discussion above. Note in particular that here, operator learning produces the correct horizontal movement but vertically collapses onto the modes of the two lanes, and that FTM does not show mode collapse, but cannot capture rare transitions of the true sample paths between the two wells. Numerical parameters in \Cref{fig:two-lane} for concreteness: $\kappa = 2$, $a = 1$, $D_1 = 0.012$, $D_2 = 0.12$, $w_0 = 1.25$, $l_0 = 0.18$, $T = 3$, $x_1 \in [-L,L]/\sim$ for $L = 4$.
\end{example}

\section{Further properties of the probability current velocity}\label{appendix:prob-current}

Given the examples and discussion in \Cref{appendix:IntroExample}, one can ask in which situations the probability current velocity $v$ corresponds to the minimum kinetic energy velocity field that generates the correct time marginals via the ODE~\eqref{eq:ODEFlow}. We have the following proposition:

\begin{proposition}
Assume gradient drift $b(t,x) = -\nabla U(t,x)$, and that the diffusion matrix is isotropic, $\Sigma(t) = \sigma(t)^2 I_d$.
Then:
\begin{enumerate}
    \item the current velocity \(v\) is a gradient field,
    \begin{equation*}
        v(t,x)
        = -\nabla U(t,x) - \frac{\sigma(t)^2}{2}\nabla \log \rho(t,x)
        = -\nabla\!\left(U(t,x) + \frac{\sigma(t)^2}{2}\log \rho(t,x)\right)\,.
    \end{equation*}
    \item \(v\) is the unique minimizer of the kinetic energy functional
    \begin{equation*}
        \mathcal{K}(u)
        :=
        \int_0^T \int_{\mathbb{R}^d} \rho(t,x)\,\|u(t,x)\|^2_2 \,\,\mathrm{d} x\,\,\mathrm{d} t,
    \end{equation*}
    among all velocity fields \(u\) satisfying the continuity equation
    \begin{equation*}
        \partial_t \rho + \nabla \cdot (\rho u) = 0.
    \end{equation*}
\end{enumerate}
Hence, in this setting, the probability current velocity \(v\) coincides with the minimal kinetic energy velocity field associated with the prescribed marginal path \((\rho(t))_{t\in[0,T]}\).
\end{proposition}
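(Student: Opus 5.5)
Part 1 follows by substituting the assumptions into \eqref{eq:CurrentVelocity}. With $b=-\nabla U$ and $\Sigma(t)=\sigma(t)^2 I_d$ we get
\[
v = -\nabla U - \tfrac{\sigma(t)^2}{2}\nabla\log\rho .
\]
Because $\sigma(t)$ does not depend on $x$, the scalar factor can be moved inside the gradient, which gives $v=-\nabla\phi$ with $\phi(t,x):=U(t,x)+\tfrac{\sigma(t)^2}{2}\log\rho(t,x)$. This requires $\rho(t,\cdot)>0$ so that $\log\rho$ is defined. Positivity holds for nondegenerate diffusion, and I will assume it together with enough smoothness of $\rho$ and $U$.

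For Part 2, first recall from \eqref{eq:fokker-planck-current} that $v$ itself satisfies the continuity equation $\partial_t\rho+\nabla\cdot(\rho v)=0$, so $v$ is admissible. Now take any admissible $u$ with $\mathcal K(u)<\infty$ and write $u=v+w$. Subtracting the two continuity equations gives $\nabla\cdot(\rho w)=0$ for a.e.\ $t$. Expanding the square yields
\[
\mathcal K(u)=\mathcal K(v)+\mathcal K(w)+2\int_0^T\!\!\int \rho\langle v,w\rangle\,\mathrm dx\,\mathrm dt .
\]
Since $v=-\nabla\phi$, integration by parts turns the cross term into
\[
-2\int_0^T\!\!\int\langle\nabla\phi,\rho w\rangle\,\mathrm dx\,\mathrm dt=2\int_0^T\!\!\int\phi\,\nabla\cdot(\rho w)\,\mathrm dx\,\mathrm dt=0 .
\]
Hence $\mathcal K(u)=\mathcal K(v)+\mathcal K(w)\ge\mathcal K(v)$. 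Equality forces $\int\!\int\rho\|w\|^2=0$, so $w=0$ $\rho$-a.e., and this gives uniqueness in $L^2(\rho)$. This is the standard orthogonal decomposition $L^2(\rho)=\overline{\nabla C_c^\infty}\oplus\{w:\nabla\cdot(\rho w)=0\}$ familiar from Benamou--Brenier theory.

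The main obstacle is making the integration by parts rigorous, namely showing that the boundary terms at infinity vanish. The term $\phi\rho w$ need not be compactly supported, and $\phi$ contains $\log\rho$, which may grow at infinity (quadratically for Gaussian-like tails). My first choice is to cut off with smooth functions $\chi_R$: write $\int\rho\langle\nabla\phi,w\rangle\chi_R=-\int\phi\,\rho\langle w,\nabla\chi_R\rangle$ and send $R\to\infty$. By Cauchy--Schwarz this term is bounded by $\|w\|_{L^2(\rho)}\|\phi\nabla\chi_R\|_{L^2(\rho)}$. It therefore suffices that $\phi\in L^2(\rho(t))$, which holds for instance if $U$ and $\log\rho$ grow at most polynomially while $\rho$ has moments of all orders. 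I would state this as a standing regularity assumption, consistent with the ``under suitable regularity'' convention used elsewhere in the paper.

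An alternative that avoids growth conditions on $\phi$ is to characterize admissibility weakly. In that formulation one only needs $v\in\overline{\{\nabla\psi:\psi\in C_c^\infty\}}^{L^2(\rho)}$, and I would justify this by approximating $\phi$ with truncations. Finally, the last sentence of the statement follows directly from Part 2.
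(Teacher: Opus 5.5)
Your proof is correct, but for Part 2 it takes a different route from the paper.

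The paper argues in two steps. First, it shows there is at most one admissible gradient field: if $\nabla\psi$ is the difference of two, then $\nabla\cdot(\rho\nabla\psi)=0$, and multiplying by $\psi$ and integrating gives $\int\rho\|\nabla\psi\|_2^2\,\mathrm dx=0$. Second, it invokes without proof the known fact that the minimizer of $\mathcal K$ under the continuity constraint is a gradient field, so that minimizer must coincide with $v$.

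You instead prove the Pythagorean identity $\mathcal K(v+w)=\mathcal K(v)+\mathcal K(w)$ for every perturbation with $\nabla\cdot(\rho w)=0$. This gives minimality and $\rho$-a.e.\ uniqueness at once. The underlying integration by parts is the same: the paper's uniqueness step is your cross-term computation with $w=\nabla\psi$. Your version, however, needs neither existence of a minimizer nor the external characterization of it as a gradient. That characterization in general only places the minimizer in the $L^2(\rho)$-closure of gradients, a case your orthogonality formulation covers automatically.

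You are also more careful than the paper's appeal to ``standard decay at infinity''. You correctly note that $\phi$ contains $\log\rho$, and you give a cutoff argument under $\phi(t,\cdot)\in L^2(\rho(t))$. You should add $\mathcal K(v)<\infty$ and time-integrability of the Cauchy--Schwarz bound to your standing assumptions.

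The paper's route buys the structural statement ``minimal-energy field $=$ unique admissible gradient'', which motivates its subsequent remarks. Your argument extends just as easily to the anisotropic remark, though. For $b=-\Sigma\nabla U$, the cross term in the $\Sigma^{-1}$-weighted energy is $2\int\rho\langle\Sigma^{-1}v,w\rangle\,\mathrm dx$ with $\Sigma^{-1}v$ a gradient, so it vanishes by the same computation.
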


\begin{proof}
Under the assumptions \(b=-\nabla U\) and \(\Sigma=\sigma^2 I_d\), we compute
\begin{align*}
    v(t,x)
    &= b(t,x) - \frac12 \Sigma(t)\nabla \log \rho(t,x) = -\nabla U(t,x) - \frac{\sigma(t)^2}{2}\nabla \log \rho(t,x) \\
    &= -\nabla\!\left(U(t,x) + \frac{\sigma(t)^2}{2}\log \rho(t,x)\right).
\end{align*}
Thus \(v\) is indeed a gradient field.

Next, \(v\) is admissible for the marginal-matching problem because it satisfies
\begin{equation*}
    \partial_t \rho + \nabla\cdot(\rho v)=0.
\end{equation*}
We now show that among all admissible vector fields, there is at most one admissible gradient field. Suppose
\[
u_1 = \nabla \phi_1,
\qquad
u_2 = \nabla \phi_2
\]
both satisfy
\begin{equation*}
    \partial_t \rho + \nabla\cdot(\rho u_i)=0,
    \qquad i=1,2.
\end{equation*}
Subtracting the two equations gives
\begin{equation*}
    \nabla\cdot\bigl(\rho (u_1-u_2)\bigr)=0.
\end{equation*}
Let
\[
w := u_1-u_2 = \nabla \psi,
\qquad
\psi := \phi_1-\phi_2.
\]
Then
\begin{equation*}
    \nabla\cdot(\rho \nabla \psi)=0.
\end{equation*}
Assuming standard decay at infinity (or, alternatively, no-flux boundary conditions on a bounded domain), we multiply by \(\psi\) and integrate:
\begin{align*}
    0
    = \int_{\mathbb{R}^d} \psi \,\nabla\cdot(\rho \nabla \psi)\,\,\mathrm{d} x
    = - \int_{\mathbb{R}^d} \rho\, \lVert \nabla \psi \rVert_2^2 \,\,\mathrm{d} x.
\end{align*}
Hence
\[
\nabla \psi = 0
\qquad \text{a.e. on } \{\rho>0\},
\]
and therefore
\[
u_1=u_2
\qquad \text{a.e. with respect to } \rho(t,x)\,\,\mathrm{d} x\,\,\mathrm{d} t.
\]
So there is at most one admissible gradient field.

Finally, the minimizer of the kinetic energy functional
\[
\mathcal{K}(u)=\int_0^T \int_{\mathbb{R}^d} \rho\, \|u\|^2_2 \,\,\mathrm{d} x\,\,\mathrm{d} t
\]
subject to
\[
\partial_t\rho+\nabla\cdot(\rho u)=0
\]
is known to be a gradient field. Since \(v\) is both admissible and a gradient field, and such a field is unique, it follows that \(v\) must coincide with the unique minimizer. Therefore \(v\) is exactly the minimal kinetic energy velocity field.
\end{proof}

\begin{remark}
In general, one should \emph{not} claim that \(v\) is smoother, or has a smaller Lipschitz constant, than the minimal kinetic energy field. In the isotropic gradient-drift setting, the correct statement is stronger: $v = u_{\mathrm{min}}$
where \(u_{\mathrm{min}}\) denotes the minimal kinetic energy admissible velocity field. The equality holds because \(v\) is the unique admissible gradient field, not because it enjoys any a priori superior smoothness.
\end{remark}

\begin{remark}
If \(\Sigma(t)\) is a general anisotropic positive definite matrix, then even if
\[
b(t,x) = -\nabla U(t,x),
\]
the current velocity
\[
v(t,x)=b(t,x)-\frac12 \Sigma(t)\nabla \log \rho(t,x)
\]
need not be a Euclidean gradient field. Thus the implication
\[
\text{gradient drift} \quad \Longrightarrow \quad \text{gradient current velocity}
\]
is automatic only in the isotropic case \(\Sigma(t)=\sigma(t)^2 I_d\), or in special commutative situations. A natural anisotropic gradient-flow structure appears, however, when
\[
b(t,x) = -\Sigma(t)\nabla U(t,x).
\]
Then
\begin{equation*}
    v(t,x)
    = -\Sigma(t)\nabla\!\left(U(t,x)+\frac12 \log \rho(t,x)\right),
\end{equation*}
and \(v\) is the minimizer of the \(\Sigma(t)^{-1}\)-weighted kinetic energy
\begin{equation*}
    \int_0^T \int_{\mathbb{R}^d} \rho(t,x)\, \langle u(t,x),  \Sigma(t)^{-1} u(t,x) \rangle\,\,\mathrm{d} x\,\,\mathrm{d} t
\end{equation*}
subject to the same continuity equation constraint.
\end{remark}

\begin{remark}
Therefore, if one studies a pure overdamped Langevin diffusion with isotropic noise,
\[
\,\mathrm{d} X(t) = -\nabla U(t,X(t))\,\,\mathrm{d} t + \sigma(t)\,\,\mathrm{d} W(t),
\]
then the current velocity field \(v\) and the minimal kinetic energy velocity field are the same object. In such an example, there is no principled distinction to visualize between the two. To obtain a visible difference, one needs a genuinely irreversible example, for instance by adding a rotational component to a gradient drift as in the examples of \Cref{appendix:IntroExample}.
\end{remark}

\section{Proofs}

\subsection{Proof of Stratonovich integral identity, and loss derivation}\label{appendix:StratDotProduct}
Here, for completeness, we show that the well-known identity~\eqref{eq:StratExp} used in the main text holds. In order to show that
\begin{align*}
    \mathbb{E} \left[ \int_0^T v_\theta(t, X(t)) \circ \mathrm{d} X(t) \right] = \mathbb{E} \left[ \int_0^T \langle v_\theta(t, X(t)), v(t, X(t)) \rangle  \mathrm{d} t \right]\,,
\end{align*}
we first convert the Stratonovich integral on the left-hand side to an It{\^o} integral, such that
\begin{align*}
&\int_0^T v_\theta(t, X(t)) \circ  \mathrm{d} X(t) = \int_0^T \langle v_\theta(t, X(t)),  \mathrm{d} X(t) \rangle + \frac{1}{2} \int_0^T \Tr \left[\Sigma(t) \nabla v_\theta(t, X(t)) \right]  \mathrm{d} t \\
&= \int_0^T \langle v_\theta(t, X(t)), b(t, X(t)) \rangle  \mathrm{d} t + \int_0^T \langle v_\theta(t, X(t)), A(t)  \mathrm{d} W(t) \rangle + \frac{1}{2} \int_0^T \Tr \left[\Sigma(t) \nabla v_\theta(t, X(t)) \right]  \mathrm{d} t\,.
\end{align*}
Taking the expectation over $X$ on both sides then removes the second term on the right-hand side, since the It{\^o} integral is non-anticipating. Finally, an integration by parts, supposing boundary terms are $0$, yields
\begin{align*}
&\mathbb{E} \left[\int_0^T \Tr \left[\Sigma(t) \nabla v_\theta(t, X(t)) \right] \mathrm{d} t \right] = \int_0^T \int_{\mathbb{R}^d} \Tr \left[\Sigma(t) \nabla v_\theta(t, x) \right] \rho(t, x) \mathrm{d} x \mathrm{d} t \\
&= \sum_{i,j=1}^d \int_0^T \int_{\mathbb{R}^d} \Sigma_{ij}(t) \partial_i v_{\theta,j}(t, x) \rho(t, x) \mathrm{d} x \mathrm{d} t = -\sum_{i,j=1}^d \int_0^T \int_{\mathbb{R}^d} \Sigma_{ij}(t) v_{\theta,j}(t, x) \underbrace{\partial_i \rho(t, x)}_{=\rho \partial_i \log \rho} \mathrm{d} x \mathrm{d} t\\
&= -\mathbb{E} \left[\int_0^T \langle v_\theta(t, X(t)), \Sigma(t) \nabla \log \rho(t, X(t)) \rangle \mathrm{d} t \right]\,,
\end{align*}
which completes the proof.\\

We can hence write the objective~\eqref{eq:Loss:Intermediate01} for the velocity field to be learned as
\begin{align*}
J_T(\theta) &= \mathbb{E}_{t \sim \mathcal{U}([0, T]), X(t) \sim \rho(t)} \left[\|v_{\theta}(t, X(t))\|^2_2 - 2 \langle v(t, X(t)),  v_{\theta}(t, X(t)) \rangle\right]\\
&= \frac{1}{T} \left( \int_0^T \mathbb{E}_{X(t) \sim \rho(t)} \left[ \lVert v_\theta(t, X(t)) \rVert_2^2 - 2 \langle v(t, X(t)), v_\theta(t, X(t))  \rangle  \right] \mathrm{d} t \right) \\
&= \frac{1}{T}\mathbb{E}_{\omega \sim \mathbb{P}} \left[ \int_0^T \lVert v_\theta(t, X_\omega(t)) \rVert_2^2 \mathrm{d}t - 2 \int_0^T v_\theta(t, X_\omega(t)) \circ \mathrm{d}X_\omega(t) \right]\,,
\end{align*}
and consider discretizations of the Stratonovich integral to obtain tractable estimators given trajectory data~\eqref{eq:TrainingData}. For example, using the composite trapezoidal rule with step size $h>0$ (such that $T/h = K$ and $t_j = j h$, $j = 0, \dots, K$) for the Stratonovich integral, and a left Riemann sum for the squared velocity field, we get
\begin{align*}
J_{h,T}(\theta) &= \frac{1}{T} \mathbb{E}_{\omega \sim \mathbb{P}} \bigg[ \sum_{j = 1}^{K-1} \left \lVert v_\theta \left(t_j, X_\omega(t_j) \right) \right \rVert_2^2 h \\
&\qquad \qquad \qquad \quad - 2 \frac{v_\theta\left( t_{j+1}, X_\omega(t_{j+1})\right) + v_\theta\left(t_j, X_\omega(t_j)\right)}{2} \left(X_\omega(t_{j+1}) - X_\omega(t_j)\right) \bigg] \,.
\end{align*}

\subsection{Proof of \Cref{prop:msebound}}
\label{appendix:ProofOfMSEBound}
We first want to remark here that the stated variance bound in~\eqref{eq:MSEBound} of the main text will often be overly pessimistic because of the last step in~\eqref{eq:var-upper} below; since we deal with time averages, one could use ergodicity and mixing results to get much tighter variance bounds. We do not pursue this here since our simple, uniform-in-time bounds are enough to show that the variance of the objective $\widehat{J}_{h,\tau}(\theta)$ remains bounded as $h \downarrow 0$ at fixed $N$, in contrast to the symmetric difference-based objective from \Cref{prop:mse-sym-diff} below.

\begin{proof}[Proof of \Cref{prop:msebound}]
We split the mean squared error into variance plus bias squared, i.e.,
\begin{align}
\mathbb{E} \left[\left(\widehat{J}_{h,\tau}(\theta) - J_\tau(\theta) \right)^2 \right] = \underbrace{\mathbb{E} \left[\left(\widehat{J}_{h,\tau}(\theta) - J_{h,\tau}(\theta) \right)^2 \right]}_{\leq \left(2 V^4 + 24 \left( V^2 b_{\text{max}}^2 + \frac{\sigma_{\text{max}} V^2}{\tau} + C_{\text{div}}^2 \right) + 2 \tilde{C} h \right) \frac{1}{N} } + \underbrace{\left(J_{h,\tau}(\theta) - J_\tau(\theta) \right)^2}_{\leq 4 C^2 V^2 h^2}\,,
\label{eq:mse-strato}
\end{align}
where the indicated inequalities are what we want to show here. The bias arises from the discretization of the continuous time integrals.  Under sufficient regularity of the drift $b$, diffusion matrix $A$, and velocity field $v_\theta$,
the trapezoidal approximation of the Stratonovich integral converges with weak order $O(h)$~\cite{kloeden-platen:1992}. Therefore, there exists a constant $C > 0$ such that the bias is bounded by $2 C V h$ (note that the factor of $2$ is introduced here by convention, to align with \Cref{prop:mse-sym-diff} below), and squaring this yields the quoted upper bound. The variance is
\begin{align}
\mathbb{E} \left[\left(\widehat{J}_{h,\tau}(\theta) - J_{h,\tau}(\theta) \right)^2 \right] = \frac{1}{N} \operatorname{Var}\left( \widehat{J}^{(1)}_{h,\tau}(\theta) \right) \leq \frac{1}{N} \mathbb{E} \left[ \left( \widehat{J}^{(1)}_{h,\tau}(\theta) \right)^2 \right]\,.
\label{eq:var-upper}
\end{align}
We want to approximate the right-hand side with the continuous-time form of the objective for simplicity. The error that we commit by using the continuous approximation
\begin{align*}
\widehat{J}^{(1)}_\tau(\theta;t_1) := \frac{1}{\tau} \int_{t_1}^{t_1+\tau} \lVert v_\theta(s, X_\omega^{(1)}(s)) \rVert_2^2 \mathrm{d}s - \frac{2}{\tau}  \int_{t_1}^{t_1+\tau}  v_\theta(s, X_\omega^{(1)}(s)) \circ \mathrm{d} X_\omega(s)\,,
\end{align*}
can be written as $\widehat{J}^{(1)}_{h,\tau}(\theta) = \widehat{J}_\tau^{(1)}(\theta;t_1) + E_h$, with mean squared approximation error of the trapezoidal rule $\mathbb{E}[E_h^2] \leq \tilde{C} h$ (strong convergence order $1/2$ in general~\cite{kloeden-platen:1992}; here, again, $\tilde{C}$ depends on $b,A,v_\theta$ and derivative bounds; and we could also get an $O(h^2)$ bound e.g.\ for commutative noise). Since $(a+b)^2 \leq 2a^2 + 2b^2$, we have $\mathbb{E}\left[ \left( \widehat{J}^{(1)}_{h,\tau}(\theta) \right)^2 \right] \leq 2 \mathbb{E}\left[ \left( \widehat{J}^{(1)}_\tau(\theta;t_1) \right)^2 \right] + 2 \tilde{C} h$. Then it remains to estimate the second moment of the continuous objective (again with $(a-b)^2 \leq 2 a^2 + 2 b^2$):
\begin{align*}
\mathbb{E} \left[ \left(\widehat{J}^{(1)}_\tau(\theta) \right)^2 \right] &\leq \underbrace{2 \mathbb{E}_{t \sim {\cal U}([0, T - \tau]),X(t) \sim \rho(t)} \left[ \left(\frac{1}{\tau} \int_{t}^{t+\tau} \lVert v_\theta(s, X(s)) \rVert_2^2 \mathrm{d}s \right)^2 \right]}_{\leq 2 V^4} \\
& \quad+ 8 \mathbb{E}_{t \sim {\cal U}([0, T - \tau]), \omega \sim \mathbb{P}} \left[ \left(\frac{1}{\tau}\int_{t}^{t+\tau} v_\theta(s, X_\omega(s)) \circ \mathrm{d} X_\omega(s) \right)^2 \right]\,.
\end{align*}
To bound the expectation of the squared Stratonovich integral, fix $t$ in the following, i.e., use the law of total expectation. Convert the Stratonovich integral back to an It{\^o} integral:
\begin{align*}
&\frac{1}{\tau} \int_{t}^{t+\tau} v_\theta(s, X_\omega(s)) \circ \mathrm{d} X_\omega(s) = \frac{1}{\tau}\int_{t}^{t+\tau} \langle v_\theta(s, X_\omega(s)), b(s, X_\omega(s)) \rangle \mathrm{d} s \\
&\qquad \qquad \quad + \frac{1}{\tau}\int_{t}^{t+\tau} \langle v_\theta(s, X_\omega(s)), A(s) \mathrm{d} W_\omega(s) \rangle
+ \frac{1}{2\tau} \int_{t}^{t+\tau} \text{Tr}\left[\Sigma(s) \nabla v_\theta(s, X_\omega(s)) \right] \mathrm{d} s\,.
\end{align*}
Using the inequality $(a+b+c)^2 \leq 3(a^2 + b^2 + c^2)$ (by Cauchy-Schwarz), we bound the expectations over $\omega$ of the individual terms squared:
\begin{enumerate}
    \item Using the Cauchy-Schwarz inequality for integrals and the uniform bounds $\lVert b(t,x) \rVert_2 \leq b_{\text{max}}$ and $\lVert v_\theta(t,x) \rVert_2 \leq V$, we get
\begin{align*}
\mathbb{E}_\omega \left[ \left( \frac{1}{\tau} \int_{t}^{t+\tau}  \langle v_\theta(s, X_\omega(s)), b(s, X_\omega(s)) \rangle \mathrm{d}s \right)^2 \right] \leq V^2 b_{\text{max}}^2\,.
\end{align*}
\item Using the It{\^o} isometry and the bound $\Sigma(t) = A(t)A(t)^\top \preceq \sigma_{\text{max}} I_d$, we have
\begin{align*}
&\mathbb{E}_\omega  \left[ \left( \frac{1}{\tau} \int_{t}^{t+\tau} \langle v_\theta(s, X_\omega(s)), A(s) \mathrm{d} W_\omega(s) \rangle \right)^2 \right]\\
&= \frac{1}{\tau^2} \mathbb{E}_\omega  \left[ \int_{t}^{t+\tau} \langle v_\theta(s, X_\omega(s)),  \Sigma(s) v_\theta(s, X_\omega(s)) \rangle \mathrm{d}s \right] \\
&\leq \frac{1}{\tau^2} \mathbb{E}_\omega  \left[ \int_{t}^{t+\tau} \sigma_{\max} \lVert v_\theta(s, X_\omega(s)) \rVert_2^2 \mathrm{d}s \right] \leq \frac{\sigma_{\max} V^2}{\tau}\,.
\end{align*}
\item Lastly, using the assumption $C_{\text{div}} = \sup_{t,x} \frac{1}{2} \lvert \text{Tr}\left[\Sigma(t) \nabla v_\theta(t,x) \right] \rvert < \infty$, we have 
\begin{align*}
\mathbb{E}_\omega  \left[ \left( \frac{1}{2\tau} \int_{t}^{t+\tau} \Tr \left[\Sigma(s) \nabla v_\theta(s, X_{\omega}(s)) \right] \mathrm{d}s \right)^2 \right] \leq C_{\text{div}}^2\,,
\end{align*}
\end{enumerate}
which completes the proof since all of these bounds are independent of $ t \sim {\cal U}([0, T - \tau])$.
\end{proof}

\subsection{Variance of the one-step FTM loss as $h \downarrow 0$, and relation to symmetric differences}\label{appendix:SymEstimatorUnstable}

In this section, we discuss the one-step FTM objective~\eqref{eq:FTM:OneStepEstimator} from the main text. We first note that the loss $J_{\text{FTM}}(\theta)$, prior to empiricalization, can be written as
\begin{align}
    J_{\text{FTM}}(\theta) &= h^{-1} \mathbb{E}_{t \sim \mathcal{U}([h, T-h]), \omega \sim \mathbb{P}} \left[\|v_{\theta}(t, X_\omega(t))\|_2^2 h - \langle v_{\theta}(t,X_\omega(t)), X_\omega(t+h) - X_\omega(t-h)\rangle \right] \nonumber\\
    &= \mathbb{E}_{t \sim \mathcal{U}([h, T-h]), \omega \sim \mathbb{P}} \left[\|v_{\theta}(t, X_\omega(t))\|^2_2 - 2 \langle Y_{h,\omega}(t) , v_{\theta}(t, X_\omega(t))\rangle\right]\,,
    \label{eq:FTM-via-sym-diff}
\end{align}
where we denote the symmetric finite difference of the SDE paths with step size $h$ by
\begin{equation*}
  Y_{h,\omega}(t) := \frac{X_\omega(t+h) - X_\omega(t-h)}{2h}\,. 
\end{equation*} 
Indeed, while the one-step FTM objective~\eqref{eq:FTM:OneStepEstimator} has been obtained from~\eqref{eq:strato-chunk-obj} in the main text in the limiting case $\tau = 2h$ and midpoint discretization of the Stratonovich integral, we can see from~\eqref{eq:FTM-via-sym-diff} that there is another, in some sense complementary way of viewing~\eqref{eq:FTM:OneStepEstimator} based on the discussion in \Cref{sec:motivate-current} of the main text: In~\eqref{eq:FTM-via-sym-diff}, by the law of total expectation, we have
\begin{align*}
    \mathbb{E}_{\omega \sim \mathbb{P}} \left[ \langle Y_{h,\omega}(t) , v_{\theta}(t, X_\omega(t))\rangle\right] = \mathbb{E}_{\omega \sim \mathbb{P}} [ \langle \underbrace{\mathbb{E} \left[ Y_{h,\omega}(t) \mid X_\omega(t) \right]}_{\overset{\eqref{eq:SymIncrement}}{=}v_h(t,X_\omega(t))} , v_{\theta}(t, X_\omega(t))\rangle]\,,
\end{align*}
i.e., we recognize here the approximation of the probability current velocity via symmetric differences as in~\eqref{eq:SymIncrement}.
For $h \downarrow 0$, this term converges to $\mathbb{E}_{\omega \sim \mathbb{P}} \left[ \langle v(t, X_\omega(t)) , v_{\theta}(t, X_\omega(t))\rangle\right]$ by \Cref{sec:motivate-current}. In summary, as an alternative to the route to the FTM loss~\eqref{eq:FTM:OneStepEstimator} via the Stratonovich integral identity~\eqref{eq:StratExp} and subsequent discretization and $\tau = 2h$ as presented in the main text, we could hence have started directly from the intractable loss~\eqref{eq:Loss:Intermediate01}, approximated the true, unknown probability current velocity $v$ by the symmetric difference $v_h$ at finite $h > 0$, and then used the law of total expectation to arrive at~\eqref{eq:FTM:OneStepEstimator}. We chose to start from the Stratonovich integral identity~\eqref{eq:StratExp} in the main text to closely connect the loss to the available literature in statistical physics and stochastic thermodynamics~\cite{frishman-ronceray:2020,boffi-vanden-eijnden:2025,lyu-ray-crutchfield:2025}, and since chunking -- which prevents the variance blowup as $h \downarrow 0$ at fixed $N$ -- is somewhat more natural in this setting. In hindsight, then, it is easy to realize that \textit{averaging} the symmetric difference estimator over $K_\tau$ time steps is equivalent, up to negligible boundary terms for $K_\tau \gg 1$, to the chunked estimator~\eqref{eq:FTMLoss}.

Let us discuss the aforementioned variance blowup of the one-step FTM loss in more detail. While in principle, this is a special case of the ``chunked'' result in \Cref{prop:msebound}, up to the different choice of discretization rule, we prefer to present a separate analysis and further comments on it for clarity. We consider the discretized objective
\begin{equation*}
    J_h(\theta) = \mathbb{E}_{t \sim \mathcal{U}([h, T-h]), \omega \sim \mathbb{P}} \left[\|v_{\theta}(t, X_\omega(t))\|^2_2 - 2 \langle Y_{h,\omega}(t) , v_{\theta}(t, X_\omega(t))\rangle\right]\,,
\end{equation*}
and the corresponding empirical estimator
\begin{equation}\label{eq:sym-diff-empirical}
\widehat{J}_h(\theta) = \frac{1}{N}\sum_{i = 1}^N \|v_{\theta}(t_i, X^{(i)}(t_i))\|_2^2 - 2 \langle Y^{(i)}_h(t_i), v_{\theta}(t_i, X^{(i)}(t_i)\rangle\,,
\end{equation}
which provides a practically tractable objective to replace $J(\theta)$ from~\eqref{eq:Loss:Intermediate01} in the main text.
In the following, we will make the two assumptions:
\begin{enumerate}
    \item For fixed $\theta \in \mathbb{R}^p$, the velocity field $v_{\theta}$ is bounded in the sense that there exists $V > 0$ such that 
\begin{equation*}
\|v_{\theta}(t, x)\|_2 \leq V \quad \text{ for all } t \in [0,T], \; x \in \mathbb{R}^d\,.
\end{equation*}
    \item The symmetric increment \eqref{eq:SymIncrement} is first-order accurate in the sense
\begin{equation} \label{eq:AsmFirstOrderAccurate}
\|v_h(t, x) - v(t, x)\|_2 \leq C h\,,
\end{equation}
for a constant $C > 0$ and all $t \in [h, T - h]$, $x \in \mathbb{R}^d$. We formulate this as an assumption here for brevity, but such a bound can be derived by using standard weak Taylor arguments for the forward and reverse process~\cite{kloeden-platen:1992}.
\end{enumerate}
We have the following upper bound on the mean-squared error of~\eqref{eq:sym-diff-empirical} under these assumptions:

\begin{proposition} \label{prop:mse-sym-diff}
We have
\begin{equation}\label{eq:MSEBound-symdiff}
\mathbb{E}\left[(\widehat{J}_h(\theta) - J(\theta))^2\right] \leq 4 C^2 V^2 h^2 + 6 V^2 d \sigma_{\max} (N h)^{-1} + \left(3V^4 + 6 b_{\max}^2 V^2 \right) N^{-1}\,.
\end{equation}
\end{proposition}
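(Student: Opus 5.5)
We split the mean squared error into a squared bias and a variance,
\[
\mathbb{E}\big[(\widehat{J}_h(\theta) - J(\theta))^2\big] = \big(J_h(\theta) - J(\theta)\big)^2 + \mathbb{E}\big[(\widehat{J}_h(\theta) - J_h(\theta))^2\big],
\]
where $J(\theta)$ is read as the objective~\eqref{eq:Loss:Intermediate01} with $t \sim \mathcal{U}([h,T-h])$, so that both objectives average over the same times. This mirrors the decomposition~\eqref{eq:mse-strato} used for \Cref{prop:msebound}.

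\textbf{Bias.} First condition on $(t, X_\omega(t))$ and apply the law of total expectation, as in the display preceding the proposition. This replaces $Y_{h,\omega}(t)$ by $v_h(t,X_\omega(t))$, giving
\[
J_h(\theta) - J(\theta) = -2\,\mathbb{E}\big[\langle v_h(t,X(t)) - v(t,X(t)), v_\theta(t,X(t))\rangle\big].
\]
Cauchy--Schwarz together with assumption~\eqref{eq:AsmFirstOrderAccurate} and $\|v_\theta\|_2\le V$ bounds the absolute value by $2CVh$. Squaring yields the term $4C^2V^2h^2$.

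\textbf{Variance.} The summands of~\eqref{eq:sym-diff-empirical} are i.i.d. Hence the variance equals $N^{-1}\operatorname{Var}$ of a single summand, which is at most $N^{-1}$ times its second moment. Write a single summand as $a - 2c$ with $a=\|v_\theta\|_2^2$ and $c=\langle Y_h, v_\theta\rangle$. We then expand $Y_h$ using the SDE:
\[
Y_{h}(t) = \frac{1}{2h}\int_{t-h}^{t+h} b(s,X(s))\,\mathrm{d}s + \frac{1}{2h}\int_{t-h}^{t+h} A(s)\,\mathrm{d}W(s).
\]
This splits $c = c_1 + c_2$ into a drift part and a noise part. Applying $(x+y+z)^2\le 3(x^2+y^2+z^2)$ to $a - 2c_1 - 2c_2$ gives
\[
\mathbb{E}[(a-2c)^2] \le 3\,\mathbb{E}[a^2] + 12\,\mathbb{E}[c_1^2] + 12\,\mathbb{E}[c_2^2].
\]
The first term is at most $3V^4$. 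For the drift term, the drift average has norm at most $b_{\max}$, so $c_1^2\le V^2 b_{\max}^2$.

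\textbf{Noise term.} Here lies the main obstacle. The vector $v_\theta(t,X(t))$ is evaluated at the midpoint of the window, so it is not independent of the backward half of the Brownian increment. Consequently we cannot apply It\^o isometry after conditioning as a martingale. We therefore bound crudely: $c_2^2 \le V^2 \|G\|_2^2$ with
\[
G := \frac{1}{2h}\int_{t-h}^{t+h}A(s)\,\mathrm{d}W(s).
\]
This $G$ is a Gaussian vector with covariance $(2h)^{-2}\int_{t-h}^{t+h}\Sigma(s)\,\mathrm{d}s$. Its expected squared norm is the trace of that covariance, which is at most $d\sigma_{\max}/(2h)$. This gives $12\,\mathbb{E}[c_2^2]\le 6V^2 d\sigma_{\max}h^{-1}$.

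The crude Cauchy--Schwarz step is exactly what produces the factor $d$, and it is why the noise contribution scales like $(Nh)^{-1}$ rather than being averaged out. The bookkeeping that should reproduce precisely the constants $3V^4 + 6b_{\max}^2V^2$ needs care: the split must be arranged so that the drift term carries total coefficient $6$ rather than $12$. One option is to group $2c_1$ with $a$ before expanding, or to use $\mathbb{E}[G]=0$ to kill cross terms where the integrand is independent. If the constants come out slightly larger, the qualitative statement is unchanged. Summing the bias and the $N^{-1}$-scaled variance terms gives~\eqref{eq:MSEBound-symdiff}.
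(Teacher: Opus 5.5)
Your argument is the paper's proof step for step. It uses the same bias/variance split, with $J$ averaged over $t\sim\mathcal{U}([h,T-h])$ as the paper also does implicitly. It uses the same bias bound by conditioning on $X(t)$, the same drift/noise split of $Y_h$, the same Cauchy--Schwarz plus It{\^o}-isometry bound $\mathbb{E}\|G\|_2^2\le d\sigma_{\max}/(2h)$, and the same factor-$3$ combination. Your caveat about the constant is warranted, and it applies to the paper as well. The paper bounds $\operatorname{Var}[-2\langle D_{h,\omega},v_\theta\rangle]\le 4b_{\max}^2V^2$ and then uses $\operatorname{Var}[Z_1+Z_2+Z_3]\le 3\sum_i\operatorname{Var}[Z_i]$, which yields $12b_{\max}^2V^2$, not the stated $6b_{\max}^2V^2$. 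So both arguments, as written, prove the bound with $12$ in place of $6$.

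Neither of your proposed repairs closes this. Set $P:=a-2c_1$ and $Q:=-2c_2$, and use $\operatorname{Var}(P+Q)\le(1+\epsilon)\operatorname{Var}(P)+(1+\epsilon^{-1})\operatorname{Var}(Q)$. Keeping the noise coefficient at $3$ forces $\epsilon=\tfrac12$. The crude bound $\mathbb{E}[P^2]\le(V^2+2b_{\max}V)^2\le 2V^4+8b_{\max}^2V^2$ then again gives $3V^4+12b_{\max}^2V^2$. The cross terms with $G$ do not vanish either, for exactly the reason you identified: $v_\theta(t,X(t))$ and the drift average depend on the backward increment.

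What does work is to bound $\operatorname{Var}(P)$ through the range of $P$ rather than through $\mathbb{E}[P^2]$. With $r=\|v_\theta\|_2\le V$ and $\|D_h\|_2\le b_{\max}$, you have $r^2-2b_{\max}r\le P\le r^2+2b_{\max}r$.
\begin{itemize}
\item If $b_{\max}\ge V$, the lower bound is minimized at $r=V$, so $|P-V^2|\le 2b_{\max}V$ and hence $\operatorname{Var}(P)\le 4b_{\max}^2V^2$.
\item If $b_{\max}<V$, then $P\in[-b_{\max}^2,(V+b_{\max})^2-b_{\max}^2]$. Hence $\operatorname{Var}(P)\le\tfrac14(V+b_{\max})^4\le V^4+3b_{\max}^2V^2$.
\end{itemize}
In both cases $\operatorname{Var}(P)\le 2V^4+4b_{\max}^2V^2$, so
\[
\operatorname{Var}(P+Q)\le\tfrac32\operatorname{Var}(P)+3\,\mathbb{E}[Q^2]\le 3V^4+6b_{\max}^2V^2+6V^2d\sigma_{\max}h^{-1}.
\]
After dividing by $N$ and adding the squared bias $4C^2V^2h^2$, this is exactly the stated bound.
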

The first term on the right-hand side of~\eqref{eq:MSEBound-symdiff} bounds the squared bias $(J_{h}(\theta) - J(\theta))^2 \lesssim h^2$, and the other terms bound the variance $\mathbb{E} \left[(\widehat{J}_{h}(\theta) - J_{h}(\theta))^2 \right] \lesssim (Nh)^{-1}$ for small $h$, which diverges at fixed $N$ as $h \downarrow 0$. This suggests that as the time step size $h$ decreases, more and more samples $N$ are necessary to keep the mean-squared error bounded. A possible choice is to balance the bias and variance by picking $h^2 \approx (Nh)^{-1}$, or $N \approx 1 / h^3$ (ignoring constants). While \Cref{prop:mse-sym-diff} a priori only gives an upper bound on the mean-squared error, a matching lower bound $\gtrsim (Nh)^{-1}$ can be shown, e.g., when assuming uniform ellipticity $\Sigma(t) \succeq \sigma_{\min}I_d$ for all $t \in [0,T]$ and $\sigma_{\min} > 0$. We find it more instructive to instead present an explicit calculation of the mean-squared error of the symmetric finite difference objective for the special case of $(X(t))_{t \in [0,T]}$ a one-dimensional Brownian motion, which we formulate as a lemma.

\begin{lemma}\label{lemma:1d-bm-error}
    Consider the $(d = 1)$-dimensional SDE $\mathrm{d}X(t) = \mathrm{d} W(t)$, $X(0) \sim {\cal N}(0,1)$, on the time interval $t \in [0,1]$. Then we have the sharp asymptotic equivalence $\mathbb{E}\left[(\widehat{J}_h(\theta) - J(\theta))^2\right] \sim c (Nh)^{-1}$ as $h \downarrow 0$ for a constant $c > 0$ (explicitly given by $c = 2 \int_0^1 \mathbb{E}_{Z \sim {\cal N}(0,1)}\left[v_\theta(t, \sqrt{1 + t} Z)^2 \right] \mathrm{d}t$ here).
\end{lemma}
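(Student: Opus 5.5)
We follow the bias--variance split used in the proof of \Cref{prop:mse-sym-diff}:
\[
\mathbb{E}[(\widehat{J}_h(\theta)-J(\theta))^2] = \tfrac1N \operatorname{Var}(\xi_h) + (J_h(\theta)-J(\theta))^2,
\]
where $\xi_h := v_\theta(t,X(t))^2 - 2Y_h(t)v_\theta(t,X(t))$ is a single-sample summand with $t\sim\mathcal U([h,1-h])$. The plan is to show that $(J_h-J)^2$ vanishes faster than $(Nh)^{-1}$, and that $h\operatorname{Var}(\xi_h)\to c$.

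\emph{Bias.} Here $X(t) = X(0)+W(t)$, so $\rho(t) = \mathcal N(0,1+t)$. The true current velocity from \eqref{eq:CurrentVelocity} is
\[
v(t,x) = -\tfrac12\partial_x\log\rho(t,x) = \frac{x}{2(1+t)}.
\]
The symmetric increment $v_h$ can be computed exactly by Gaussian conditioning:
\begin{itemize}
\item $\mathbb{E}[X(t+h)-X(t)\mid X(t)]=0$;
\item $\mathbb{E}[X(t-h)\mid X(t)=x] = \frac{1+t-h}{1+t}\,x$.
\end{itemize}
Hence $v_h(t,x) = \frac{x}{2(1+t)} = v(t,x)$ exactly. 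The only bias then comes from the time window $[h,1-h]$ versus $[0,1]$, which is $O(h)$ for bounded $v_\theta$. So the squared bias is $O(h^2) = o((Nh)^{-1})$ provided $N h^3\to0$. Otherwise the statement should be read as the asymptotics of the variance term, and I will state this interpretation explicitly.

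\emph{Variance.} I will write
\[
Y_h(t) = \frac{W(t+h)-W(t-h)}{2h} = \frac{\Delta_+ + \Delta_-}{2h},
\]
with $\Delta_+ = W(t+h)-W(t)$ and $\Delta_- = W(t)-W(t-h)$. The increment $\Delta_+$ is independent of $X(t)$ and has variance $h$. The increment $\Delta_-$ is correlated with $X(t)$; it splits as $\Delta_- = \frac{h}{1+t}X(t) + \eta$, with $\eta$ independent of $X(t)$ and $\operatorname{Var}\eta = h\bigl(1-\frac{h}{1+t}\bigr)$.

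Therefore
\[
Y_h = \frac{X(t)}{2(1+t)} + \frac{\Delta_++\eta}{2h},
\]
and the noise part has conditional variance $\frac{2h - h^2/(1+t)}{4h^2} = \frac{1}{2h}+O(1)$, independent of $X(t)$. Expanding $\mathbb{E}[\xi_h^2]$, every term stays $O(1)$ except
\[
4\,\mathbb{E}\Bigl[v_\theta^2\,\bigl(\tfrac{\Delta_++\eta}{2h}\bigr)^2\Bigr] = \frac{2}{h}\,\mathbb{E}_t\,\mathbb{E}\bigl[v_\theta(t,\sqrt{1+t}Z)^2\bigr] + O(1).
\]
The cross terms are linear in the independent mean-zero noise and vanish. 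The mean $(\mathbb{E}\xi_h)^2$ is $O(1)$.

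Multiplying by $h$ and letting $h\downarrow0$ gives $h\operatorname{Var}(\xi_h)\to c$ with $c = 2\int_0^1\mathbb{E}_Z[v_\theta(t,\sqrt{1+t}Z)^2]\,\mathrm{d}t$. The window $[h,1-h]$ and its normalization $1/(1-2h)$ affect the limit only at order $O(h)$, by boundedness of $v_\theta$. Positivity of $c$ holds whenever $v_\theta\not\equiv0$ on the support.

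\emph{Main obstacle.} The delicate step is the correlation between the backward increment and $X(t)$. The Gaussian decomposition of $\Delta_-$ handles this cleanly. The remaining care is bookkeeping: checking that all non-leading terms are uniformly $O(1)$, using boundedness of $v_\theta$, and justifying $\sim$ under the regime $Nh^3\to0$ in which the bias is negligible.
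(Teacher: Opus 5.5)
Your proposal is correct and follows essentially the same route as the paper. You reach the same Gaussian representation $Y_h(t) = \frac{X(t)}{2(1+t)} + \zeta$, with $\zeta$ independent of $X(t)$ and of variance $\frac{1}{2h} - \frac{1}{4(1+t)}$, by regressing the backward increment on $X(t)$ rather than writing out the covariance matrix, and the leading $\frac{2}{h}\mathbb{E}[v_\theta(t,\sqrt{1+t}Z)^2]$ term then follows exactly as in the paper. Your extra bookkeeping fills in points the paper dismisses with ``it suffices to consider the variance'': $v_h = v$ exactly here, so the squared bias is only the $O(h^2)$ time-window effect, which is automatically $o((Nh)^{-1})$ at fixed $N$; and $c>0$ requires $v_\theta \not\equiv 0$.
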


Intuitively, then, since at small $h$ only the short-time behavior of
the diffusion process matters also for more general SDEs driven by Brownian
noise, this will remain true in higher dimensions and with nonlinear drift
vector field and anisotropic diffusion matrix. Finally, as discussed further in the main text, we remark that constants do matter in practice, in the sense that the time step $h$ in many applications may not actually be so small as to see the blow-up of the variance at fixed $N$ as $h \downarrow 0$ of~\eqref{eq:sym-diff-empirical}. In such cases, the objective~\eqref{eq:sym-diff-empirical} does provide a viable option for learning the probability current velocity.

\begin{proof}[Proof of \Cref{prop:mse-sym-diff}]
    Decompose the difference $\widehat{J}_h(\theta) - J(\theta)$ as
\begin{equation}\label{eq:ProofIntermedI}
\widehat{J}_h(\theta) - J(\theta) = (\widehat{J}_h(\theta) - J_h(\theta)) + (J_h(\theta) - J(\theta))\,.
\end{equation}
The first term in \eqref{eq:ProofIntermedI} has mean zero because $\widehat{J}_h(\theta)$ is a Monte Carlo estimator of $J_h(\theta)$ with independent samples. We now use that $\mathbb{E}[(Z_1 + Z_2)^2] = \mathbb{E}[Z_1^2] + 2\mathbb{E}[Z_1Z_2] + \mathbb{E}[Z_2^2]$ and identify $Z_1$ with the first term of \eqref{eq:ProofIntermedI} and $Z_2$ with the second term of \eqref{eq:ProofIntermedI}. Notice that $Z_2$ (the second term of \eqref{eq:ProofIntermedI}) is deterministic so that
\begin{equation}\label{eq:Proof:SimplifiedEVarBiasTerm}
\mathbb{E}\left[\left(\widehat{J}_h(\theta) - J(\theta)\right)^2\right] = \mathbb{E}\left[(\widehat{J}_h(\theta) - J_h(\theta))^2\right] + (J_h(\theta) - J(\theta))^2\,.
\end{equation}
Let us first consider the squared bias term $(J_h(\theta) - J(\theta))^2$ in \eqref{eq:Proof:SimplifiedEVarBiasTerm}. With the linearity of the expectation, we obtain
\[
J_h(\theta) - J(\theta) = -2\mathbb{E}_{t \sim \mathcal{U}([h, T - h]), \omega \sim \mathbb{P}}\left[\left\langle Y_{h,\omega}(t) - v(t, X_{\omega}(t)), v_{\theta}(t, X_{\omega}(t)\right\rangle\right]\,.
\]
With the law of total probability, we obtain
\begin{align*}
\mathbb{E}\left[\left\langle Y_{h,\omega}(t) - v(t, X_{\omega}(t)), v_{\theta}(t, X_{\omega}(t)\right\rangle\right] = & \mathbb{E}\left[\mathbb{E}\left[\left\langle Y_{h,\omega}(t) - v(t, X_{\omega}(t)), v_{\theta}(t, X_{\omega}(t)\right\rangle | X_{\omega}(t)\right]\right]\\
= & \mathbb{E}\left[\left\langle \mathbb{E}\left[Y_{h,\omega}(t) - v(t, X_{\omega}(t)) | X_{\omega}(t)\right], v_{\theta}(t, X_{\omega}(t))\right\rangle\right]\,.
\end{align*}
Now use \eqref{eq:AsmFirstOrderAccurate} and the boundedness of $v_{\theta}$ to obtain with the Cauchy-Schwarz inequality
\[
|J_h(\theta) - J(\theta)| \leq 2 \mathbb{E}\left[\|\mathbb{E}[Y_{h, \omega}(t) - v(t, X_{\omega}(t)) | X_{\omega}(t)]\|_2 \|v_{\theta}(t, X_{\omega}(t))\|_2\right] \leq 2 C h V\,.
\]
Let us now bound the first term of \eqref{eq:Proof:SimplifiedEVarBiasTerm}. Because $\widehat{J}_h(\theta)$ is an unbiased estimator of $J_h(\theta)$ and all samples in the estimator are independent and identically distributed, we obtain
\[
\mathbb{E}\left[\left(\widehat{J}_h(\theta) - J_h(\theta)\right)^2\right] = \frac{1}{N}\operatorname{Var}_{t \in \mathcal{U}([h, T - h]), \omega \sim \mathbb{P}}\left[\|v_{\theta}(t, X_{\omega}(t))\|_2^2 - 2\langle Y_{h,\omega}(t), v_{\theta}(t, X_{\omega}(t))\rangle\right]\,.
\]
It will be convenient to decompose the random variable
\begin{align}
&\|v_{\theta}(t, X_{\omega}(t))\|_2^2 - 2 \langle Y_{h,\omega}(t), v_{\theta}(t, X_{\omega}(t))\rangle \nonumber\\
= &\|v_{\theta}(t, X_{\omega}(t))\|_2^2 - 2\langle D{h,\omega}(t), v_{\theta}(t, X_{\omega}(t))\rangle - 2\langle \eta_{h,\omega}(t), v_{\theta}(t, X_{\omega}(t))\rangle\,,\label{eq:Proof:OneSampleZ}
\end{align}
where we decomposed $Y_{h,\omega}(t)$ into 
\begin{align*}
Y_{h,\omega}(t) = D_{h,\omega}(t) + \eta_{h,\omega}(t)\,,\quad  \text{with } \begin{cases} D_{h,\omega}(t) &= \frac{1}{2h}\int_{t - h}^{t + h} b(s, X_{\omega}(s))\mathrm ds\,, \\\eta_{h,\omega}(t) &= \frac{1}{2h} \int_{t - h}^{t + h} A(s) \mathrm dW_\omega(s)\,.
\end{cases}
\end{align*}
Consider the term $\langle \eta_{h,\omega}(t), v_{\theta}(t, X_{\omega}(t))\rangle$ in \eqref{eq:Proof:OneSampleZ} first. For a fixed $h > 0$, this term is square integrable because the Cauchy-Schwarz inequality gives
\[
|\langle\eta_{h,\omega}(t), v_{\theta}(t, X_{\omega}(t))\rangle|^2 \leq \|\eta_{h,\omega}(t)\|^2 \|v_{\theta}(t, X_{\omega}(t))\|^2\,,
\]
and with the boundedness of $v_\theta$ and diffusion matrix $\Sigma(t) \preceq \sigma_{\max} I_d$, we obtain via It{\^o} isometry:
\[
\mathbb{E}[|\langle\eta_{h,\omega}(t), v_{\theta}(t, X_{\omega}(t))\rangle|^2] \leq V^2 \frac{d \sigma_{\text{max}}}{h} < \infty\,.
\]
Correspondingly, we bound the variance as 
\begin{equation}\label{eq:Proof:Var1}
\operatorname{Var}[-2\langle\eta_{h,\omega}(t), v_{\theta}(t, X_{\omega}(t))\rangle] \leq \mathbb{E}[|2\langle\eta_{h,\omega}(t), v_{\theta}(t, X_{\omega}(t))\rangle|^2] \leq 4V^2 \frac{d \sigma_{\text{max}}}{2h}\,.
\end{equation}
Let us now consider the term $- 2\langle D_{h,\omega}(t), v_{\theta}(t, X_{\omega}(t))\rangle$ of \eqref{eq:Proof:OneSampleZ}, and obtain
\[
|- 2\langle D_{h,\omega}(t), v_{\theta}(t, X_{\omega}(t))\rangle| \leq 2 \|D_{h, \omega}(t)\|_2 \|v_{\theta}(t, X_{\omega}(t))\|_2 \leq 2b_{\text{max}} V\,,
\]
where we used that the drift term $b$ is upper bound by $b_{\text{max}}$ and $v_\theta$ by $V$. The variance is then
\begin{equation}\label{eq:Proof:Var2}
\operatorname{Var}\left[- 2\langle D_{h,\omega}(t), v_{\theta}(t, X_{\omega}(t))\rangle\right] \leq \mathbb{E}\left[|-2 \langle D_{h,\omega}(t), v_{\theta}(t, X_{\omega}(t))\rangle|^2\right] \leq 4b_{\text{max}}^2V^2\,.
\end{equation}
Analogously, we bound
\begin{equation}\label{eq:Proof:Var3}
\operatorname{Var}\left[\|v_{\theta}(t, X_{\omega}(t))\|_2^2\right] \leq \mathbb{E}\left[\|v_{\theta}(t, X_{\omega}(t))\|_2^4\right] \leq V^4\,.
\end{equation}
Combining \eqref{eq:Proof:Var1}, \eqref{eq:Proof:Var2}, and \eqref{eq:Proof:Var3} with  $\operatorname{Var}[Z_1 + Z_2 + Z_3] \leq 3(\operatorname{Var}[Z_1] + \operatorname{Var}[Z_2] + \operatorname{Var}[Z_3])$ leads to \eqref{eq:MSEBound-symdiff}.
\end{proof}

\begin{proof}[Proof of \Cref{lemma:1d-bm-error}]
    It suffices to consider the variance $\mathbb{E}\left[(\widehat{J}_h(\theta) - J_h(\theta))^2\right]$, and, by independence of the samples in~\eqref{eq:sym-diff-empirical} and the law of total variance to condition on the time $t \sim {\cal U}([h, 1 - h])$, to compute $\operatorname{Var}\widehat{J}_h(\theta; t)$ for
    \begin{align}
    \widehat{J}_h(\theta;t) := v_\theta(t, X_\omega(t))^2 - 2 v_\theta(t, X_\omega(t)) Y_{h,\omega}(t)
    \label{eq:objective-bm}
\end{align}
at a fixed time $t \in [h, 1-h]$. First, we characterize the joint distribution of $(X_\omega(t), Y_{h,\omega}(t))$. Clearly $X_\omega(t) \sim {\cal N}(0,1+t)$, and
\begin{align*}
    Y_{h,\omega}(t) &= \frac{X_\omega(t+h) - X_\omega(t-h)}{2h} = \frac{(X_\omega(t+h)-X_\omega(t)) + (X_\omega(t)-X_\omega({t-h}))}{2h} \\
    &= \frac{\Delta B_\omega(t) + \Delta B_\omega(t-h)}{2 h} \sim {\cal N}\left(0,\frac{1}{2h}\right)
\end{align*}
marginally. Their correlation is then
\begin{align*}
    \mathbb{E} \left[X_\omega(t) Y_{h,\omega}(t) \right] = \frac{1}{2h} \mathbb{E} \left[ \left(X_\omega(t-h) + \Delta B_\omega(t-h) \right) \left(\Delta B_\omega(t) + \Delta B_\omega(t-h) \right) \right] = \frac{1}{2}
\end{align*}
such that in total
\begin{align*}
    \begin{pmatrix}
        X_\omega(t)\\ Y_{h,\omega}(t)
    \end{pmatrix} \sim {\cal N} \left(0, \begin{pmatrix}
        1+t & 1/2\\
        1/2 & 1/(2h)
    \end{pmatrix} \right)\,.
\end{align*}
We can hence also write
\begin{align}
    \begin{pmatrix}
        X_\omega(t) \\ Y_{h,\omega}(t)
    \end{pmatrix} \overset{d}{=} \begin{pmatrix}
        \sqrt{1+t} Z_1 \\
        \frac{1}{2 \sqrt{1+t}} Z_1 + \sqrt{\frac{1}{2h} - \frac{1}{4(1+t)}} Z_2
    \end{pmatrix} 
    \label{eq:standard-normal}
\end{align}
for $Z_1, Z_2$ iid ${\cal N}(0,1)$. The other ingredient we need is that for any $k \geq 0$, by the law of total probability, we have 
\begin{align}
    \mathbb{E} \left[v_\theta(t, X_\omega(t))^k Y_{h,\omega}(t) \right] = \mathbb{E} \left[v_\theta(t, X_\omega(t))^k \mathbb{E} \left[ Y_{h,\omega}(t) \mid X_\omega(t) \right]\right]  = \mathbb{E} \left[v_\theta(t, X_\omega(t))^k v_h(X_\omega(t), t)\right] \label{eq:vh-identity}
\end{align}
where $\lim_{h \to 0} v_h(t,x) = v(t,x)$ is the true probability current velocity of the process $X$. In the present example
\begin{align*}
    v(t,x) = b(t,x) - \frac{1}{2} \Sigma(t) \nabla \log \rho(t,x) = \frac{x}{2(1+t)}\,.
\end{align*}
With that, we can now compute
\begin{align*}
   \operatorname{Var} \widehat{J}_h(\theta;t) = \underbrace{\operatorname{Var}\left(v_\theta(t, X_\omega(t))^2 \right)}_{=:V_1} + 4\underbrace{\operatorname{Var}\left(v_\theta(t, X_\omega(t)) Y_{h,\omega}(t) \right)}_{=:V_2} \\
   - 4 \underbrace{\operatorname{Cov}\left(v_\theta(t, X_\omega(t))^2,v_\theta(t, X_\omega(t))Y_{h,\omega}(t) \right)}_{=:C_{12}}\,.
\end{align*}
The first term $V_1$ is a constant in $h$. The last term is
\begin{align*}
    &C_{12} = \mathbb{E}\left[v_{\theta}(t, X_\omega(t))^3 Y_{h,\omega}(t) \right] - \mathbb{E}\left[v_\theta(t, X_\omega(t))^2 \right] \mathbb{E}\left[v_\theta(t, X_\omega(t)) Y_{h,\omega}(t) \right]\\
    &\overset{\eqref{eq:vh-identity}}{=}\mathbb{E}\left[v_{\theta}(t, X_\omega(t))^3 v_h(t, X_\omega(t)) \right] - \mathbb{E}\left[v_\theta(t, X_\omega(t))^2 \right] \mathbb{E}\left[v_\theta(t, X_\omega(t)) v_h(t, X_\omega(t)) \right]\,,
\end{align*}
which evidently converges to a well-defined limiting value as $h \to 0$. Lastly, for $V_2$, we find
\begin{align*}
    V_2 &= \mathbb{E} \left[v_\theta(t, X_\omega(t))^2 \left(Y_{h,\omega}(t) \right)^2 \right] - \mathbb{E} \left[ v_\theta(t, X_\omega(t)) Y_{h,\omega}(t) \right]^2\\
    &\overset{\eqref{eq:standard-normal},\eqref{eq:vh-identity}}{=} \frac{1}{4(1+t)} \mathbb{E} \left[v_\theta(t, \sqrt{1+t} Z_1)^2 Z_1^2 \right] + \left(\frac{1}{2 h} - \frac{1}{4(1+t)} \right) \mathbb{E} \left[v_\theta(t, \sqrt{1+t} Z_1)^2 \right]\\
    & \qquad  - \mathbb{E} \left[v_\theta(t, X_\omega(t)) v_h(t, X_\omega(t)) \right]^2\,.
\end{align*}
Overall, because of the term $1/(2h)$ term, we hence have shown
\begin{align*}
   \operatorname{Var}\widehat{J}_h(\theta;t) \overset{h \to 0}{\sim} \frac{c(t)}{h}\,,
\end{align*}
where the constant $c(t)>0$ is explicitly given by $c(t) = 2\mathbb{E} \left[v_\theta(t, \sqrt{1+t} Z)^2 \right]$ for $Z \sim {\cal N}(0,1)$.
\end{proof}

\subsection{Proof of \Cref{prop:w2}}\label{appendix:ProofOfW2}

We denote in this section by $(X(t))_{t \in [0,T]}$ sample paths of the SDE~\eqref{eq:SDE}, by $(\hat{x}_{\theta}(t))_{t \in [0,T]}$ the trajectories of the ODE~\eqref{eq:ODEFlow} with (learned) velocity field $ u = v_\theta$, and by $(\hat{x}(t))_{t \in [0,T]}$ the trajectories of the ODE~\eqref{eq:ODEFlow} with $u = v$ the true probability current velocity~\eqref{eq:CurrentVelocity}. We couple the distributions of these by picking\textit{ the same starting point} $X(0) = \hat{x}_{\theta}(0) = \hat{x}(0) \sim \rho(0)$, state pathwise errors in \Cref{lemma:exact-ode-vs-sde} and \Cref{lemma:exact-ode-vs-approx-ode} via Gr{\"o}nwall's lemma, and conclude with the Wasserstein-2 bound of \Cref{prop:w2} as a corollary. As in the main text in \Cref{prop:w2}, we suppose here that the probability current velocity $v$ is uniformly Lipschitz on $\mathbb{R}^d$ for all times with Lipschitz constant $L_v > 0$. Then we have:

\begin{lemma} \label{lemma:exact-ode-vs-sde}
    $\mathbb{E} \left[\lVert \hat{x}(t) - X(t) \rVert_2^2 \right] \leq \exp((2 L_v + 1)t) \int_0^t \left(\Tr \Sigma(s) + \mathbb{E} \left[\lVert \tfrac{1}{2} \Sigma(s) \nabla \log \rho(s, X(s)) \rVert_2^2 \right] \right) \mathrm{d}s$.
\end{lemma}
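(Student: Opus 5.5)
We couple the two processes synchronously: $\hat{x}(0) = X(0)$, with $\hat{x}$ solving $\tfrac{\mathrm d}{\mathrm dt}\hat x = v(t,\hat x)$ and $X$ solving \eqref{eq:SDE}. We then study $e(t) := \hat{x}(t) - X(t)$. The key rewrite uses \eqref{eq:CurrentVelocity}, namely $b = v + \tfrac12 \Sigma \nabla \log \rho$, so that
\begin{equation*}
\mathrm{d}e(t) = \left[ v(t,\hat x(t)) - v(t,X(t)) - g(t,X(t)) \right]\mathrm{d}t - A(t)\,\mathrm{d}W(t), \qquad g := \tfrac12 \Sigma \nabla \log \rho .
\end{equation*}
Thus $e$ solves an SDE whose drift splits into a Lipschitz-controlled difference and a forcing term $g(t,X(t))$ evaluated along the true path. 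The diffusion $-A\,\mathrm dW$ contributes only through the It\^o correction.

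Next we apply It\^o's formula to $\lVert e(t) \rVert_2^2$:
\begin{equation*}
\mathrm{d}\lVert e\rVert_2^2 = 2\langle e, v(t,\hat x) - v(t,X)\rangle \mathrm dt - 2\langle e, g(t,X)\rangle \mathrm dt - 2\langle e, A\,\mathrm dW\rangle + \Tr \Sigma(t)\,\mathrm dt .
\end{equation*}
We take expectations and drop the martingale term, which requires $e \in L^2$; this follows from the boundedness assumptions on $b$ and $\Sigma$ and from linear growth of the Lipschitz field $v$. We bound the first term by $2L_v \lVert e\rVert_2^2$ using the Lipschitz property. We bound the cross term with Young's inequality, $-2\langle e, g\rangle \le \lVert e\rVert_2^2 + \lVert g\rVert_2^2$. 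With $m(t) := \mathbb E\lVert e(t)\rVert_2^2$ and $m(0)=0$, this gives
\begin{equation*}
m'(t) \le (2L_v+1)\, m(t) + \Tr\Sigma(t) + \mathbb E\left[\lVert \tfrac12\Sigma(t)\nabla\log\rho(t,X(t))\rVert_2^2\right].
\end{equation*}

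Gr\"onwall's lemma in integral form then yields $m(t) \le \int_0^t e^{(2L_v+1)(t-s)}(\cdots)\,\mathrm ds$. Bounding $e^{(2L_v+1)(t-s)} \le e^{(2L_v+1)t}$ gives exactly the stated estimate.

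The main technical point is justifying the dropped martingale term and the finiteness of $\mathbb E\lVert g\rVert_2^2$, i.e.\ the Fisher-information-type quantity. If $\int_0^t \mathbb E\lVert g\rVert_2^2\,\mathrm ds$ is infinite, the bound holds trivially. Otherwise, we would localize with stopping times $\tau_n$ exiting balls, so that the stopped stochastic integral has zero mean. We then run the argument for $m(t\wedge\tau_n)$ and pass to the limit $n\to\infty$ by Fatou's lemma.
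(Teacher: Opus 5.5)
Your proof is correct and follows the paper's argument essentially step for step: synchronous coupling, rewriting $b = v + \tfrac12\Sigma\nabla\log\rho$ in the error SDE, It\^o's formula for $\lVert e\rVert_2^2$, the Lipschitz bound together with Young's inequality, and then Gr\"onwall. Your localization-and-Fatou remark adds rigor the paper omits, and your sign $-2\langle e, g\rangle$ in the cross term is the correct one (the paper's $+$ is a harmless typo, since Young's inequality absorbs it either way).
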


\begin{lemma} \label{lemma:exact-ode-vs-approx-ode}
   $\mathbb{E} \left[\lVert \hat{x}_{\theta}(t) - \hat{x}(t) \rVert_2^2 \right] \leq \exp((2 L_v + 1)t) \int_0^t \mathbb{E} \left[ \lVert v_{\theta}(s, \hat{x}_{\theta}(s)) - v(s, \hat{x}_{\theta}(s)) \rVert_2^2 \right] \mathrm{d}s$.
\end{lemma}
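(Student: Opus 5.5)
We prove the statement immediately above, \Cref{lemma:exact-ode-vs-approx-ode}, by a pathwise Gr\"onwall argument under the coupling $\hat{x}_\theta(0) = \hat{x}(0)$. Set $e(t) := \hat{x}_\theta(t) - \hat{x}(t)$, so that $e(0) = 0$. Both trajectories solve deterministic ODEs, so $e$ is absolutely continuous and
\begin{equation*}
\tfrac{\mathrm d}{\mathrm dt} e(t) = v_\theta(t,\hat{x}_\theta(t)) - v(t,\hat{x}(t)) = \underbrace{\bigl(v_\theta(t,\hat{x}_\theta(t)) - v(t,\hat{x}_\theta(t))\bigr)}_{=: \delta(t)} + \bigl(v(t,\hat{x}_\theta(t)) - v(t,\hat{x}(t))\bigr).
\end{equation*}
The point of this splitting is to evaluate the model error along the \emph{learned} trajectory $\hat{x}_\theta$. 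That is exactly the quantity appearing on the right-hand side of the lemma. The second bracket is then handled by the Lipschitz property of the true field $v$ alone, and no Lipschitz constant of $v_\theta$ is needed.

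Next, differentiate $\lVert e(t)\rVert_2^2$:
\begin{equation*}
\tfrac{\mathrm d}{\mathrm dt}\lVert e(t)\rVert_2^2 = 2\langle e(t), \delta(t)\rangle + 2\langle e(t), v(t,\hat{x}_\theta(t)) - v(t,\hat{x}(t))\rangle .
\end{equation*}
For the second term, Cauchy--Schwarz and the Lipschitz constant $L_v$ give the bound $2L_v\lVert e(t)\rVert_2^2$. For the first term, Young's inequality $2\langle a,b\rangle \le \lVert a\rVert_2^2 + \lVert b\rVert_2^2$ gives the bound $\lVert e(t)\rVert_2^2 + \lVert \delta(t)\rVert_2^2$. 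Together,
\begin{equation*}
\tfrac{\mathrm d}{\mathrm dt}\lVert e(t)\rVert_2^2 \le (2L_v+1)\lVert e(t)\rVert_2^2 + \lVert \delta(t)\rVert_2^2 .
\end{equation*}
This choice of Young constant produces exactly the rate $2L_v+1$ in the claim.

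Then take expectations over the random initial condition $\hat{x}(0)\sim\rho(0)$, and set $m(t) := \mathbb{E}\lVert e(t)\rVert_2^2$ with $m(0)=0$. Exchanging derivative and expectation is justified either by working with the integral form of the inequality or by Fubini, since all quantities are locally bounded given the Lipschitz/boundedness assumptions. We obtain $m'(t) \le (2L_v+1)m(t) + \mathbb{E}\lVert\delta(t)\rVert_2^2$. The differential form of Gr\"onwall's lemma, multiplying by the integrating factor $e^{-(2L_v+1)t}$ and integrating, yields
\begin{equation*}
m(t) \le \int_0^t e^{(2L_v+1)(t-s)}\,\mathbb{E}\lVert\delta(s)\rVert_2^2\,\mathrm ds \le e^{(2L_v+1)t}\int_0^t \mathbb{E}\lVert\delta(s)\rVert_2^2\,\mathrm ds ,
\end{equation*}
which is the claim.

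There is no real obstacle here. The only care point is the splitting in the first step: the Lipschitz bound must be applied to $v$, not $v_\theta$, so that the error term $\delta$ is evaluated at $\hat{x}_\theta(s)$. A minor technical point is well-posedness of the learned ODE, which we take as given, as the paper does.
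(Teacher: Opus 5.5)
Your proof is correct and takes the same approach as the paper. The paper uses the same error $e(t)$ and the same splitting that evaluates the model error along $\hat{x}_\theta$. It then proceeds as in the proof of \Cref{lemma:exact-ode-vs-sde}, with Cauchy--Schwarz plus the Lipschitz bound on $v$, Young's inequality, and Gr\"onwall; your write-up makes explicit the steps the paper only calls ``analogous.''
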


We remark that combining these two lemmas using the triangle inequality also provides an upper bound on the mean squared distance of $X(t)$ and $\hat{x}_\theta(t)$ if desired, though we come back to this problem in more detail in \Cref{prop:path-error} and \Cref{appendix:InferenceErrorRollout}. Now, since the time-marginal distributions $\rho(t)$ generated by $(\hat{x}(t))_{t \in [0,T]}$ are the true ones, and we have constructed a particular coupling of $\rho(t)$ and $\hat{\rho}_\theta(t)$ via the two ODEs with the same initial conditions, \Cref{lemma:exact-ode-vs-approx-ode} directly implies \Cref{prop:w2}:
\begin{align*}
    W_2^2(\rho(t), \hat{\rho}_\theta(t)) = \inf_{\gamma \in \Gamma(\rho(t), \hat{\rho}_\theta(t))} \mathbb{E}_{(x,y) \sim \gamma} \left[\lVert x - y \rVert_2^2 \right] \leq \mathbb{E} \left[\lVert \hat{x}_{\theta}(t) - \hat{x}(t) \rVert_2^2 \right]\,,
\end{align*}
where $\Gamma(\rho(t), \hat{\rho}_\theta(t))$ denotes the set of all couplings of $\rho(t)$ and $\hat{\rho}_\theta(t)$. Finally, then, as mentioned in the main text, if $\phi \colon \mathbb{R}^d \to \mathbb{R}$ is a scalar QoI which is Lipschitz with constant $L_\phi$, then we obtain by Kantorovich--Rubinstein duality for the $W_1$ metric, and $W_1 \leq W_2$:
\begin{align*}
    |\mathbb{E}_{X(t) \sim \rho(t)}[\phi(X(t))] - \mathbb{E}_{\hat{x}_\theta(t) \sim \hat{\rho}_\theta(t)}[\phi(\hat{x}_\theta(t))]| \leq L_\phi \; W_1(\rho(t), \hat{\rho}_\theta(t)) \leq L_\phi \; W_2(\rho(t), \hat{\rho}_\theta(t))\,,
\end{align*}
such that \Cref{prop:w2} also provides control over the error for time-local QoIs.

\begin{proof}[Proof of \Cref{lemma:exact-ode-vs-sde}]
Denote the difference of the ODE and SDE trajectory by $E(t) := \hat{x}(t) - X(t)$. By construction we have $E(0) = 0$, and $E$ satisfies the SDE
\begin{align*}
    \mathrm{d} E(t) &= v(t, \hat{x}(t)) \mathrm{d}t - b(t, X(t)) \mathrm{d}t - A(t) \mathrm{d}W(t)\\
    &\overset{\eqref{eq:CurrentVelocity}}{=} \left[v(t, \hat{x}(t)) - v(t, X(t)) - q(t) \right] \mathrm{d}t - A(t) \mathrm{d}W(t)\,,
\end{align*}
where we abbreviate $q(t) := \tfrac12 \Sigma(t) \nabla \log \rho(t, X(t))$. By It{\^o}'s lemma, we then have
\begin{align*}
    \mathrm{d} \lVert E(t) \rVert_2^2 = \left( 2 \langle E(t), v(t, \hat{x}(t)) - v(t, X(t)) - q(t) \rangle + \Tr \Sigma(t) \right) \mathrm{d}t - 2 \langle E(t), A(t) \mathrm{d} W(t) \rangle\,,
\end{align*}
and by taking expectations:
\begin{align*}
    \frac{\mathrm{d}}{\mathrm{d}t} \mathbb{E} \left[\lVert E(t) \rVert_2^2 \right] = 2 \mathbb{E} \left[\langle E(t), v(t, \hat{x}(t)) - v(t, X(t)) \rangle \right] + 2 \mathbb{E} \left[\langle E(t), q(t) \rangle \right] + \Tr \Sigma(t)\,.
\end{align*}
Using the Cauchy--Schwarz inequality and Lipschitz property of $v$ for the first term on the right-hand side and Young's inequality for the second term, we obtain
\begin{align*}
     \frac{\mathrm{d}}{\mathrm{d}t} \mathbb{E} \left[\lVert E(t) \rVert_2^2 \right] \leq (2 L_v + 1) \left[\lVert E(t) \rVert_2^2 \right] + \mathbb{E} \left[\lVert q(t) \rVert_2^2 \right] + \Tr \Sigma(t)\,,
\end{align*}
such that Gr{\"o}nwall's lemma, noting  that $\mathbb{E} \left[\lVert E(0) \rVert_2^2 \right] = 0$, finishes the proof.
\end{proof}

\begin{proof}[Proof of \Cref{lemma:exact-ode-vs-approx-ode}]
Define $e(t) = \hat{x}_{\theta}(t) - \hat{x}(t)$ with $\frac{\mathrm{d}}{\mathrm{d}t} e(t) = v_\theta(t, \hat{x}_{\theta}(t)) - v(t, \hat{x}(t)) = (v_\theta(t, \hat{x}_{\theta}(t)) - v(t, \hat{x}_{\theta}(t))) + (v(t, \hat{x}_{\theta}(t)) -  v(t, \hat{x}(t)))$ and proceed analogously to the previous proof.
\end{proof}

\subsection{Proof of \Cref{prop:path-error}}\label{appendix:InferenceErrorRollout}

For the reader's convenience, we state again all necessary assumptions for \Cref{prop:path-error} in the following. Let $\rho(t)$ denote the law of the SDE~\eqref{eq:SDE}, and let
$\hat{\rho}_\theta(t)$ denote the law generated by the learned ODE
\[
    \frac{\mathrm d}{\mathrm dt}\hat{x}_\theta(t)
    =
    v_\theta(t,\hat{x}_\theta(t)),
    \qquad
    \hat{x}_\theta(0)\sim \rho(0).
\]
Assume that the probability current velocity $v$ is uniformly Lipschitz in $x$
with constant $L_v>0$, and that we have pointwise error control
\[
    \|v_\theta(t,x)-v(t,x)\|_2 \leq \epsilon
    \qquad
    \text{for all } t\in[0,T],\ x\in\mathbb R^d .
\]
Let $\phi:\mathbb [0,T] \times \mathbb R^d\to\mathbb R$ be a bounded
and Lipschitz vector field:
\[
    \|\phi(t,x)\|_2\leq B_\phi,
    \qquad
    \|\phi(t,x)-\phi(t,y)\|_2
    \leq
    L_{\phi}\|x-y\|_2 \qquad
    \text{for all } t\in[0,T],\ x,y\in\mathbb R^d\,.
\]
Assume similarly that $v_\theta$ is uniformly bounded and Lipschitz in $x$:
\[
    \|v_\theta(t,x)\|_2\leq B_{v_\theta},
    \qquad
    \|v_\theta(t,x)-v_\theta(t,y)\|_2
    \leq
    L_{v_\theta}\|x-y\|_2 \qquad
    \text{for all } t\in[0,T],\ x,y\in\mathbb R^d\,.
\]
Set
$\tilde{C}:= L_{\phi}B_{v_\theta} + B_\phi L_{v_\theta}.
$
We will show below that the error in the path-dependent QoI $\Phi(X) := \int_0^T \phi(t,X(t)) \circ \mathrm{d} X(t)$ under the true SDE
and the learned ODE rollout satisfies
\begin{align}
&
\left|
    \mathbb E[\Phi(X) ]
    -
    \mathbb E[\Phi(\hat{x}_\theta) ]
\right|
\leq \int_0^T \left(
\|v(t,\cdot)-v_\theta(t,\cdot)\|_{L^2(\rho(t))}
\|\phi(t,\cdot)\|_{L^2(\rho(t))}
+
\tilde{C} W_2(\rho(t),\hat{\rho}_\theta(t)) \right) \mathrm{d}t\,.
\label{eq:qoi-rate-local-plus-marginal}
\end{align}
Using \Cref{prop:w2}, this estimate directly implies \Cref{prop:path-error}. We also mention here (cf.~\eqref{eq:proof-rate-decomposition} in the proof below) that the local transport term, i.e., the first term on the right-hand side of~\eqref{eq:qoi-rate-local-plus-marginal}, can alternatively be controlled directly by the FTM excess
risk
\begin{equation*}
    J(\theta)-J^*
    =
    \frac1T
    \int_0^T
    \|v_\theta(t,\cdot)-v(t,\cdot)\|_{L^2(\rho(t))}^2
    \,\mathrm dt\,,
\end{equation*}
where 
\begin{align*}
\begin{cases}
    J(\theta)
    &=
    \frac1T
    \int_0^T
    \int_{\mathbb R^d}
    \left(
        \|v_\theta(t,x)\|_2^2
        -
        2\langle v(t,x),v_\theta(t,x)\rangle
    \right)
    \rho(t,x)\,\mathrm dx\,\mathrm dt \,,\\
    J^*
    &:=
    \frac1T
    \int_0^T
    \int_{\mathbb R^d}
    \left(
        \|v(t,x)\|_2^2
        -
        2\|v(t,x)\|_2^2
    \right)
    \rho(t,x)\,\mathrm dx\,\mathrm dt
    \\
    &=
    -
    \frac1T
    \int_0^T
    \int_{\mathbb R^d}
    \|v(t,x)\|_2^2
    \rho(t,x)\,\mathrm dx\,\mathrm dt \,,
\end{cases}
\end{align*}
since by using Cauchy--Schwarz twice:
\begin{align*}
&
\frac1T
\int_0^T
\left|
    \mathbb{E}_{X(t) \sim \rho(t)} \left[
    \left\langle
        \phi(t,X(t)),v(t,X(t))-v_\theta(t,X(t))
    \right\rangle
    \right]
\right|
\,\mathrm dt
\notag \\
&\qquad\leq
\left(J(\theta)-J^*\right)^{1/2}
\left(
    \frac1T
    \int_0^T
    \|\phi(t,\cdot)\|_{L^2(\rho(t))}^2
    \,\mathrm dt
\right)^{1/2}.
\end{align*}

\begin{proof}[Proof of \Cref{prop:path-error}]
We first use the Stratonovich integral identity~\eqref{eq:StratExp} for the SDE sample paths to simplify the difference of the path-dependent QoI expectations:
\begin{align*}
    \left|
    \mathbb E[\Phi(X) ]
    -
    \mathbb E[\Phi(\hat{x}_\theta) ]
\right| &= \left| \mathbb{E} \left[\int_0^T \phi(t,X(t)) \circ \mathrm{d}X(t) - \int_0^T \left \langle \phi(t,\hat{x}_\theta(t)), \tfrac{\mathrm d}{\mathrm{d}t} \hat{x}_\theta(t) \right \rangle \mathrm{d}t \right] \right|\\
&= \left| \mathbb{E} \left[\int_0^T \bigg( \langle \phi(t,X(t)), v(t,X(t)) \rangle - \left \langle \phi(t,\hat{x}_\theta(t)), v_\theta(t,\hat{x}_\theta(t))\right \rangle \bigg) \mathrm{d}t \right] \right|\\
&\leq \int_0^T \left| \mathbb{E} \left[ \langle \phi(t,X(t)), v(t,X(t)) \rangle - \left \langle \phi(t,\hat{x}_\theta(t)), v_\theta(t,\hat{x}_\theta(t))\right \rangle \right] \right| \mathrm{d}t  
\end{align*}
Adding and subtracting
$
    \int_{\mathbb R^d}
    \langle \phi(t, x),v_\theta(t,x)\rangle
    \rho(t,x)\,\mathrm dx ,
$
we obtain
\begin{align}
  \left|
    \mathbb E[\Phi(X) ]
    -
    \mathbb E[\Phi(\hat{x}_\theta) ]
\right|
&\leq
\int_0^T \bigg(
\left|
    \int_{\mathbb R^d}
    \langle \phi(t,x),v(t,x)-v_\theta(t,x)\rangle
    \rho(t,x)\,\mathrm dx
\right|
\notag \\
&\qquad+
\left|
    \int_{\mathbb R^d}
    \langle \phi(t,x),v_\theta(t,x)\rangle
    \big(\rho(t,x)-\hat{\rho}_\theta(t,x)\big)\,\mathrm dx
\right| \bigg) \mathrm{d} t.
\label{eq:proof-rate-decomposition}
\end{align}
We bound the two terms separately. For the first term in~\eqref{eq:proof-rate-decomposition}, Cauchy--Schwarz gives
\begin{align*}
&
\left|
    \int_{\mathbb R^d}
    \langle \phi(t,x),v(t,x)-v_\theta(t,x)\rangle
    \rho(t,x)\,\mathrm dx
\right|
\notag \\
&\leq
\left(
    \int_{\mathbb R^d}
    \|v(t,x)-v_\theta(t,x)\|_2^2
    \rho(t,x)\,\mathrm dx
\right)^{1/2}
\left(
    \int_{\mathbb R^d}
    \|\phi(t,x)\|_2^2
    \rho(t,x)\,\mathrm dx
\right)^{1/2}
\notag \\
&=
\|v(t,\cdot)-v_\theta(t,\cdot)\|_{L^2(\rho(t))}
\|\phi(t,\cdot)\|_{L^2(\rho(t))}.
\end{align*}

For the second term in~\eqref{eq:proof-rate-decomposition}, define, for fixed
$t \in [0,T]$,
$
    f_t(x)
    :=
    \langle \phi(t, x),v_\theta(t,x)\rangle .
$
We show that $f_t$ is Lipschitz with $\operatorname{Lip}(f_t)\leq \tilde{C}$. For any $x,y\in\mathbb R^d$:
\begin{align*}
|f_t(x)-f_t(y)|
&=
\left|
\langle \phi(t,x),v_\theta(t,x)\rangle
-
\langle \phi(t, y),v_\theta(t,y)\rangle
\right|
\\
&\leq
\left|
\langle \phi(t, x)-\phi(t,y),v_\theta(t,x)\rangle
\right|
+
\left|
\langle \phi(t,y),v_\theta(t,x)-v_\theta(t,y)\rangle
\right|
\\
&\leq
\|\phi(t,x)-\phi(t,y)\|_2
\|v_\theta(t,x)\|_2
+
\|\phi(t,y)\|_2
\|v_\theta(t,x)-v_\theta(t,y)\|_2
\\
&\leq
L_{\phi}\|x-y\|_2 B_{v_\theta}
+
B_\phi L_{v_\theta}\|x-y\|_2
\\
&=
\tilde{C}\|x-y\|_2 .
\end{align*}
Then, by Kantorovich--Rubinstein duality and $W_1\leq W_2$,
\begin{align*}
\left|
    \int_{\mathbb R^d}
    \langle \phi(t, x),v_\theta(t,x)\rangle
    \big(\rho(t,x)-\hat{\rho}(t,x)\big)\,\mathrm dx
\right|
&=
\left|
    \int_{\mathbb R^d} f_t(x)\rho(t,x)\,\mathrm dx
    -
    \int_{\mathbb R^d} f_t(x)\hat{\rho}(t,x)\,\mathrm dx
\right|
\notag \\
&\leq
\operatorname{Lip}(f_t) W_1(\rho(t),\hat{\rho}(t))
\leq
\tilde{C} W_2(\rho(t),\hat{\rho}(t)).
\end{align*}
This finishes the proof of~\eqref{eq:qoi-rate-local-plus-marginal} from~\eqref{eq:proof-rate-decomposition}.

\end{proof}

\section{Details about experiments}\label{appendix:Experiments}
We provide the details for the four benchmark systems used in
\Cref{sec:NumExp}.  In all examples, FTM is trained from trajectory data
using the one-step loss \eqref{eq:FTM:OneStepEstimator}, except otherwise noted (in particular in \Cref{fig:duffing_combined}).  At test time, we
generate ensembles by integrating the learned ODE.

\paragraph{Trajectory-dependent probability-current QoIs}
We evaluate whether the learned model
recovers trajectory-dependent probability currents.  Given a smooth test
function
\[
    \phi : [0,T]\times \mathbb{R}^d \to \mathbb{R}^d,
\]
we define the Stratonovich path functional
\begin{equation}
    Q_\phi
    =
    \mathbb{E}\left[
    \int_0^T
    \phi(t,X(t)) \circ \mathrm dX(t)
    \right].
    \label{eq:appendix:qphi}
\end{equation}
For discrete trajectories
$\{X^{(i)}(t_k)\}_{i=1,k=0}^{N,K}$, we estimate \eqref{eq:appendix:qphi}
using the midpoint Stratonovich rule
\begin{equation}
    \widehat Q_\phi
    =
    \frac{1}{N}
    \sum_{i=1}^N
    \sum_{k=0}^{K-1}
    \phi\!\left(
        \frac{t_k+t_{k+1}}{2},
        \frac{X^{(i)}(t_k)+X^{(i)}(t_{k+1})}{2}
    \right)^\top
    \left(
        X^{(i)}(t_{k+1})-X^{(i)}(t_k)
    \right).
    \label{eq:appendix:qphi-discrete}
\end{equation}
The same estimator is applied to trajectories generated by each learned
model, and the reported trajectory-QoI error is the absolute difference from
the reference value computed on ground-truth trajectories.

\subsection{Duffing oscillator}\label{appendix:DuffingDetails}

\paragraph{Dynamics}
We consider the stochastically forced Duffing oscillator
\begin{align}
    \mathrm dX_1(t) &= X_2(t)\,\mathrm dt, \\
    \mathrm dX_2(t) &=
    \left(
        -2\xi\omega X_2(t)
        + \omega^2 X_1(t)
        - \omega^2\gamma X_1(t)^3
    \right)\mathrm dt
    + \sigma\,\mathrm dW(t),
\end{align}
with parameters
\[
    \xi = 0.2,
    \qquad
    \gamma = 0.2,
    \qquad
    \omega = 1,
    \qquad
    \sigma = 0.5.
\]
Equivalently, the deterministic force corresponds to a double-well potential
of the form
\[
    V(x_1) = -\frac12 x_1^2 + \frac{1}{20}x_1^4,
\]
up to the parameter scaling above.  The random forcing acts only on the
acceleration equation.  Initial conditions are sampled as
\[
    X(0) \sim
    \mathcal N
    \left(
        \begin{bmatrix}0 \\ -10\end{bmatrix},
        I_2
    \right).
\]

\paragraph{Probability-current QoI}
The Duffing oscillator has two metastable wells separated by the barrier near
$x_1=0$.  To measure the net probability current across this barrier, we use
the test function
\begin{equation}
    \phi(x)
    =
    \begin{bmatrix}
    \displaystyle
    \frac{1}{\sqrt{2\pi}\epsilon}
    \exp\!\left(-\frac{x_1^2}{2\epsilon^2}\right)
    \\
    0
    \end{bmatrix},
    \qquad
    \epsilon = 1.
    \label{eq:appendix:duffing-phi}
\end{equation}
This localizes the Stratonovich line integral near the separating surface
$x_1=0$ and therefore measures the signed probability flux moving between
the two wells.  We estimate $Q_\phi$ using
\eqref{eq:appendix:qphi-discrete}.

\paragraph{Training data generation} 
We generate training trajectories by integrating the Duffing Oscillator SDE with the Euler--Maruyama scheme on a uniform time grid with step size $\Delta t = 0.01$ over the time interval $[0, 12]$. Initial conditions are drawn from the distribution specified above, and we generate $ 5000$ training trajectories. For testing, we draw a new set of initial conditions from the same initial distribution and generate $5000$ testing trajectories. 

\paragraph{Reported metrics}
For the Duffing oscillator, we report both a distributional error between
time marginals and the trajectory-dependent current error associated with
\eqref{eq:appendix:duffing-phi}.  The distributional error is computed using
a sliced Wasserstein-$2$ distance between generated and reference ensembles,
averaged over the evaluated time points.

\subsection{Stochastic Rayleigh--Bénard convection}\label{appendix:RayleighDetails}

\paragraph{Dynamics}
We use the nine-dimensional chaotic Rayleigh--Bénard convection model of
\cite{reiterer-lainscek-schuerrer-etal:1998}.  The state is
\[
    X(t) = (X_1(t),\ldots,X_9(t)) \in \mathbb{R}^9,
\]
and the dynamics are given by
\begin{equation}
    \mathrm dX(t)
    =
    f_\mu(X(t))\,\mathrm dt
    +
    \sigma\,\mathrm dW(t),
    \label{eq:appendix:rb-sde}
\end{equation}
where $f_\mu$ is the deterministic nine-mode convection vector field and
$\mu$ is the Rayleigh-type control parameter.  In the experiments we use the
chaotic regime with
\[
    \sigma = \frac{1}{20},
\]
and treat $\mu$ as a control parameter that is varied across training and testing.

\paragraph{Rotational probability-current QoI}
The Rayleigh--Bénard dynamics exhibit persistent rotational motion in
low-dimensional projections.  To test whether a learned model reproduces this
rotational current, we consider the projection
\[
    P : \mathbb{R}^9 \to \mathbb{R}^2,
    \qquad
    Px = (x_1,x_2),
\]
and the counterclockwise rotation matrix
\[
    R =
    \begin{bmatrix}
        0 & -1 \\
        1 & 0
    \end{bmatrix}.
\]
We define
\begin{equation}
    \phi(x)
    =
    P^\top R Px
    =
    (-x_2,x_1,0,\ldots,0)^\top.
    \label{eq:appendix:rb-phi}
\end{equation}
The corresponding quantity $Q_\phi$ measures the component of the
probability current aligned with rotation in the $(x_1,x_2)$ plane.  As in
the Duffing example, we estimate it by the discrete Stratonovich estimator
\eqref{eq:appendix:qphi-discrete}.

\paragraph{Training data generation} 
We generate training trajectories by integrating the Rayleigh--Bénard SDE
\eqref{eq:appendix:rb-sde} with the Euler--Maruyama scheme on a uniform time
grid with step size $\Delta t = 0.01$ over the time interval $[0, 20]$.
Initial conditions are drawn from an isotropic Gaussian with mean $0$ and
standard deviation $0.02$ in each coordinate. For each training parameter
value
\[
    \mu \in \{13.5,\, 13.6,\, 13.7,\, 13.8,\, 13.9,\, 14.0,\, 14.1,\, 14.2\},
\]
we generate $20{,}000$ training trajectories, and we evaluate the learned
model at the held-out test value $\mu = 13.65$. For testing, we draw a new
set of initial conditions from the same Gaussian distribution and generate
$20{,}000$ testing trajectories at $\mu = 13.65$.

\paragraph{Reported metrics}
We report a distributional error between generated and reference time
marginals, together with the rotational-current error induced by
\eqref{eq:appendix:rb-phi}.  The latter distinguishes methods that reproduce
the correct projected distribution from those that also reproduce the
trajectory-induced circulation.

\subsection{Stochastically forced Burgers turbulence}
\label{appendix:BurgersExample}
\paragraph{Dynamics}
We consider the one-dimensional stochastic Burgers equation on the periodic
domain $[0,1)$,
\begin{equation}
    \mathrm d u(t,x)
    =
    \left(
        \nu\,\partial_{xx}u(t,x)
        -
        u(t,x)\,\partial_x u(t,x)
    \right)\mathrm dt
    +
    \sigma\,\mathrm dW(t,x),
    \qquad x\in[0,1),
    \label{eq:appendix:burgers}
\end{equation}
with viscosity
\[
    \nu = 0.007.
\]
The stochastic forcing is white in time and colored in space.  Specifically,
we use a truncated Fourier expansion on the first ten modes,
\[
    \mathrm dW(t,x)
    =
    \sum_{\iota=1}^{10}
    \frac{1}{\iota}
    \left(
        a_\iota(x)\,\mathrm dB_{\iota}^{(1)}(t)
        +
        b_\iota(x)\,\mathrm dB_{\iota}^{(2)}(t)
    \right),
\]
where $a_\iota$ and $b_\iota$ denote the corresponding sine and cosine modes,
and the Brownian motions are independent.  Unless otherwise stated, the noise
amplitude is
\[
    \sigma = \frac{2}{50}.
\]

\paragraph{Initial conditions}
Initial conditions are Gaussian bumps with additive colored perturbations,
\begin{equation}
    u(0,x)
    =
    \exp\!\left(-c(x-1/2)^2\right)
    +
    \eta(x),
    \label{eq:appendix:burgers-ic}
\end{equation}
where $\eta$ is drawn from the same spatially colored Fourier family as the
forcing, with perturbation strength $\sigma_0=0.015$.  The constant $c$
controls the width of the initial bump.

\paragraph{Training data generation} 
We solve \eqref{eq:appendix:burgers} using a method-of-lines discretization with second-order centered finite differences in space on a uniform grid of $N = 64$ points over the periodic domain $[0,1)$, combined with an Euler--Maruyama scheme in time with step size $\Delta t = 5\times 10^{-4}$
over the interval $[0, 4]$. Trajectories are subsampled to $T = 800$ uniformly spaced output times (output spacing $\Delta t_{\text{out}} = 5\times 10^{-3}$). Initial conditions are drawn according to \eqref{eq:appendix:burgers-ic} with bump width $c = 20$. We generate $4{,}096$ training trajectories and $4{,}096$ testing trajectories, with testing initial conditions drawn independently from the same distribution.

\paragraph{Physical diagnostics}
For Burgers, the main evaluation metrics are the energy and enstrophy of the
field,
\begin{align}
    E(t)
    &=
    \frac12
    \int_0^1
    |u(t,x)|^2\,\mathrm dx,
    \\
    Z(t)
    &=
    \frac12
    \int_0^1
    |\partial_x u(t,x)|^2\,\mathrm dx.
\end{align}
On a uniform grid, these quantities are computed by the corresponding
trapezoidal or periodic finite-difference approximations.  We compare the ensemble mean and ensemble standard deviation of $E(t)$ and $Z(t)$ over
generated and reference trajectories.

\paragraph{Noise-sensitivity experiment}
To test sensitivity to stochastic forcing, we repeat the Burgers experiment
for multiple values of the forcing amplitude $\sigma$.  This comparison is
used to verify that the benchmark contains genuine stochastic variability:
standard deterministic time-steppers trained with mean-squared error tend to
collapse toward conditional-mean dynamics as $\sigma$ increases, whereas FTM
is evaluated by its ability to maintain accurate ensemble statistics.

\subsection{Stochastically forced turbulence (Navier-Stokes)}
\label{appendix:Experiments:NavierStokes}

\paragraph{Dynamics and numerical solver}
We consider two-dimensional incompressible Navier--Stokes flow on the
periodic domain
\[
    \Omega = [0,2\pi]^2.
\]
The simulations are based on the spectral forced-turbulence example from
JAX-CFD~\cite{Kochkov2021-ML-CFD,Dresdner2022-Spectral-ML}, modified to include additive stochastic forcing in place of deterministic Kolmogorov forcing. We use the
vorticity formulation
\begin{equation}
    \mathrm d\omega
    +
    (u\cdot \nabla \omega)\,\mathrm dt
    =
    \left(\nu\,\Delta \omega - \alpha\,\omega\right)\mathrm dt
    +
    f_{\mathrm{sto}}(t,x)\,\mathrm dt,
    \qquad
    u = \nabla^\perp \Delta^{-1}\omega,
    \label{eq:appendix:ns-vorticity}
\end{equation}
with viscosity $\nu = 10^{-3}$ and linear drag coefficient $\alpha = 0.1$. 

\paragraph{Stochastic coarse-mode forcing}
The driving stochastic forcing acts on a finite set of low-frequency Fourier modes. In vorticity form, the stochastic forcing is
\begin{equation}
    f_{\mathrm{sto}}(t,x)\,\mathrm dt
    =
    \sigma
    \sum_{\kappa \in \mathcal K_{\mathrm{sto}}}w_\kappa
    \left(
        a_\kappa(x)\,\mathrm dB_\kappa^{(1)}(t)
        +
        b_\kappa(x)\,\mathrm dB_\kappa^{(2)}(t)
    \right),
    \label{eq:appendix:ns-stochastic-forcing}
\end{equation}
where $\mathcal K_{\mathrm{sto}}$ contains only coarse Fourier modes,
$a_\kappa$ and $b_\kappa$ are the corresponding real sine and cosine Fourier
basis functions, the Brownian motions are independent, and the spectral weights are $w_\kappa \propto |\kappa|^{-1/2}$. Thus the unresolved random forcing acts at large spatial scales, while the nonlinear Navier--Stokes dynamics transfer this variability across scales. In all our experiments, we set the stochastic forcing strength to be $\sigma = 0.3$.

\paragraph{Initial conditions}
Trajectories are initialized from filtered random velocity fields as in the
JAX-CFD forced-turbulence setup, rescaled so that the maximum velocity
magnitude is approximately $7$.  The combination of random initial
conditions and stochastic coarse-mode forcing yields an ensemble of turbulent rollouts.

\paragraph{Training data generation}
Equation~\eqref{eq:appendix:ns-vorticity} is integrated on a $256 \times 256$ uniform grid over $[0, 2\pi]^2$ using a pseudo-spectral method with $2/3$-rule dealiasing. Time integration is performed via an implicit--explicit scheme: the linear (viscous and drag) terms are treated implicitly with Crank--Nicolson, while advection and stochastic forcing are advanced explicitly with a five-stage low-storage Carpenter--Kennedy Runge--Kutta-4 scheme. We use an internal timestep $\Delta t_{\mathrm{sim}} = 10^{-3}$ and integrate up to $T_{\mathrm{end}} = 25$. Each trajectory is then subsampled in time to $T = 1000$ evenly-spaced snapshots and spectrally subsampled in space to a $64 \times 64$ output grid. We generate $2048$ training trajectories and a separate set of $1024$ test trajectories.

\paragraph{Physical diagnostics} Same as for the Burgers example; see \Cref{appendix:BurgersExample}. 

\subsection{Setup and details of numerical experiments}

\paragraph{Architectures for FTM}

\paragraph{Duffing oscillator.}
We use a 3-layer MLP with hidden dimension $128$ and ReLU activations. Time is provided by concatenation to the input. Training uses Adam with learning rate $5\times10^{-4}$ under a warmup--cosine-decay schedule, batch size $8192$, for $10^5$ steps.

\paragraph{Rayleigh--B\'enard.}
We use a 7-layer MLP with hidden dimension $128$ and ReLU activations. Time is provided by concatenation to the input. Training uses Adam with learning rate $5\times10^{-4}$ under a warmup--cosine-decay schedule, batch size $8192$, for $10^5$ steps.

\paragraph{Stochastic Burgers.}
We use a 1D U-Net with channel widths $[32, 64, 128]$, two residual blocks per scale, one middle residual block, and GeLU activations. Time is embedded via a 2-layer MLP of width $128$ and injected into each residual block via a learned linear projection: the projected vector is added to the feature map both before the first convolution and before the second convolution, broadcast across the spatial dimension. Training uses Adam with learning rate $1\times10^{-4}$ under a warmup--cosine-decay schedule, batch size $2048$, for $2\times10^5$ steps.

\paragraph{Stochastic Navier--Stokes.}
We use a 2D U-Net with channel widths $[128, 256, 512, 1024]$, two residual blocks per scale, one middle residual block, and GeLU activations, conditioned on three consecutive frames. Time is embedded via a 2-layer MLP of width $512$ and injected into each residual block via a learned linear projection: the projected vector is added to the feature map both before the first convolution and before the second convolution, broadcast across the spatial dimensions. Training uses Adam with learning rate $1\times10^{-4}$ under a warmup--cosine-decay schedule, batch size $128$, for $10^6$ steps.

\paragraph{Compute}
We train all models, including the other baselines, on compute nodes with H200 GPUs and 60GB of memory and 2TB of node main memory; and also use the same H200 GPUs for inference tasks. All runtime results are measured on H200 GPUs. Batch sizes are chosen so that they fit into the memory of the GPUs. 

\subsection{Additional description and details about experiments}

\paragraph{\Cref{fig:duffing_combined}} The cost reported in panel (c) is the runtime of one loss evaluation, which has been averaged over 100 runs. The shaded area shows the standard deviation over these 100 runs. The time-step size $h = 10^{-2}$ and fixed over all $\tau$, which is the setup used in the Duffing oscillator experiment.

\paragraph{\Cref{tab:duffing_lorenz}} The $W_2$ error for the Duffing and Rayleigh-Benard convection is computed for each time step separately and then these errors are averaged, which is the reported number. The number in brackets is the standard deviation of the error over all time steps, which indicates how robustly the time marginals are matched over time. For the trajectory-based QoI error, the QoI is computed (\Cref{appendix:DuffingDetails,appendix:RayleighDetails}) over all test trajectories. The reported number is the average relative error. The number in brackets is the standard deviation of the Monte Carlo estimator.  

\paragraph{\Cref{tab:maintextburgers_jaxcfd} and \Cref{tab:burgers_jaxcfd}} For computing the energy and enstrophy estimates, see \Cref{appendix:BurgersExample}. We report the relative error averaged over the whole time range. In brackets, we report the standard deviation of the errors over time. 

\paragraph{\Cref{fig:duffing_histogram}(right)} The right panel shows the error of energy and enstrophy of FTM for the Burgers example for varying noise level $\sigma$. 

\paragraph{\Cref{fig:jax_cfd_combined}} The spectrum $E(k)$ shown in the plots on the right is computed by taking the Fourier decomposition of the generated field at times $t = 15$ and $t = 25$ and then plotting the coefficients.

\subsection{Additional experiments} \Cref{tab:burgers_jaxcfd} shows more extensive results for the stochastic Burgers example and the stochastically forced turbulence example. \Cref{tab:meanflowext} shows extensive results for using a distilled auto-regressive conditional flow matching (AR-CFM) model, using mean-flows \cite{geng2025mean} for distillation. 

\Cref{fig:burgers_combined_appendix} shows the mean and standard deviation of energy and enstrophy of the ensemble trajectories compared to the ground truth for stochastic Burgers and stochastic turbulence examples.

\begin{figure}
\hspace*{-0.25cm}\begin{tabular}{cc}
\includegraphics[width=0.48\linewidth]{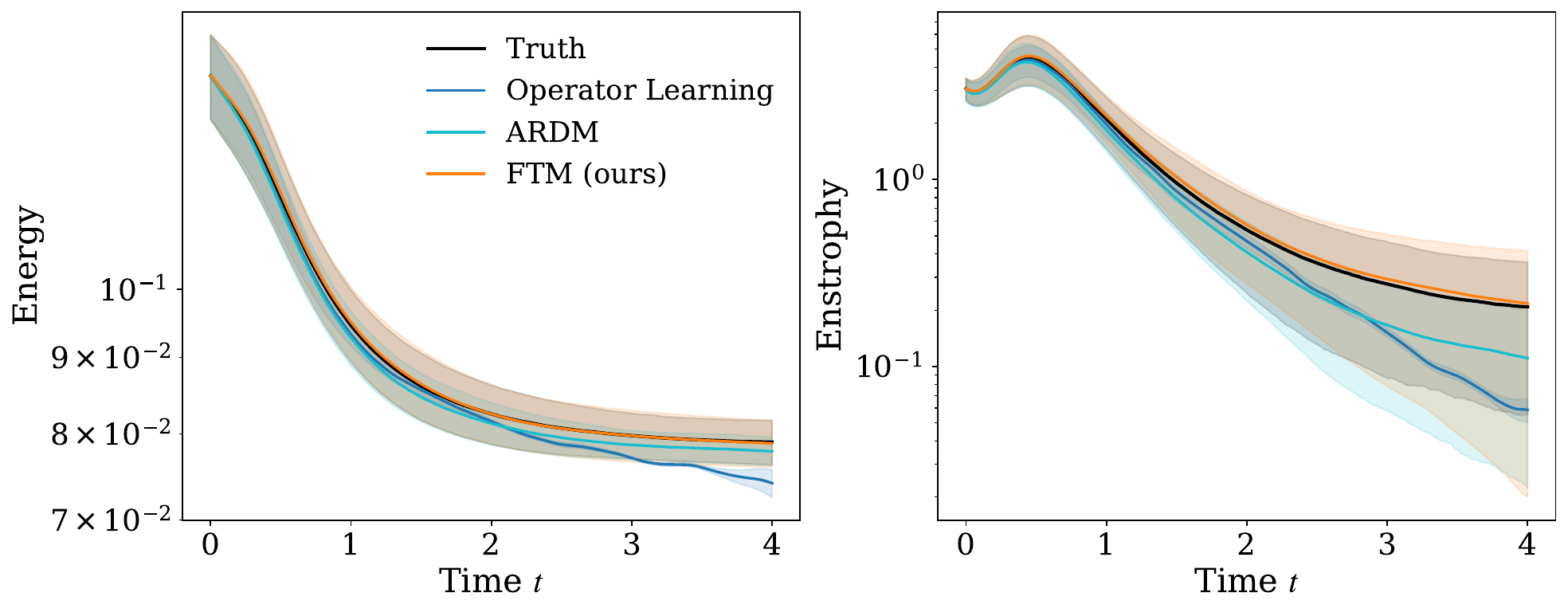} & \includegraphics[width=0.48\linewidth]{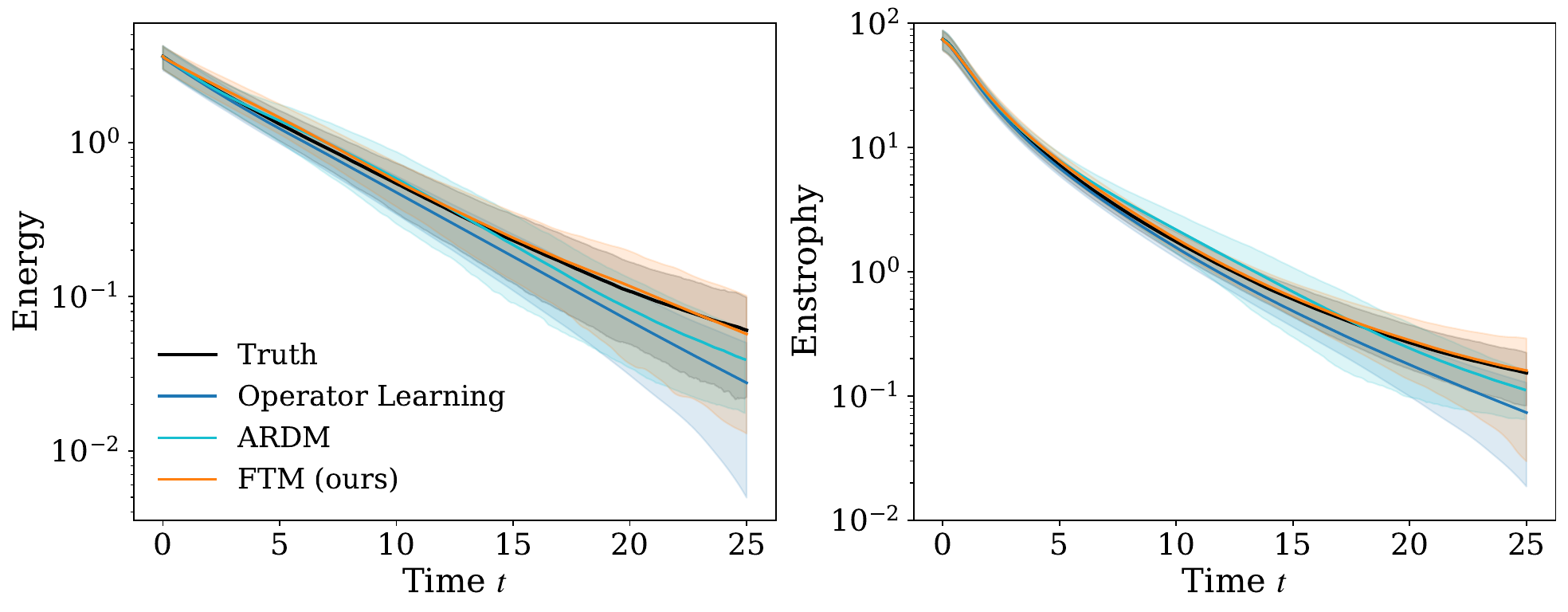}\\
\scriptsize (a) stochastic Burgers turbulence (Burgulence) & \scriptsize (b) stochastically forced turbulence (Navier-Stokes)
\end{tabular}
    \caption{FTM predicts ensembles with accurate energy and enstrophy statistics of stochastic Burgers (left panel) and stochastically forced turbulence (right panel) solution trajectories.} 
    \label{fig:burgers_combined_appendix}
\end{figure}

\begin{table}
\centering\small
\caption{Comparison of methods on the stochastic Burgers equation and stochastically forced turbulence (two-dim.\ Navier-Stokes).}
\label{tab:burgers_jaxcfd}
\begin{tabular}{lccc}
\toprule
\multicolumn{4}{c}{\textbf{Stochastic Burgers}} \\
\midrule
\textbf{Method} & error energy & error enstrophy & NFEs/time step \\
\midrule
Operator learning \cite{stachenfeld2021learned} 
& $2.21\mathrm{e}{-2}\;(\pm\,1.54\mathrm{e}{-2})$ 
& $2.55\mathrm{e}{-1}\;(\pm\,2.34\mathrm{e}{-1})$ 
& 1 \\

ARDM 50 steps\cite{KOHL2026108641} 
& $1.46\mathrm{e}{-2}\;(\pm\,3.39\mathrm{e}{-3})$ 
& $2.49\mathrm{e}{-1}\;(\pm\,1.43\mathrm{e}{-1})$ 
& 50 \\

ARDM 75 steps \cite{KOHL2026108641} 
& $1.36\mathrm{e}{-2}\;(\pm\,3.49\mathrm{e}{-3})$ 
& $2.30\mathrm{e}{-1}\;(\pm\,1.26\mathrm{e}{-3})$ 
& 75 \\

ARDM 100 steps \cite{KOHL2026108641} 
& $1.24\mathrm{e}{-2}\;(\pm\,3.12\mathrm{e}{-3})$  
& $2.11\mathrm{e}{-1}\;(\pm\,1.14\mathrm{e}{-1})$  
& 100 \\
CFM 20 steps \cite{albergo-vanden-eijnden:2022,lipman2023flow} & $2.71\mathrm{e}{-3}\;(\pm\,1.96\mathrm{e}{-3})$ 
& $1.53\mathrm{e}{-1}\;(\pm\,1.28\mathrm{e}{-1})$ 
& 20 \\

\textbf{FTM (ours)} 
& $\mathbf{1.95\mathrm{e}{-3}}\;(\pm\,1.93\mathrm{e}{-3})$ 
& $\mathbf{5.32\mathrm{e}{-2}}\;(\pm\,2.23\mathrm{e}{-2})$ 
& 1 \\

\midrule
\multicolumn{4}{c}{\textbf{stochastically forced turbulence (2D Navier-Stokes)}} \\
\midrule
\textbf{Method} & error energy & error enstrophy & NFEs/time step \\
\midrule
Operator learning \cite{stachenfeld2021learned} 
& 0.213 
& 0.192 
& 1 \\

ARDM 50 steps \cite{KOHL2026108641} 
& $0.314 \pm 0.1478$
& $0.282 \pm 0.0960$
& $50$ \\

ARDM 75 steps \cite{KOHL2026108641} 
& $0.275 \pm 0.1403$
& $0.171 \pm 0.0939$
& $75$ \\
ARDM 100 steps \cite{KOHL2026108641} 
& $0.116 \pm 0.1092$
& $0.134 \pm 0.0846$
& $100$ \\
CFM 20 steps
& $0.261 \pm 0.272$
& $0.271 \pm 0.326$
& 20 \\
MeanFlow 1 step
& $0.868 \pm 0.763$
& $1.230 \pm 1.289$
& $1$ \\
MeanFlow 2 steps
& $0.787 \pm 0.677$
& $0.665 \pm 0.654$
& $2$ \\
MeanFlow 4 steps
& $0.700 \pm 0.624$
& $0.588 \pm 0.622$
& $4$ \\
MeanFlow 8 steps
& $0.590 \pm 0.517$
& $0.485 \pm 0.524$
& $8$ \\
\textbf{FTM (ours)}
& $\mathbf{0.034 \pm 0.025}$
& $\mathbf{0.033 \pm 0.019}$
& 1 \\
\bottomrule
\end{tabular}
\end{table}

\section{Baselines}\label{appendix:Baselines}
\label{appendix:DirectEstimationOfV}

All baselines are trained on the respective same data sets and comparable architectures as FTM to ensure a fair comparison.

\paragraph{Neural time stepper}
We follow the standard learned-simulator baseline used in, e.g., \cite{otness2021an} and fit a deterministic one-step map $f_\theta : (t_k,x_k) \mapsto x_{k+1}$ 
by minimizing the regression loss
\[
    \mathcal L_{\mathrm{step}}(\theta)
    =
    \frac{1}{N K}
    \sum_{i=1}^{N}
    \sum_{k=0}^{K-1}
    \left\|
        f_\theta(t_k,X^{(i)}(t_k))
        -
        X^{(i)}(t_{k+1})
    \right\|_2^2 .
\]
At inference time, trajectories are generated autoregressively by setting $\hat x(t_{k+1}) = f_\theta(t_k,\hat x(t_k))$.  For stochastic data, this baseline estimates the conditional mean of the next state under squared-error training analogous to operator learning, and therefore serves as a deterministic time-stepper baseline.  

For each of the experiments, we use the same architecture and hyperparameter sets mentioned in \Cref{appendix:Experiments}.

\paragraph{SDE learning}
  We implement the neural Euler--Maruyama likelihood baseline of \cite{dridi2021learning} for the Duffing and Rayleigh--B\'enard      problems.                                                                                                                 
  Two MLPs are trained jointly: a drift network $b_\theta(x, t)$ and a diffusion network $\sigma_\theta(t)$, the latter     
  taking only time as input with a softplus output to enforce positivity.                                                   
  Given consecutive trajectory pairs $(X_k, X_{k+1})$ with step size $\Delta t$, the model minimizes the negative           
  log-likelihood under the Euler--Maruyama Gaussian transition $X_{k+1} \mid X_k \sim \mathcal{N}(X_k + b_\theta(X_k,
  t_k),\Delta t,; \sigma_\theta(t_k)^2,\Delta t\cdot I)$:                                                                   
  $$\mathcal{L} = \mathbb{E}\left[\log\left(\sigma_\theta^2,\Delta t\right) + \frac{|X_{k+1} - X_k - b_\theta,\Delta
  t|^2}{\sigma_\theta^2,\Delta t}\right].$$

After training, rollouts are generated by simulating the learned SDE with Euler--Maruyama.  This baseline therefore learns a full stochastic model, rather than only the probability current velocity.

The hyperparameters for the MLPs for each of the experiments are the same as mentioned in Appendix \Cref{appendix:Experiments}.

\paragraph{SDE matching} We use the codebase published by the authors of the original paper \cite{bartosh2025sdematching} on GitHub. The model is trained in the state space, which is tractable for the Duffing oscillator and Rayleigh--Bénard convection. For the Duffing oscillator $128$ hidden dimensions were used with $3$ layers, and for the Rayleigh--Bénard problem $128$ hidden dimensions with $7$ layers was used. The learning rate for both was $5 \times 10^{-4}$, and the latent dimension for each was set to be equal to the state dimension ($2$ for Duffing oscillator, $9$ for Rayleigh--Bénard convection). The Duffing oscillator was trained for $150,000$ training steps and the Rayleigh--Bénard convection was trained for $300,000$ training steps.

\paragraph{Direct (``plug-in'') estimation of $v$}
We estimate the drift and diffusion coefficient as in SDE learning (see paragraph above) and additionally estimate the score conditioned on time with \cite{ho2020denoising} from the samples of the time marginals. The setup for estimating the drift and diffusion coefficient is the same as in the SDE learning (see paragraph above). For learning the score, we use denoising score matching. Concretely, we train a time-conditioned denoising network with the $v$-parametrization on $T = 1000$ diffusion steps with a linear $\beta$-schedule. At inference we query at a small diffusion step $t = 3$ and recover the score. The score network uses the same MLP backbone as the drift and diffusion networks.

\paragraph{DICE} We directly used the code that comes with the paper \cite{blickhan-berman-stuart-etal:2025}. For the duffing oscillator, we use an MLP backbone with $128$ hidden dimensions and $5$ layers with swish activation functions. We use learning rate $5 \times 10^{-4}$ with a cosine scheduler, and batchsize $256 \times 256$. For the Rayleigh--B\'enard system, we use the same hyperparameters and architecture in the DICE paper.

\paragraph{Autoregressive conditional diffusion model (ARDM)} For the Burgers equation example, we implement an ARDM baseline in the same spirit as \cite{KOHL2026108641}. The model learns the conditional transition law $X(t_{k + 1}) | X(t_k)$ rather than a deterministic one-step map.  Concretely, we train a conditional score network with the denoising score-matching setup.  At inference time, a next state is generated by initializing the sampler from the reference distribution and numerically integrating the learned reverse-time sampling dynamics conditioned on the current state $x_k$.  Repeating this procedure autoregressively produces a trajectory. For the two-dimensional Navier--Stokes experiments in \Cref{appendix:Experiments:NavierStokes}, we directly use the codebase released with \cite{KOHL2026108641}.  

For the Burgers implementation, we used a 1D UNet with feature depths $[64, 128, 256]$, four residual blocks per scale, one middle residual block, and GeLU activations. The noise schedule is the linear $\beta$-schedule with $T = 1000$ levels. The loss is the same as the denoising score matching via the v-parametrization. The optimizer used is Adam with learning rate $1 \times 10^{-4}$, with a warmup and then cosine decay. Training is done for $650,000$ gradient steps. At inference, DDIM sampling is used with $\eta = 1.0$, thus making it stochastic. 

For Navier Stokes, we use a 2D U-Net with ConvNext blocks and feature depths $[42, 64, 128, 256]$, two ConvNext residual blocks per scale, and GELU activations. Each ConvNext block applies a depthwise $7{\times}7$ convolution followed by two  $3{\times}3$ convolutions with GroupNorm and GELU. Linear attention is applied at each resolution level and full self-attention at the bottleneck. The network is conditioned on three consecutive vorticity frames (clean conditioning: conditioning frames are passed without noise during both training and inference); the noisy target frame is concatenated to the conditioning frames to form the 4-channel UNet input. Time is embedded via sinusoidal embeddings projected by a two-layer MLP (widths $[64, 256]$ with GELU), and injected into each ConvNext block via a learned linear projection added to the feature map after the depthwise convolution, broadcast across spatial dimensions. The forward diffusion uses a linear noise schedule with $T{=}100$ steps ($\beta_0{=}5{\times}10^{-4}$, $\beta_T{=}0.1$, scaled from the standard 1000-step parametrisation). Training uses Adam with learning rate $1{\times}10^{-4}$, no weight decay, batch size $64$, for $2,000,000$ training steps. Inference is done in the same way as in Burgers.

\begin{table}
\centering\small
\caption{Comparison of distillation with mean flow to FTM on the stochastic Burgers equation.}
\label{tab:meanflowext}
\begin{tabular}{lccc}
\toprule
\textbf{Method} & error energy & error enstrophy & NFEs/time step \\
\midrule
CFM 20 steps \cite{albergo-vanden-eijnden:2022,lipman2023flow} & $2.71\mathrm{e}{-3}\;(\pm\,1.96\mathrm{e}{-3})$ 
& $1.53\mathrm{e}{-1}\;(\pm\,1.28\mathrm{e}{-1})$ 
& 20 \\
MeanFlow 1 step\cite{geng2025mean} & $8.22\mathrm{e}{-1}\;(\pm\,2.88\mathrm{e}{-1})$ 
& $359\;(\pm\,278)$ 
& 1 \\
MeanFlow 2 steps \cite{geng2025mean} & $1.69\mathrm{e}{-1}\;(\pm\,5.77\mathrm{e}{-2})$ 
& $98.1\;(\pm\,80.5)$ 
& 2 \\
MeanFlow 5 steps \cite{geng2025mean} & $3.41\mathrm{e}{-2}\;(\pm\,2.22\mathrm{e}{-2})$ 
& $3.15\mathrm{e}{-1}\;(\pm\,1.81\mathrm{e}{-1})$ 
& 5 \\
MeanFlow 10 steps\cite{geng2025mean} & $2.06\mathrm{e}{-2}\;(\pm\,1.58\mathrm{e}{-2})$ 
& $3.15\mathrm{e}{-1}\;(\pm\,1.81\mathrm{e}{-1})$ 
& 10 \\
MeanFlow 20 steps \cite{geng2025mean} & $9.71\mathrm{e}{-3}\;(\pm\,2.91\mathrm{e}{-3})$ 
& $2.46\mathrm{e}{-1}\;(\pm\,2.57\mathrm{e}{-1})$ 
& 20 \\
\textbf{FTM (ours)} 
& $\mathbf{1.95\mathrm{e}{-3}}\;(\pm\,1.93\mathrm{e}{-3})$ 
& $\mathbf{5.32\mathrm{e}{-2}}\;(\pm\,2.23\mathrm{e}{-2})$ 
& 1 \\

\bottomrule
\end{tabular}
\end{table}

\paragraph{Autoregressive conditional flow matching and mean-flow model} Analogously to ARDM, we implement an autoregressive conditional generative baseline, but replace the reverse diffusion sampler by a conditional flow matching (CFM) ODE~\cite{albergo-vanden-eijnden:2022,lipman2023flow}. We also implemented a mean-flow-style distillation baseline \cite{geng2025mean,boffi2025how}, but we used a two-stage procedure. We first train the autoregressive CFM model and then distill its sampling-time flow into a model of the average velocity.

We train the CFM model to map the standard Gaussian to the target distribution, which is the conditional transition law. For each training pair $(x_t, x_{t+1})$ we draw $z \sim \mathcal{N}(0,I)$ and a flow time $\tau \sim \mathrm{Uniform}(0,1)$, construct the interpolant $x_\tau = \tau x_{t+1} + (1{-}\tau)z$, and regress the vector field onto the straight-line target $u_\tau = x_{t+1} - z$:
\begin{equation}
    \mathcal{L}(\theta)
    = \mathbb{E}_{\tau,\,z,\,(x_t,x_{t+1})}
      \bigl[\|v_\theta(x_\tau,\tau \mid x_{\mathrm{cond}}) - \operatorname{sg}(u_\tau)\|^2\bigr].
\end{equation}
For the Stochastic Burgers experiments, we use a 1D U-Net with channel widths $[96, 192, 384, 384]$, one residual block per encoder and decoder level, one bottleneck block, and SiLU activations; all convolutions apply circular padding to respect the periodic spatial domain, and skip connections are merged by concatenation. Time $\tau$ is embedded via a 128-dimensional sinusoidal encoding projected to dimension 512 through a  two-layer MLP (Linear $\to$ SiLU $\to$ Linear). Training uses Adam with gradient clipping (global norm $1.0$), a peak learning rate of $3 \times 10^{-4}$ under a cosine decay schedule with a 1000-step linear warm-up, batch size $256$, and runs for $500{,}000$ gradient steps. \\ \\
For Navier Stokes experiments, we use a 2D U-Net with channel widths $[128, 256, 512, 512]$ (doubling per level, capped at $4\times$ base), one residual block        
  (SiLU, $3{\times}3$ convolution) per encoder and decoder level, and skip connections merged by channel
  concatenation. Time $\tau$ is embedded via a 128-dimensional sinusoidal encoding
  projected to dimension 512 through a two-layer MLP (Linear $\to$ SiLU $\to$ Linear), and conditioning is done by simple channel concatenation of the prior 3 frames; model parameters are
  maintained with an exponential moving average of decay $0.9999$ and EMA weights are used at inference. Training uses Adam with
  gradient clipping (global norm $1.0$), a peak learning rate of $3 \times 10^{-4}$ under a cosine decay schedule with a 1000-step
  linear warm-up, batch size $256$, and runs for $1{,}000{,}000$ gradient steps.

We distill the trained CFM vector field $v_\theta$ into a mean-velocity
network $u_\psi(x_r, r, t \mid x_{\mathrm{cond}})$ following the
ODE-regression formulation of~\cite{geng2025mean}:
\begin{equation}
    u_\psi(x_r, r, t) \;\approx\; \frac{\hat{x}_t - x_r}{t - r},
\end{equation}
where $\hat{x}_t$ is the endpoint obtained by integrating the \emph{frozen}
oracle $v_\theta$ forward from $x_r$ over the interval $[r,t]$.
At inference a single network evaluation suffices:
$x_1 = z + u_\psi(z, 0, 1 \mid x_{\mathrm{cond}})$, but one can do multiple evaluations uniformly along the $[0,1]$ interval.
The MeanFlow loss is a regression task of the form:
\begin{equation}
    \mathcal{L}(\psi)
    = \mathbb{E}_{r \leq t,\;z,\;(x_t,x_{t+1})}
      \bigl[\|u_\psi(x_r, r, t \mid x_{\mathrm{cond}})
             - \operatorname{sg}(u_{\mathrm{tgt}})\|^2\bigr].
\end{equation}

The network $u_\psi$ shares the U-Net body of $v_\theta$ exactly.
The only architectural change is the time-conditioning module: both
interval endpoints $r$ and $t$ are embedded independently with the same
sinusoidal encoding of dimension 128, concatenated to a 256-dimensional
vector, and projected to 512 via a two-layer MLP. Training uses Adam with a peak learning rate of $10^{-4}$ under a cosine
decay schedule with a 10000-step warm-up, batch size 256, and runs for
1,000,000 gradient steps.

\begin{table}[h]
\centering
\caption{MeanFlow performance metrics across number of steps.}
\label{tab:meanflow}
\begin{tabular}{cccccc}
\hline
\textbf{Steps} & \textbf{Energy Err} & \textbf{Energy Err} & \textbf{Enstrophy Err} & \textbf{Enstrophy Err} & \textbf{Gen Time} \\
 & \textbf{(mean)} & \textbf{(std)} & \textbf{(mean)} & \textbf{(std)} & \textbf{(s)} \\
\hline
1 & 0.8687 & 0.7632 & 1.2308 & 1.2885 & 84.5 \\
2 & 0.7874 & 0.6770 & 0.6649 & 0.6539 & 143.8 \\
4 & 0.7003 & 0.6239 & 0.5884 & 0.6219 & 266.2 \\
8 & 0.5899 & 0.5171 & 0.4849 & 0.5235 & 514.4 \\
\hline
\end{tabular}
\end{table}

\end{document}